\providecommand{\lin}[1]{\ensuremath{\left\langle #1 \right\rangle}}
\providecommand{\norm}[1]{\left\lVert#1\right\rVert}
  \providecommand{\R}{\mathbb{R}} %
  \DeclareMathOperator{\E}{{\mathbb E}}
  \providecommand{\EE}[2]{{\mathbb E}_{#1}\left.#2\right. }  %
  \DeclareMathOperator*{\argmin}{arg\,min}
  \DeclareMathOperator*{\supp}{supp}
  \providecommand{\0}{\mathbf{0}}
  \providecommand{\1}{\mathbf{1}}
  \renewcommand{\aa}{\mathbf{a}}
  \providecommand{\bb}{\mathbf{b}}
  \providecommand{\ee}{\mathbf{e}}
  \renewcommand{\gg}{\mathbf{g}}
  \providecommand{\xx}{\mathbf{x}}
  \providecommand{\yy}{\mathbf{y}}
  \providecommand{\zz}{\mathbf{z}}
  \providecommand{\mA}{\mathbf{A}}
  \providecommand{\mI}{\mathbf{I}}
  \providecommand{\mZ}{\mathbf{Z}}
  \providecommand{\cD}{\mathcal{D}}
  \providecommand{\cG}{\mathcal{G}}
  \providecommand{\cN}{\mathcal{N}}
  \providecommand{\cO}{\mathcal{O}}
  \providecommand{\cW}{\mathcal{W}}
  \newcommand{\bxi}{\boldsymbol{\xi}}
\providecommand{\mycomment}[3]{\todo[caption={},size=footnotesize,color=#1!20]{\textbf{#2: }#3}}%
\providecommand{\inlinecomment}[3]{%
  {\color{#1}#2: #3}}%
\newcommand\commenter[2]%
\newcommand\csname i#1\endcsname[1]{\inlinecomment{#2}{#1}{##1}}
\newcommand\csname #1\endcsname[1]{\mycomment{#2}{#1}{##1}}
\newtheorem{proposition}{Proposition}
\newtheorem{lemma}{Lemma}
\newtheorem{definition}{Definition}
\newtheorem{remark}[lemma]{Remark}
\newtheorem{assumption}{Assumption}
\newtheorem{theorem}[lemma]{Theorem}
\icmltitlerunning{A Unified Theory of Decentralized SGD}
\begin{document}

\twocolumn[
\icmltitle{A Unified Theory of Decentralized SGD \\with Changing Topology and Local Updates}

\icmlsetsymbol{equal}{*}

\begin{icmlauthorlist}
\icmlauthor{Anastasia Koloskova}{equal,epfl}
\icmlauthor{Nicolas Loizou}{to}
\icmlauthor{Sadra Boreiri}{epfl}
\icmlauthor{Martin Jaggi}{epfl}
\icmlauthor{Sebastian U. Stich}{equal,epfl}
\end{icmlauthorlist}

\icmlaffiliation{epfl}{EPFL, Lausanne, Switzerland}
\icmlaffiliation{to}{Mila and DIRO, Universit\'{e} de Montr\'{e}al, Canada}

\icmlcorrespondingauthor{Anastasia Koloskova}{anastasia.koloskova@epfl.ch}
\icmlcorrespondingauthor{Sebastian U. Stich}{sebastian.stich@epfl.ch}

\icmlkeywords{Machine Learning, ICML}

\vskip 0.3in
]

\printAffiliationsAndNotice{\icmlEqualContribution} %

\newcommand{\compr}{\delta}%
\newcommand{\sgap}{\rho}%

\begin{abstract}
Decentralized stochastic optimization methods have gained a lot of attention recently, mainly because of their cheap per iteration cost, data locality, and their communication-efficiency.
In this paper we introduce a unified convergence analysis that covers a large variety of decentralized SGD methods which so far have required different intuitions, have different applications, and which have been developed separately in various communities. 
\\
Our algorithmic framework covers
local SGD updates and synchronous and pairwise gossip updates on adaptive network topology.
We derive universal convergence rates for smooth (convex and non-convex) problems and the rates interpolate between the heterogeneous (non-identically distributed data) and iid-data settings, recovering linear convergence rates in many special cases, for instance for over-parametrized models. %
Our proofs rely on weak assumptions (typically improving over prior work in several aspects) and recover (and improve) the best known complexity results for a host of important scenarios, such as for instance
coorperative SGD and 
federated averaging (local SGD).
\end{abstract}

\section{Introduction}
Training machine learning models in a non-centralized fashion can offer many advantages over traditional centralized approaches in core aspects such as data ownership, privacy, fault tolerance and scalability. 
In efforts to depart from the traditional parameter server paradigm \cite{dean2012large}, federated learning~\cite{konecny2016federated2,McMahan16:FedLearning,McMahan:2017fedAvg,Kairouz2019:federated} has emerged, but also fully decentralized approaches have been suggested recently---though yet still at a smaller scale than federated learning~\cite{Lian2017:decentralizedSGD,Assran:2018sdggradpush,KoloskovaLSJ19decentralized}.
However, the community has identified a host of challenges that come along with decentralized training:\ notably,  high communication cost~\cite{Tang2018:decentralized,Wang2019:matcha,Koloskova:2019choco}, a need for time-varying topologies~\cite{nedic2014distributed,Assran:2018sdggradpush} and data-heterogeneity~\cite{li2018fedprox,Kgoogle:cofefe,li2020feddane,Li2020:fedavg}.
It is imperative to have a good theoretical understanding of decentralized stochastic gradient descent (SGD) to predict the training performance of SGD in these scenarios and to assist the design of optimal decentralized training schemes for machine learning tasks.\looseness=-1 

In contrast to the centralized setting, where the convergence of SGD is well understood  \cite{Moulines2011:nonasymptotic,%
Rakhlin2012:bound_for_a_0,Dekel2012:minibatch}, the analyses of SGD in non-centralized settings are often application specific and  have  been historically developed separately in different communities, besides some recent efforts towards a unified theory. Notably, \citet{Wang2018:cooperativeSGD} propose a framework for decentralized optimization with non-heterogeneous data and \citet{Li2019:decentralized} study decentralized SGD for non-convex heterogeneous settings. We here propose a significantly extended framework that covers these previously proposed ones as special cases.

We provide tight convergence rates for a large family of decentralized SGD variants. Proving convergence rates in a unified framework is much more powerful than studying individual special cases on their own:\ We are not only able to recover many existing analyses and results, we can also often show improved rates  under more general setting. Remarkably, for instance for local SGD~\cite{zinkevich2010parallelized,Stich2018:LocalSGD,patel2019communication}
we show improved rates for the convex and strongly-convex case and recover the best known rates for the non-convex case under weaker assumptions than assumed in prior work (highlighted in Table~\ref{tab:localsgd}).

\subsection{Contributions}
\begin{itemize}%
 \item We present a unified framework for gossip based decentralized SGD methods that captures local updates and time-varying, randomly sampled, mixing distributions. Our framework covers a rich class of methods that previously needed individual  convergence analyses.
 \item Our theoretical results rely on weak assumptions that measure the strength of the noise and the dissimilarity of the functions between workers and a novel assumption on the expected mixing rate of the gossip algorithm. 
 This provides us with great flexibility on how to select the topology of the network and the mixing weights.
 \item We demonstrate the effectiveness and tightness of our results by exemplary showing that our framework gives the best convergence rates for local SGD for both, heterogeneous and iid.\ data settings, improving over all previous analyses on convex functions.
 \item We provide a lower bound that confirms that our convergence rates are tight on strongly convex functions.
 \item We empirically verify the tightness of our theoretical results on strongly convex functions and explain the impact of noise and data diversity on the convergence.
\end{itemize}

\section{Related Work}

The study of decentralized optimization algorithms can be tracked back at least to~\cite{Tsitsiklis1985:gossip}.
For the problem of computing aggregates (finding consensus) among clients, various gossip-based protocols have been proposed. For instance the push-sum algorithm~\cite{Kempe2003:gossip}, based on the intuition of mixing in Markov chains and allowing for asymmetric communication, or the symmetric randomized gossip protocol for averaging over arbirary graphs~\cite{Xiao2014:averaging,Boyd2006:randgossip} that we follow closely in this work.
For general optimization problems, the most common algorithms are either combinations of standard gradient based methods with gossip-type averaging step~\cite{Nedic2009:distributedsubgrad,Johansson2010:distributedsubgrad}, or 
specifically designed methods relying on problem structure, such as 
alternating direction method of multipliers (ADMM)~\cite{Wei2012:distributedadmm,Iutzeler2013:randomizedadmm}, dual averaging~\cite{Duchi2012:distributeddualaveragig,Nedic2015:dualavg,Rabbat2015:mirrordescent}, primal-dual methods~\cite{Alghunaim2019:pd}, or block-coordinate methods for generalized linear models~\cite{cola2018nips}.
There is a rich literature in the control community that discusses various special cases---motivated by particular applications---such as for instance asynchronity~\cite{Boyd2006:randgossip} or time-varying graphs~\cite{nedic2014distributed,Nedic2016:push}, see also \cite{Nedic2018:toplogy} for an overview. \looseness=-1

For the deterministic (non-stochastic) descentralized optimization a recent line of work developed optimal algorithms based on acceleration~\cite{Jakovetic2014:fast,Scaman2017:optimal,Scaman2018:non-smooth,Uribe:2018uk,Fallah2019:distributed}. 
In the machine learning context, decentralized implementations of stochastic gradient descent have gained a lot of attention recently~\cite{Lian2017:decentralizedSGD,Tang2018:d2,Assran:2018sdggradpush,KoloskovaLSJ19decentralized}, especially for the particular (but not fully decentralized) case of a star-shaped network topology, the federated learning setting~\cite{konecny2016federated2,McMahan16:FedLearning,McMahan:2017fedAvg,Kairouz2019:federated}.
Rates for the stochastic optimization are derived in~\cite{Shamir2014:distributedSO,Rabbat2015:mirrordescent}, under the assumption that the distributions %
on all nodes are equal. However, this is a very strong assumption for practical problems. %

It has been noted quite early that decentralized gradient based methods in heterogenous data setting
suffer from a `client-drift', i.e.\ the diversity in the functions on each node leads to a drift on each client towards the minima of $f_i$---potentially far away from the global minima of $f$. This phenomena has been discussed (and sometimes been adressed by modifing the SGD updates) for example in~\cite{shi2015extra,lee2015distributed,Nedic2016:drift}
and been rediscovered frequently in the context of stochastic optimization \cite{zhao2018federated,Kgoogle:cofefe}.
It is important to note that in analyses based on the bounded gradient assumption---which was traditionally assumend for analyzing SGD~\cite{Lacoste2012:simpler,Rakhlin2012:bound_for_a_0}---the diversity in the data distribution on each worker sometimes can be hidden in this generous upper bound and the analyses cannot distinguish between iid.\ and non-iid.\ data cases, such as e.g. in~\cite{Koloskova:2019choco,Nadiradze2019:PopSGD, Li2020:fedavg}.
In this work, we use much weaker assumptions and we show how the convergence rate depends on the similarity between the functions (by providing matching lower and upper bounds). Our results show that
in overparametrized settings no drift effects occur and linear convergence can be achieved similar as to the centralized setting~
\cite{Schmidt2013:fastconvergence,Needell2016:sgd,Ma2018:interpolation}.

For reducing communication cost, various techniques have been proposed. In this work we do not consider gradient compression techniques~\cite{Alistarh2017:qsgd,Stich2018:sparsifiedSGD,%
Tang2018:decentralized,Tang2019:squeeze,StichK19delays}---but such orthogonal techniques could be added on top of our scheme---and instead only focus on local updates steps which are often efficient in practice but challenging to handle in the theoretical analysis~\cite{McMahan:2017fedAvg,Stich2018:LocalSGD,yu2019parallel,LinSPJ2018local}.

\section{Setup}

We study the distributed stochastic optimization problem
\begin{align}
 f^\star := \min_{\xx \in \R^d} \bigg[\  f(\xx):=\frac{1}{n} \sum_{i=1}^n f_i(\xx) \ \bigg] \label{eq:f}
\end{align} 
where the components $f_i \colon \R^d \to \R$ are distributed among~$n$ nodes and are given in stochastic form:
\begin{align}\label{eq:f_i}
 f_i(\xx) := \E_{\xi_i \sim \cD_i} F_i(\xx,\xi_i),
\end{align}
where $\cD_i$ denotes the distribution of $\xi_i$ over parameter space~$\Omega_i$ on node $i$.
Standard empirical risk minimization is an important special case of this problem, when each~$\cD_i$ presents a finite number $m_i$ of elements $\{\xi_i^1,\dots,\xi_i^{m_i}\}$. Then $f_i$ can be rewritten as $f_i(\xx)= \frac{1}{m_i}\sum_{j=1}^{m_i} F_i(\xx,\xi_i^j)$. In the special case of $m_i=1$, for each $i \in [n]$, we further recover the deterministic distributed optimization problem.

It is important to note that we do not make any assumptions on the distributions $\cD_i$. This means that we especially cover hard heterogeneous machine learning problems where data is only available locally to each worker $i \in [n]:=\{1, \dots, n\}$ and the \emph{local minima} $\xx_i^\star := \argmin_{\xx \in \R^d} f_i(\xx)$, can be far away from the global minimizer of~\eqref{eq:f}. This covers a host of practically relevant problems over decentralized training data, as in federated learning (motivated by privacy), or large datasets stored across datacenters or devices (motivated by scalability).
We will discuss several important examples in Section~\ref{sec:noise} below.

\subsection{Assumptions on the objective function $f$}
For all our theoretical results we assume that $f$ is smooth.

\let\origtheassumption\theassumption

\edef\oldassumption{\the\numexpr\value{assumption}+1}

\setcounter{assumption}{0}
\renewcommand{\theassumption}{\oldassumption\alph{assumption}}

\begin{assumption}[$L$-smoothness]\label{a:lsmooth}
Each function $F_i(\xx, \xi)\colon \R^d \times \Omega_i \to \R$, $i \in [n]$
is differentiable for each $\xi \in \supp(\cD_i)$ and there exists a constant $L \geq 0$ such that for each $\xx, \yy \in \R^d, \xi \in \supp(\cD_i)$:
\begin{align}
&\norm{\nabla F_i(\yy, \xi) - \nabla F_i(\xx,\xi) } \leq L \norm{\xx -\yy}\,. \label{eq:F-smooth}%
\end{align}
\end{assumption}
Sometimes it will be enough to just assume smoothness of $f_i$ instead.

\begin{assumption}[$L$-smoothness]\label{a:lsmooth_nc}
	Each function $f_i(\xx) \colon \R^d \to \R$, $i \in [n]$
	is differentiable and there exists a constant $L \geq 0$ such that for each $\xx, \yy \in \R^d$:
	\begin{align}\label{eq:smooth_nc}
	&\norm{\nabla f_i(\yy) - \nabla f_i(\xx) } \leq L \norm{\xx -\yy}\,. %
	\end{align}
\end{assumption}
\let\theassumption\origtheassumption
\setcounter{assumption}{1}
\begin{remark}
 Clearly, Assumption~\ref{a:lsmooth_nc} is more general than Assumption~\ref{a:lsmooth}. Moreover, for convex $F(\yy,\xi)$ Assumption~\ref{a:lsmooth} implies Assumption~\ref{a:lsmooth_nc}~\cite{Nesterov2004:book}.
\end{remark}

Assumption~\ref{a:lsmooth_nc} is quite common in the literature \citep[e.g.][]{Lian2017:decentralizedSGD,Wang2018:cooperativeSGD} but sometimes also the stronger Assumption~\ref{a:lsmooth} is assumed~\cite{Nguyen2018:async}. We here use this  version in the convex case only, to allow for a more general assumption on the noise instead (see Section~\ref{sec:noise} below).

For some of the derived results we need in addition convexity. Specifically, $\mu$-convexity for a parameter $\mu \geq 0$.

\begin{assumption}[$\mu$-convexity]
\label{a:strong}
Each function $f_i \colon \R^d \to \R$, $i \in [n]$ is $\mu$-(strongly) convex for constant $\mu \geq 0$. That is, for all $\xx,\yy \in \R^d$:
\begin{align}
 f_i(\xx)-f_i(\yy) + \frac{\mu}{2}\norm{\xx-\yy}^2_2 \leq \lin{\nabla f_i(\xx),\xx-\yy}\,. \label{eq:strongconv}
\end{align}
\end{assumption}

\subsection{Assumptions on the noise}\label{sec:noise}
We now formulate our conditions on the noise. 
For the convergence analysis of SGD on smooth convex functions it is typically enough to assume a bound on the noise at the optimum only~\cite{Needell2016:sgd,Bottou2018:book,gower2019sgd,Stich19sgd}. Similarly, to express the diversity of the functions $f_i$ in the convex case it is sufficient to measure it only at the optimal point $\xx^\star$ (such a point always exists for strongly convex functions).

\let\origtheassumption\theassumption

\edef\oldassumption{\the\numexpr\value{assumption}+1}

\setcounter{assumption}{0}
\renewcommand{\theassumption}{\oldassumption\alph{assumption}}

\begin{assumption}[Bounded noise at the optimum]\label{a:opt}
Let  $\xx^\star = \argmin f(\xx)$ and define
\begin{align}
\zeta_i^2 &:= \norm{\nabla f_i(\xx^\star)}^2_2\,, & & \textstyle \bar \zeta^2 := \frac{1}{n}\sum_{i=1}^n \zeta_i^2\,. \label{eq:grad_opt}
\end{align}
Further, define
	\begin{align}
	\sigma_i^2 := \EE{\xi_i}{\norm{\nabla F_i(\xx^\star, \xi_i) - \nabla f_i(\xx^\star)}}^2_2  \,, \label{eq:noise_opt}
	\end{align}	
	and similarly as above, $\bar \sigma^2 := \frac{1}{n}\sum_{i=1}^n \sigma_i^2$. We assume that $\bar \sigma^2$ and $\bar \zeta^2$ are bounded.
\end{assumption}
Here, $\bar \sigma^2$ measures the noise level, and $\bar \zeta^2$ the diversity of the functions $f_i$. If all functions are identical, $f_i=f_j$, for all $i, j$, then $\bar \zeta^2 = 0$. Many prior work in the context of stochastic decentralized optimization often assumed bounded diversity and bounded noise \emph{everywhere}  \citep[such as e.g.][]{Lian2017:decentralizedSGD,Tang2018:d2}, whereas we here only need to assume this bound locally at $\xx^\star$.

For the non-convex case---where a unique $\xx^\star$ does not necessarily exist---we generalize Assumption~\ref{a:opt} to:
\begin{assumption}[Bounded noise]\label{a:opt_nc}
	We assume that there exists constants $P$, $\hat \zeta$ such that $\forall \xx \in \R^d$,
	\begin{align} \textstyle
	\frac{1}{n} \sum_{i = 1}^n \norm{\nabla f_i(\xx)}_2^2 \leq \hat \zeta^2 + P \norm{\nabla f(\xx)}^2_2 \,, \label{eq:grad_opt_nc}
	\end{align}
	and constants  $M$, $ \hat \sigma $ such that $\forall \xx_1, \dots \xx_n \in \R^d$
	\begin{align} \textstyle
	\Psi \leq \hat \sigma^2 +  \frac{M}{n} \sum_{i = 1}^n \norm{\nabla f_i(\xx_i)}^2_2 \,, \label{eq:noise_opt_nc}
	\end{align}
	where $\Psi := \frac{1}{n} \sum_{i = 1}^n \EE{\xi_i}{\norm{\nabla F_i(\xx_i, \xi_i) - \nabla f_i(\xx_i)}}^2_2$.
\end{assumption}
\let\theassumption\origtheassumption
\setcounter{assumption}{3}
We see that Assumption~\ref{a:opt} is weaker than Assumption~\ref{a:opt_nc} as it only needs ho hold for $\xx_i = \xx^\star$. Further, it is important to note that we do not assume a uniform bound on the variance \citep[as many prior work, such as][]{Li2019:decentralized,Tang2018:d2,Lian2017:decentralizedSGD,Assran:2018sdggradpush} but instead allow the bound on the noise and the diversity to grow with the gradient norm (similar assumptions are common in the convex setting~\cite{Bottou2018:book}).

\textbf{Discussion.} We now show that the Assumption~\ref{a:opt_nc} is weaker than
assuming a uniform upper bound on the noise. %
The uniform variance bound is given as
\begin{align*}
\E{\norm{\nabla F_i(\xx, \xi_i) - \nabla f_i(\xx)}}_2^2 &\leq \sigma_{\rm unif}^2\,, & &\forall\xx \in \R^d\,,
\end{align*}
similarly for the similarity of functions between nodes
\begin{align*}\textstyle
\frac{1}{n}\sum_{i = 1}^n {\norm{\nabla f_i(\xx) - \nabla f(\xx)}}_2^2 &\leq \bar \zeta_{\rm unif}^2\,, & & \forall \xx \in \R^d\,.
\end{align*}
By recalling the equality $\frac{1}{n}\sum_{i = 1}^n {\norm{\aa_i  - \bar \aa }}_2^2 = \frac{1}{n}\sum_{i = 1}^n \norm{\aa_i}_2^2 - \norm{\bar \aa}_2^2$ for $\aa_i \in \R^d, \bar \aa = \frac{1}{n}\sum_{i = 1}^n \aa_i$, it is easy to check that these two bounds imply Assumption~\ref{a:opt_nc} with $P = 1$, $M = 0$, $\hat \sigma^2 = \sigma_{\rm unif}^2$ and $\hat \zeta^2 = \bar \zeta_{\rm unif}^2$. Thus, our assumptions are weaker and $\hat \zeta^2$ and $\hat \sigma^2$ can be much smaller than $\bar \zeta_{\rm unif}^2, \sigma_{\rm unif}^2$ in general.

A second common assumption is to assume that the (stochastic) gradients are uniformly bounded  \citep[e.g.][]{Koloskova:2019choco,Li2020:fedavg}, that is
\begin{align*}
\E{\norm{\nabla F_i(\xx, \xi_i)}}_2^2 \leq G^2\,,
\end{align*}
for a constant $G$. Under the bounded gradient assumption, Assumption~\ref{a:opt_nc} is clearly satisfied, as all terms on the left hand side of~\eqref{eq:grad_opt_nc} and~\eqref{eq:noise_opt_nc} can be upper bounded by $2G^2$.

\subsection{Notation}

We use the notation $\xx_i^{(t)}$ to denote the iterates on node $i$ at time step $t$. We further define the average
\begin{align}
 \bar \xx^{(t)} := \textstyle \frac{1}{n}\sum_{i=1}^n \xx_i^{(t)}\,.
\end{align}

We use both vector and matrix notation whenever it is more convenient, and define
\begin{align}
X^{(t)} := \left[ \xx_1^{(t)},\dots, \xx_n^{(t)}\right] \in \R^{d\times n}\label{eq:notation_sgd}
\end{align}
and likewise define $\bar X^{(t)} := \left[ \bar \xx^{(t)},\dots, \bar \xx^{(t)}\right] \equiv X^{(t)}\frac{1}{n} \1\1^\top$.

\section{Decentralized (Gossip) SGD}
We now present the generalized decentralized SGD framework.
Similar to existing works \cite{Lian2017:decentralizedSGD,Wang2018:cooperativeSGD, Li2019:decentralized} our proposed method allows only decentralized communications. That is, the exchange of information (through \emph{gossip} averaging) can only occur between connected
nodes (neighbors). 
The algorithm (outlined in Algorithm~\ref{alg:random_W_decentr_sgd_matrix}) consists of two phases: 
(i) stochastic gradient updates, performed locally on each worker (lines 4--5), followed by a 
(ii) consensus operation, where nodes average their values with their neighbors (line 6).

The gossip averaging protocol can be compactly written in matrix notation, with $\cN_i^{(t)}:=\{j \colon w_{ij}^{(t)}>0\}$ denoting the neighbors of node $i$ at iteration $t$:
\begin{align*}
 X^{(t+1)} &= X^{(t)}W^{(t)}  & &\!\! \Leftrightarrow \!\! &  &  & \xx_i^{(t+1)} &= \textstyle \sum_{j \in \cN_i^{(t)}} w_{ij}^{(t)} \xx_j^{(t)}\,,
\end{align*}
where
the mixing matrix $W^{(t)} \in [0,1]^{n \times n}$  encodes the network structure at time $t$ and the averaging weights (nodes $i$ and $j$ are connected if $w^{(t)}_{ij} > 0$).

Our scheme shows great flexibility as the mixing matrices can change over iterations and moreover can be selected from (changing) distributions. %

\begin{definition}[Mixing matrix]\label{def:valid_mixing}
A symmetric  ($W\!=\!W^\top$) doubly stochastic ($W\1 \!=\! \1$, $\1^\top W\! = \!\1^\top\!$) matrix $W \!\in\! [0,1]^{n \times n}$.
\end{definition}

\subsection{Algorithm}

\begin{algorithm}[H]
	\caption{\textsc{decentralized SGD}}\label{alg:random_W_decentr_sgd_matrix}
	\resizebox{\linewidth}{!}{
	\begin{minipage}{1.05\linewidth}
	\begin{algorithmic}[1]
		\INPUT{for each node $i\in [n]$ initialize $\xx_i^{(0)} \in \R^d$, 
		\\ stepsizes $\{\eta_t\}_{t=0}^{T-1}$, number of iterations $T$, 
		\\ mixing matrix distributions $\cW^{(t)}$ for $t \in [0, T]$}

		\FOR{$t$\textbf{ in} $0\dots T$}
			\STATE Sample $W^{(t)} \sim \cW^{(t)}$ 
		\STATE {\it In parallel (task for worker $i, i \in [n]$)}
		\STATE Sample $\xi_i^{(t)}$, compute $\gg_i^{(t)} := \nabla F_i(\xx_i^{(t)}\!, \xi_i^{(t)})$
		\STATE $\xx_i^{(t + \frac{1}{2})} = \xx_i^{(t)} - \eta_t \gg_i^{(t)}$  \hfill $\triangleright$ stochastic gradient updates
		\STATE $\xx_i^{(t + 1)} := \sum_{j \in \cN_i^{t}} w_{ij}^{(t)} \xx_j^{(t + \frac{1}{2})}$  \hfill  $\triangleright$ gossip averaging
		\ENDFOR
	\end{algorithmic}
	\end{minipage}
	}
\end{algorithm}
\vspace{-2mm}
In each iteration in Algorithm~\ref{alg:random_W_decentr_sgd_matrix} a new mixing matrix $W^{(t)}$ is sampled from a possibly time-varying distribution $\cW^{(t)}$,
 $t \in \{0, \dots, T\}$ (we will show below that also degenerate mixing matrices, for instance $W^{(t)}=\mI_n$ which implies \emph{no communication} in round $t$, are possible choices). %
We will discuss several important instances below, but first we now state our assumption on the quality of the mixing matrices. 
This assumption is novel in the literature to the best of our knowledge and a natural generalization of earlier versions.

\subsection{New assumption on mixing matrices}
\label{sec:p}

We recall that for randomized gossip averaging with a randomly sampled mixing matrix $W \sim \cW$ it holds
\begin{align}
\E_{W }\norm{X W - \bar X}_F^2 \leq (1 - p) \norm{X-\bar X}_F^2 \,,\label{eq:boyd}
\end{align}
for a value $p\geq 0$ (related to the spectrum of $\E W^\top W$), 
that is, the averaging step brings the values in the columns of $X \in \R^{d \times n}$ closer to their row-wise average $\bar X  :=  X \cdot \frac{1}{n}\1\1^\top$ in expectation~\citep[see e.g.][]{Boyd2006:randgossip}. %

In our analysis it will be enough to \emph{assume} that a property similar to~\eqref{eq:boyd} holds for the \emph{composition} of mixing matrixes, and does not necessarily hold for every single step. %

\begin{assumption}[Expected Consensus Rate]\label{a:avg_distrib}
We assume that there exists two constants $p \in (0,1]$ and integer $\tau \geq 1$ such that for all matrices $X\in \R^{d \times n}$ and all integers $\ell \in \{0,\dots,T / \tau\}$,
\begin{align}
	\EE{W}{\norm{X W_{\ell, \tau} - \bar X}_F^2} &\leq (1 - p) \norm{X-\bar X}_F^2 \,, \label{eq:p}
	\end{align}
	where $W_{\ell,\tau} = W^{((\ell + 1)\tau - 1)}\cdots W^{(\ell \tau)}$ and $\bar{X} := X \frac{\1\1^\top}{n}$ and $\mathbb E$ is taken over the distributions $W^{(t)} \sim \cW^{(t)}$ and indices $ t \in \{\ell \tau , \dots, (\ell + 1)\tau - 1\}$. 
\end{assumption}

It is crucial to observe that this assumption does not require every %
realization
$W$ to satisfy a decrease property as for the standard analysis, it is enough if it holds over the concatenation of $\tau$ mixing steps.
This assumption differs from the connectivity assumptions sometimes used in the control community. For example \citet{nedic2014distributed} require strong connectivity of the graph after every $\tau$ steps, whereas we here do not require this (for example, even sampling one single random edge leads to a positive decrease in expectation, whereas to ensure connectivity one would need to perform $\Omega(n)$ pairwise communications). This means that our bounds are typically much tighter that bounds derived on the strong connectivity assumption. However, as we require $W$ to be symmetric, our setting is less general than the one considered in~\cite{nedic2017achieving,%
xi2017dextra,saadatniaki2018optimization,%
Assran2018:async,scutari2019distributed,Assran:2018sdggradpush}. \looseness=-1

Commonly used weights are for instance the Metropolis-Hastings weights $w_{ij}=w_{ji} = \min\bigl\{\frac{1}{\deg(i)+1},\frac{1}{\deg(j)+1}\bigr\}$ for $(i,j) \in E$, see also \cite{Xiao2014:averaging,Boyd2006:randgossip} for further guidelines. With these weights, the values of $p$ for commonly used graphs are $p=1$ for the complete graph, $p=\Theta\bigl(\frac{1}{n}\bigr)$ for 2-$d$ torus on $n$ nodes, and $p=\Theta\bigl(\frac{1}{n^2}\bigr)$ for a cycle on $n$ nodes. Intuitively, $p^{-1/2}$ correlates with the diameter of the graph and is related to the mixing time of Markov chains. A commonly studied randomized scheme is the pairwise random gossip algorithm~\cite{Boyd2006:randgossip,loizou2019revisiting}, where one edge at a  time is sampled  from an underlying graph $\cG = ([n], E)$, i.e.\  the a random mixing matrix $\mZ_{i,j} := \mI_n- \frac{1}{2}(\ee_i-\ee_j)(\ee_i-\ee_j)^\top$, for all edges in the graph $(i,j) \in E$, where $\ee_i \in \R^n$ is the $i^{th}$ coordinate vector. In this case  $p=\rho(\cG)/|E|$, where $\rho(\cG)$ denotes the algebraic connectivity of the network \cite{fiedler1973algebraic,Boyd2006:randgossip,LoizouRichtarik}. For example, with the complete graph as base graph, pairwise gossip attains $p=\Theta\bigl(\frac{1}{n^2}\bigr)$, i.e.\ enjoys equally fast mixing as averaging over a (fixed) cycle (which requires $n$ pairwise communications per round). 

\section{Examples Covered in the Framework}
\label{sec:examples}
Our framework is very general and covers many special cases previously introduced in the literature. %

\newcommand{\highlight}{\bfseries$\bullet$~}

\subsection{Fixed Sampling Distribution ($\tau = 1$, $\cW^{(t)} \equiv \cW$)}
The simplest instances of Algorithm~\ref{alg:random_W_decentr_sgd_matrix} arise when the mixing matrix $W$  is kept constant over the iterations. By choosing the fully connected matrix $W = \frac{1}{n}\1\1^\top$ we recover {\highlight centralized mini-batch SGD} \cite{Dekel2012:minibatch} and by choosing an arbitrary connected $W$, we recover {\highlight decentralized SGD} \cite{Lian2017:decentralizedSGD}.

To reduce communication overheads, it has been proposed to choose sparse (not necessarily connected) subgraphs of the network topology. For instance in {\highlight MATCHA} \cite{Wang2019:matcha} it is proposed to sample edges from a matching decomposition of  the underlying network topology, therefore allowing for pairwise communications between nodes. Whilst no explicit values of $p$ were given for this approach,
 for the simpler instance of {\highlight pairwise randomized gossip}~\cite{Boyd2006:randgossip,ram2010asynchronous,%
lee2015asynchronous,loizou2019revisiting} we have $p=\Theta\bigl(\frac{1}{n}\bigr)$, thus by sampling a linear number of (independent) edges---not necessarily a matching---we approximately have $p=\tilde\Theta\big(1\bigr)$ for this {\highlight repeated pairwise randomized gossip} variant.
This approach can be generalized to {\highlight randomized subgraph gossip}, where a subgraph of the base topology is selected for averaging. A special case of this is {\highlight clique gossip} \cite{liu2019clique},
or an alternative variant is
to {\highlight sample from a fixed set of communication topologies} (known to all decentralized) workers.

One noteably instance of this type is {\highlight loopless local decentralized SGD} where the mixing matrix is (a fixed) $W$ with probability  $\frac{1}{\tau}$, and $\mI_n$ with probability $1-\frac{1}{\tau}$, for a parameter $\tau \geq 1$. This algorithm mimicks the behavior of the local SGD (see subsection below), commonly analyzed for $W=\frac{1}{n}\1 \1^\top$ only, but the loopless variant is much easier to analyze (with $p$ decreased by a factor of $\tau$, but no need to consider local steps explicitly in the analysis.).

\subsection{Periodic Sampling ($\tau > 1$, $\cW^{(t)} \equiv \cW^{(t+\tau)} $)}
Our analysis covers the empirical (finite-sample) versions of the aforementioned algorithms, for instance  {\highlight alternating decentralized SGD} that sweeps through $\tau$ fixed mixing matrices. A special algorithm of this type is {\highlight local SGD}~\cite{Coppola2015:IPM,Zhou2018:local,Stich2018:LocalSGD} where averaging on the complete graph is performed every $\tau$ iterations and only local steps are performed otherwise (mixing matrix $\mI_n$ for $\tau-1$ steps). 

Our analysis covers also natural extensions such as {\highlight decentralized local SGD} where mixing is performed with an arbitrary matrix $W$, and {\highlight random decentralized local SGD} where the mixing matrix is sampled from a distribution. More generally, our framework also  allows to combine local steps with all of the examples described in the previous section.

\subsection{Non-Periodic Sampling}
It is not necessary to have a periodic structure, 
it is sufficient that the composition of every $\tau$ consecutive mixing matrixes satisfies Assumption~\ref{a:avg_distrib}. For instance as in {\highlight distriributed SGD over time-varying graphs} \cite{nedic2014distributed}.

\subsection{Other Frameworks}
\label{sec:comparisontoframework}
In contrast to many prior works, we here allow the topology and the averaging weights to change between iterations.
Our framework covers
{\highlight Cooperative SGD}~\cite{Wang2018:cooperativeSGD} which considers only the IID data case ($f_i = f_j$) with local updates and a fixed mixing matrix $W$, and the recently proposed {\highlight periodic decentralized SGD} \citep{Li2019:decentralized} that allows for multiple local update and multiple mixing steps (for fixed $W$) in a periodic manner. None of these work considered sampling of the mixing matrix and do only provide rates for non-convex functions.

\section{Convergence Result}
In this section we present the convergence results for decentralized SGD variants that fit the template of Algorithm~\ref{alg:random_W_decentr_sgd_matrix}.

\subsection{Complexity Estimates (Upper Bounds)}
\begin{theorem}\label{thm:summary}
For schemes as in Algorithm~\ref{alg:random_W_decentr_sgd_matrix} with mixing matrices such as in Assumption~\ref{a:avg_distrib} and any target accuracy $\epsilon > 0$ there exists a (constant) stepsize (potentially depending on $\epsilon$) such that the accuracy can be reached after at most the following number of iterations $T$:\\
\textbf{Non-Convex:} Under Assumption~\ref{a:lsmooth_nc} and~\ref{a:opt_nc}, it holds $\frac{1}{T + 1}\sum_{t = 0}^T \E \norm{\nabla f(\bar{\xx}^{(t)})}_2^2 \leq \epsilon$ after \par
\resizebox{\linewidth}{!}{
\vbox{
\begin{align*}
 \cO \left( \frac{ \hat \sigma^2}{n \epsilon^2}  +  \frac{ \hat \zeta  \tau \sqrt{M+1} + \hat \sigma \sqrt{p\tau}   }{ p \epsilon^{3/2}}   + \frac{\tau  \sqrt{(P + 1)(M+1)}}{p \epsilon}  \right) \cdot L F_0
\end{align*}
}}
iterations. If we in addition assume convexity,\\
\textbf{Convex:} Under Assumption~\ref{a:lsmooth}, \ref{a:opt} and~\ref{a:strong} for $\mu \geq 0$, the error  $ \frac{1}{(T+1)} \sum_{t=0}^T  (\E{f(\bar\xx^{(t)})} - f^\star) \leq \epsilon$ after
\begin{align*}
   \cO \left( \frac{\bar \sigma^2 }{n \epsilon^2} + \frac{\sqrt{L}( \bar \zeta  \tau + \bar \sigma \sqrt{p\tau} )  }{ p \epsilon^{3/2}}   + \frac{L \tau}{p \epsilon} \right) \cdot R_0^2
\end{align*}
iterations, and  if $\mu > 0$, \\
\textbf{Strongly-Convex:}
then $ \sum_{t= 0}^T \frac{w_t}{W_T} (\E{f(\bar\xx^{(t)})} - f^\star) + \mu \E {\| \bar{\xx}^{(T+1)} - \xx^\star} \|^2 \leq \epsilon$ for\footnote{$\tilde{\cO}/\tilde{\Omega}$-notation hides constants and polylogarithmic factors.}
 \begin{align*}
  \tilde \cO \left( \frac{\bar \sigma^2}{ \mu  n \epsilon} + \frac{\sqrt{L}( \bar \zeta  \tau + \bar \sigma \sqrt{p\tau} )  }{\mu p  \sqrt{\epsilon}}   + \frac{L  \tau }{\mu p} \log \frac{1}{\epsilon} \right)
 \end{align*}
 iterations, for positive weights $w_t$ and $F_0 := f(\xx_0)- f^\star$ and $R_0=\norm{\xx_0 -\xx^\star}$ denote the initial errors.
\end{theorem}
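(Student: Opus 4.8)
The plan is to decouple the evolution of the \emph{averaged} iterate $\bar\xx^{(t)}$ from the \emph{consensus error} $\Xi_t := \frac1n\norm{X^{(t)}-\bar X^{(t)}}_F^2$, derive a recursion for each, and then stitch them together into a single Lyapunov recursion whose stepsize I tune at the end. The starting observation is that double stochasticity of $W^{(t)}$ gives $W^{(t)}\frac1n\1\1^\top = \frac1n\1\1^\top$, so the gossip step leaves the average invariant and $\bar\xx^{(t+1)} = \bar\xx^{(t)} - \frac{\eta_t}{n}\sum_{i}\gg_i^{(t)}$. Hence $\bar\xx^{(t)}$ follows an \emph{inexact} SGD whose gradients are evaluated at the scattered points $\xx_i^{(t)}$ rather than at $\bar\xx^{(t)}$, and the discrepancy is exactly what $\Xi_t$ measures.

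First (descent on the average) I would expand $\norm{\bar\xx^{(t+1)}-\xx^\star}^2$, use $\mu$-convexity of $f$ (inherited from the $f_i$) together with $L$-smoothness to relate $\nabla f_i(\xx_i^{(t)})$ to $\nabla f_i(\bar\xx^{(t)})$, and take expectation over the noise; cross-node independence of the $\xi_i$ yields the crucial $1/n$ variance reduction via the bounded-noise assumption. This produces a one-step inequality of the form
\[
\E\norm{\bar\xx^{(t+1)}-\xx^\star}^2 \le (1-\mu\eta_t)\,\E\norm{\bar\xx^{(t)}-\xx^\star}^2 - c_1\eta_t\,\E\bigl(f(\bar\xx^{(t)})-f^\star\bigr) + \frac{\eta_t^2\bar\sigma^2}{n} + \eta_t L\,\E\,\Xi_t,
\]
for a constant $c_1>0$ once $\eta_t$ is small enough. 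The convex case ($\mu=0$) and the non-convex case are handled by the analogous smoothness-based descent directly on $f(\bar\xx^{(t)})$, using the diversity and noise bounds \eqref{eq:grad_opt_nc}--\eqref{eq:noise_opt_nc}.

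Second (consensus error), Assumption~\ref{a:avg_distrib} is essential. Writing $G^{(t)}:=[\gg_1^{(t)},\dots,\gg_n^{(t)}]$ and unrolling $X^{(t+1)}-\bar X^{(t+1)}=(X^{(t)}-\eta_t G^{(t)})W^{(t)}-\bar X^{(t+1)}$ over a full window $[\ell\tau,(\ell+1)\tau)$, the pure-gossip part $X^{(\ell\tau)}W_{\ell,\tau}$ contracts by $(1-p)$ through \eqref{eq:p}, while the gradient perturbations inject error. A Young split with parameter matched to $p$ (i.e.\ $(1-p)(1+\tfrac p2)\le 1-\tfrac p2$) and a Cauchy--Schwarz bound over the $\tau$ perturbation terms give a $\tau$-step recursion $\E\,\Xi_{(\ell+1)\tau}\le(1-\tfrac p2)\,\E\,\Xi_{\ell\tau} + \frac{C\tau}{p}\sum_{t}\eta_t^2\,\tfrac1n\E\norm{G^{(t)}}_F^2$; summing the geometric series bounds $\sum_t w_t\E\,\Xi_t$ by $\cO(\tau/p^2)\sum_t w_t\eta_t^2(\cdots)$, where $\tfrac1n\E\norm{G^{(t)}}_F^2$ is controlled by the noise assumption in terms of $\bar\sigma^2$, $\bar\zeta^2$, and the gradient norms. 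Substituting back, the gradient-dependent part of $\Xi_t$ is absorbed into the negative progress term $-c_1\eta_t\E(f-f^\star)$ once $\eta_t=\cO(p/(L\tau))$, leaving a clean recursion in $e_t:=\E\norm{\bar\xx^{(t)}-\xx^\star}^2$ with an additive $\eta_t^2\bar\sigma^2/n$ term and an $\eta_t^3$-order consensus-noise term. A standard stepsize-tuning lemma for recursions of this type then yields the three stated rates: the statistical $\frac{\bar\sigma^2}{\mu n\epsilon}$ term, the $\frac{L\tau}{\mu p}\log\frac1\epsilon$ optimization term, and the intermediate $\frac{\sqrt L(\bar\zeta\tau+\bar\sigma\sqrt{p\tau})}{\mu p\sqrt\epsilon}$ term coming from the $\eta_t^3$ consensus noise; the $\mu\E\norm{\bar\xx^{(T+1)}-\xx^\star}^2$ term and weights $w_t$ arise from the telescoping.

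The hard part will be Step two and its coupling to Step three: the consensus error contracts only over $\tau$-step windows (Assumption~\ref{a:avg_distrib}), whereas the descent acts per step, so the two recursions run on different clocks and must be reconciled. Compounding this, the \emph{gradient-dependent} pieces of the non-uniform noise bound (the $P$ and $M$ terms) must be absorbed into the descent's progress term without destroying either the per-step $(1-\mu\eta_t)$ contraction or the windowed $(1-\tfrac p2)$ contraction; getting the bookkeeping to close is precisely what forces the stepsize threshold $\eta_t=\cO(p/(L\tau))$, and this threshold is exactly what the $\tau/p$ factors in the final complexity bounds reflect.
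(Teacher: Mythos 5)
Your proposal follows essentially the same route as the paper: average-preservation of the gossip step, a per-step descent inequality for $\bar\xx^{(t)}$ with an additive $\eta_t L\,\Xi_t$ penalty, a $\tau$-window consensus recursion contracted via Assumption~\ref{a:avg_distrib} whose gradient-dependent part is absorbed into the progress term under $\eta_t = \cO(p/(L\tau))$, and a final stepsize-tuning lemma. You also correctly identify the main bookkeeping difficulty (the per-step descent clock versus the $\tau$-window consensus clock), which the paper resolves with a second ``in-between'' consensus recursion and $\tau$-slow weight sequences.
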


\begin{table*}
\begin{minipage}{\textwidth}
\caption{Comparison of convergence rates for Local SGD in non-iid settings, most recent results. We improve over the convex results, and recover the non-convex rate of \citet{Li2019:decentralized}.}
\label{tab:localsgd} 
\resizebox{\linewidth}{!}{
\begin{tabular}{llllll}\toprule[1pt]
Reference & \multicolumn{5}{c}{convergence to $\epsilon$-accuracy} \\  \cmidrule{2-2} \cmidrule{4-4} \cmidrule{6-6}
          & strongly convex & \phantom{-} & convex & \phantom{-} & non-convex \\ \midrule
\citet{Li2020:fedavg} & $\cO \left( \frac{\bar \sigma^2}{n \mu^2 \epsilon} + \frac{ \tau^2 \bar \zeta^2}{\mu^2 \epsilon} \right)$\footnote{The paper relies on slightly different assumptions (bounded gradients and different measure of dissimilarity). For better comparison of the rates we write here $\bar \zeta^2$ instead (which is strictly smaller than their parameter).}  & & - & & -\\
\citet{Khaled2020:local} &  - & & $\cO \left( \frac{\bar \sigma^2 + \bar \zeta^2}{n \epsilon^2} + \frac{\sqrt{L}\tau (\bar \zeta + \bar \sigma )}{\epsilon^{3/2}}  + \frac{L \tau }{\epsilon}  \right)$ & & -\\
\citet{Li2019:decentralized} & - & & - & &  $\cO \left(\frac{L \bar \sigma^2}{n \epsilon^2} + \frac{L(\tau \bar \zeta + \sqrt{\tau}\bar \sigma  )}{\epsilon^{3/2}}  + \frac{L\tau}{\epsilon}  \right) $ \\
	this work & $\tilde \cO \left(\frac{\bar \sigma^2 }{n \mu \epsilon} + \frac{\sqrt{L } (  \tau \bar \zeta + \sqrt{\tau} \bar \sigma)}{\mu \sqrt{\epsilon}} + \kappa \tau \right)$ & & $ \cO \left(\frac{\bar \sigma^2}{n \epsilon^2} + \frac{\sqrt{L} (\tau \bar \zeta + \sqrt{\tau} \bar \sigma ) }{ \epsilon^{3/2}} + \frac{L\tau}{\epsilon} \right)$ & & $\cO \left(\frac{L \hat \sigma^2 }{n \epsilon^2} + \frac{L ( \tau \hat \zeta + \sqrt{\tau} \hat \sigma)  }{\epsilon^{3/2}}  + \frac{L\tau}{\epsilon} \right) $
\\ \bottomrule[1pt]
\end{tabular}}
\end{minipage}
\end{table*}

\subsection{Lower Bound}
We now show that the terms depending on $\bar \zeta$ are necessary for the strongly convex setting and cannot be removed by an improved analysis. %
\begin{theorem}\label{thm:lower_bound}
	For $n>1$ there exists strongly convex and smooth functions $f_i \colon \R^d \to \R$, $i \in [n]$ with $L = \mu = 1$ and without stochastic noise ($\bar \sigma^2 = 0$), such that Algorithm~\ref{alg:random_W_decentr_sgd_matrix} for every constant mixing matrix $W^{(t)}\equiv W$ with $p < 1$ (see Assumption~\ref{a:avg_distrib}) for $\tau = 1$, requires
\begin{align*}
T = \tilde \Omega \left( \dfrac{\bar\zeta  (1 - p)}{\sqrt{\epsilon} p }\right)
\end{align*}
iterations to converge to accuracy $\epsilon$.
\end{theorem}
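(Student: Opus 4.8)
The plan is to exhibit an explicit quadratic instance, reduce its (noiseless, hence deterministic) dynamics to scalar linear recursions, and extract an irreducible disagreement term that forces an unfavorable trade-off between the stepsize and the attainable accuracy. Since $L=\mu=1$ forces each $f_i$ to have Hessian $\equiv\mI$, the only admissible functions are $f_i(\xx)=\frac12\norm{\xx-\bb_i}^2$. I would use the cleanest hard instance $n=2$, $d=1$, with targets $\bb=(\beta,-\beta)$ and $\beta=\bar\zeta$, so that $\xx^\star=\bar\bb=\0$ and the dissimilarity at the optimum is exactly $\bar\zeta$; the case of general $n$ and arbitrary admissible $W$ then follows by placing the heterogeneity isotropically, so that $\{\bb_i-\bar\bb\}$ has $\Omega(1)$ overlap with the slowest-mixing eigenmode of $W$. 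Collecting the targets into $\mB=[\bb_1,\dots,\bb_n]$, the gradient-then-gossip update reduces to the linear map $X^{(t+1)}=\bigl((1-\eta)X^{(t)}+\eta\mB\bigr)W$.

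I would then diagonalize in the eigenbasis of $W$. Writing $x_i^{(t)}=\bar\xx^{(t)}\pm u^{(t)}$ and letting $\lambda$ denote the non-trivial eigenvalue of $W$ (the slow mode $(1,-1)$, with $\lambda^2=1-p$ for a constant matrix and $\tau=1$ via Assumption~\ref{a:avg_distrib}), the recursion splits into two autonomous scalar recursions: the consensus mode $\bar\xx^{(t+1)}=(1-\eta)\bar\xx^{(t)}$, which decays geometrically at rate $(1-\eta)$ to $\xx^\star$, and the disagreement mode $u^{(t+1)}=(1-\eta)\lambda u^{(t)}+\eta\lambda\beta$, whose fixed point is
\[
u^\infty=\frac{\eta\lambda\beta}{1-(1-\eta)\lambda}\,.
\]
Initializing at consensus ($u^{(0)}=0$) but away from the optimum ($\bar\xx^{(0)}=R_0$), the disagreement grows monotonically towards $u^\infty$ while the consensus error decays. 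The key spectral identity is $\tfrac{\lambda}{1-\lambda}=\tfrac{\sqrt{1-p}}{1-\sqrt{1-p}}=\tfrac{\sqrt{1-p}+(1-p)}{p}\ge\tfrac{1-p}{p}$.

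The lower bound then comes from a two-sided squeeze on $\eta$. The suboptimality that the decentralized scheme must actually drive to $\epsilon$ is the per-worker error $\frac1n\sum_i\bigl(f(\xx_i^{(t)})-f^\star\bigr)$, which (unlike $f(\bar\xx^{(t)})$, whose average mode is unaffected by the graph here) sees the disagreement and is bounded below by $\tfrac12(u^{(t)})^2$. Hence reaching accuracy $\epsilon$ forces $u^\infty\lesssim\sqrt\epsilon$, i.e.\ $\eta\lesssim\eta_{\max}\asymp\tfrac{\sqrt\epsilon\,(1-\lambda)}{\lambda\bar\zeta}$. On the other hand the consensus-mode contribution is $\tfrac12R_0^2(1-\eta)^{2t}$, so driving it below $\epsilon$ needs $T\gtrsim\tfrac1\eta\log\tfrac{R_0}{\sqrt\epsilon}$; at such a $T$ one checks that $(( 1-\eta)\lambda)^{T}$ is already small, so $u^{(T)}$ has essentially saturated at $u^\infty$ and the floor constraint is binding. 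Substituting $\eta_{\max}$ yields $T=\tilde\Omega\bigl(\tfrac{\lambda}{1-\lambda}\cdot\tfrac{\bar\zeta}{\sqrt\epsilon}\bigr)=\tilde\Omega\bigl(\tfrac{\bar\zeta(1-p)}{p\sqrt\epsilon}\bigr)$, as claimed.

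I expect the main obstacle to be making this trade-off rigorous and uniform over all stepsizes $\eta$, since the algorithm is free to choose the best one: one must prove that there is no time $T$ at which the decaying consensus-mode error and the growing disagreement mode are simultaneously $\le\epsilon$. This requires quantitative control of the saturation time of $u^{(t)}$ against the mixing time of $\bar\xx^{(t)}$ (large $\eta$ kills the average fast but leaves $u^\infty$ above $\sqrt\epsilon$; small $\eta$ controls $u^\infty$ but slows the average), ruling out an escape through some intermediate $T$. A secondary technical point is the reduction for general $n$ and arbitrary admissible $W$, where the isotropic construction must be shown to guarantee the $\Omega(1)$ projection onto the slowest eigenmode so that the scalar analysis applies with the graph's actual value of $p$.
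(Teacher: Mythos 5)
This is essentially the paper's own proof: the same one-dimensional quadratic hard instance $f_i(x)=\frac{1}{2}(x-y_i)^2$ with the heterogeneity vector aligned to the second eigenvector of $W$, the same splitting into an average mode decaying at rate $(1-\eta)$ and a disagreement mode with steady state $\eta\lambda_2\bar\zeta/(1-(1-\eta)\lambda_2)$, and the same two-sided squeeze forcing $\eta\lesssim \sqrt{\epsilon}\,(1-\lambda_2)/(\lambda_2\bar\zeta)$ and hence $T\gtrsim \frac{1}{\eta}\log\frac{1}{\epsilon}=\tilde\Omega\bigl(\bar\zeta(1-p)/(p\sqrt{\epsilon})\bigr)$; the saturation issue you flag is handled exactly as you sketch, since the requirement that the transient factor $((1-\eta)\lambda_2)^{t}$ be below $\sqrt{\epsilon}/\bar\zeta$ already places the disagreement within a constant factor of its fixed point by any admissible stopping time. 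The one correction needed is in your general-$n$ reduction: an isotropic placement of $\{\bb_i-\bar\bb\}$ gives only $\Theta(1/n)$ overlap with the slowest eigenmode, not $\Omega(1)$, so you should instead (as the paper does) choose the heterogeneity to be \emph{exactly} the second eigenvector of $W$.
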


\subsection{Discussion}
Exemplary, we focus in our discussion %
on the strongly convex case only. For strongly convex functions we prove that the expected function value suboptimality decreases as
	\begin{align*}
	\tilde\cO \left(\frac{\bar\sigma^2}{n \mu T} + \frac{L (\tau^2 \bar\zeta^2 + \tau p \bar\sigma^2)}{\mu^2 p^2 T^2} + \frac{L\tau R_0^2}{p} \exp\left[- \frac{\mu T p}{\tau L}\right] \right)\,
	\end{align*}
where $T$ denotes the iteration counter.
We now argue that this rate is optimal up to acceleration.

\textbf{Stochastic Terms.}
If $\bar\sigma^2>0$ the convergence rate is asymptotically dominated by the first term, which cannot be further improved for stochastic methods~\cite{nemirovskyyudin1983}.
We observe that the dominating first term indicates a linear speedup in the number of workers $n$, and no dependence on the number of local steps $\tau$, the mixing parameter $p$ or the dissimilarity parameter $\bar \zeta^2$. 
This means that decentralized SGD methods are ideal for the optimization in the high-noise regime even when network connectivity is low and number of local steps is large (see also \cite{Chaturapruek2015:noise} and recent work~\cite{Pu2019:independent}). 
In our rates the variance $\bar \sigma^2$ parameter also appears in the second term, but affects the convergence only mildly (for $T= \Omega(\tau n/p)$ this second term gets dominated by the first one).

\textbf{Optimization Terms.} 
Even when $\bar \sigma^2=0$, the convergence of decentralized SGD only  \emph{sublinear} when $\bar\zeta^2 > 0$:\footnote{Except for the special case when $p=1$ (fully connected graph, such as for mini-batch SGD). In this case the rate does not depend on $\bar \zeta^2$. We detail this (known result) in the appendix.} 
\begin{align*}
 \tilde \cO \left(  \frac{L \tau^2 \bar\zeta^2}{\mu^2 p^2 T^2} +  \frac{L\tau R_0^2}{p} \exp\left[- \frac{\mu T p}{\tau L}\right] \right)\,.
\end{align*}
The dependence on the dissimilarity $\bar \zeta^2$ cannot be removed in general as we show in Theorem~\ref{thm:lower_bound}.
These results show that decentralized SGD methods
 without additional modifications \citep[see also][]{shi2015extra,Kgoogle:cofefe} cannot converge linearly.
 
We can further observe see that the rates only depend on the ratio $p/\tau$, but not on $p$ or $\tau$ individually. This also means that the rates for local variants of decentralized SGD are the same as for their loopless variants (when the mixing is performed with probability $\frac{1}{\tau}$ only).
The error term depending on $R_0^2$ vanishes exponentially fast, as expected for SGD methods~\cite{Moulines2011:nonasymptotic}. The linear dependence on $ \frac{L }{\mu p }$ (the therm in the exponent) is expected here, as we use non-accelerated first order schemes and standard gossip. This term could potentially  be improved to $\smash{\bigl(\frac{L}{\mu p}\bigr)^{1/2}}$ with acceleration techniques, such as in~\cite{Scaman2017:optimal}. The linear dependence on $\tau$ cannot further be improved in general. This follows from the lower bound for the communication complexity of distributed convex optimization~\cite{Arjevani2015:lowerbound}, as the number of communication rounds is at most $\frac{T}{\tau}$ (no communication happens during the local steps). However, when $\bar \zeta^2 = 0$ (as for instance the case for identical functions $f_i$ on each worker), this lower bound becomes vacuous and improvement of the dependence on $\tau$ might be possible (which we cannot not exploit here).

\paragraph{Linear Convergence for Overparametrized Settings.}
In overparametrized problems, there exists always $\xx^\star$ s.t.\ $\norm{\nabla f_i(\xx^\star)}^2=0$, that is $\bar \sigma^2 =0$ and $\bar \zeta^2=0$. We prove here that decentralized SGD converges linearly in this case, similarly to mini-batch SGD~\cite{Moulines2011:nonasymptotic,%
Schmidt2013:fastconvergence,Needell2016:sgd,%
Ma2018:interpolation,gower2019sgd,loizou2020stochastic}.

\begin{figure*}[t]
	\centering
	\begin{minipage}{\textwidth}
		\centering
		\hfill
		\includegraphics[width=0.8\linewidth]{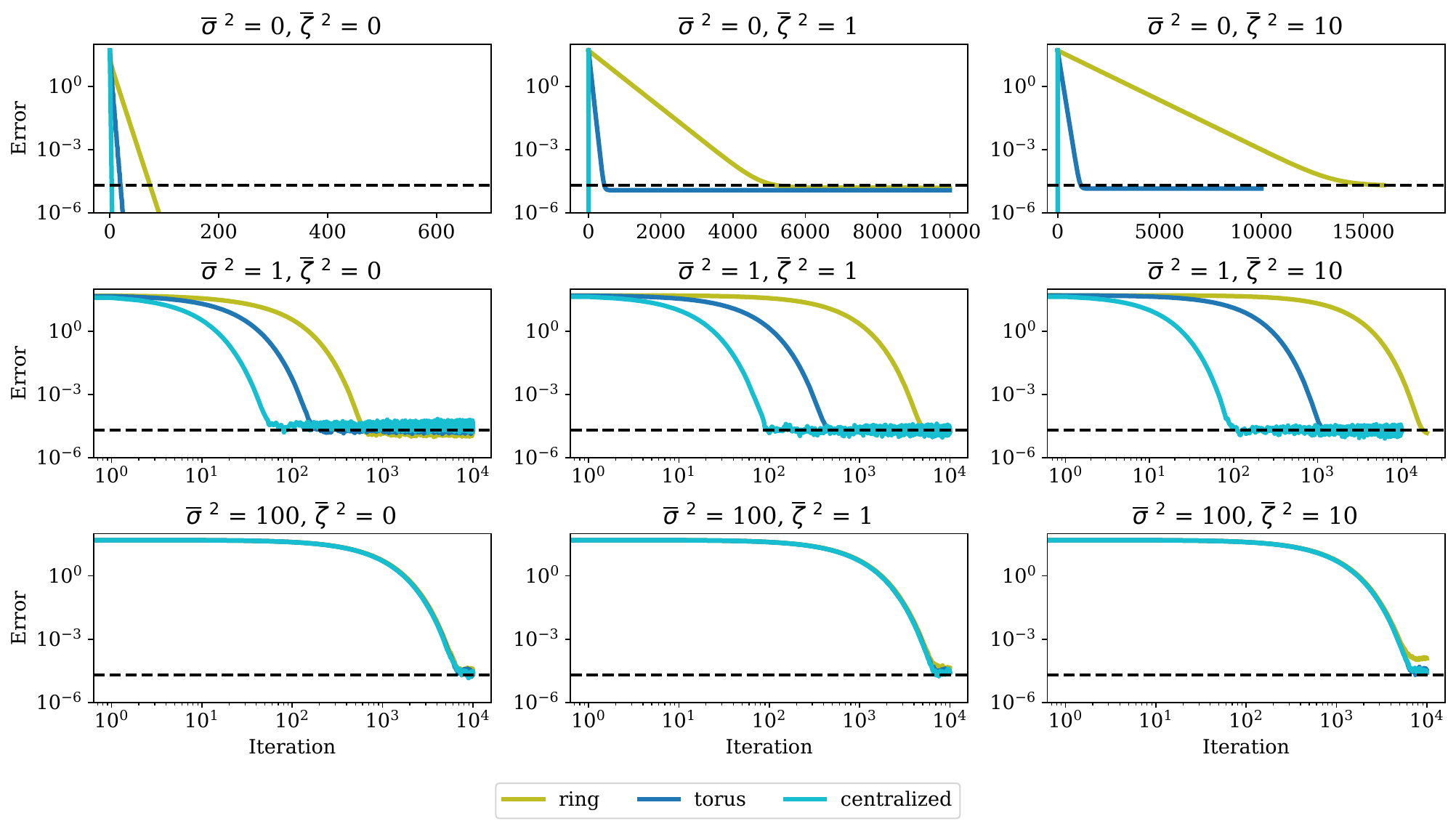}
		\hfill \null
		\caption{%
        Convergence of $\frac{1}{n} \sum_{i = 1}^n \bigl \| \xx_i^{(t)} - \xx^\star \bigr\|^2_2$ to target accuracy $\epsilon = 10^{-5}$ for different problem difficulty ($\bar \sigma^2$ increasing to the bottom, $\bar \zeta^2$ increasing to the right), and different topologies on $n=25$ nodes, $d = 50$. Stepsizes were tuned for each experiment individually to reach target accuracy in as few iterations as possible.
		}%
		\label{fig:exp}
	\end{minipage}%
\end{figure*}

\begin{figure}[b!]
	\centering
	\hfill
	\includegraphics[width=0.8\linewidth]{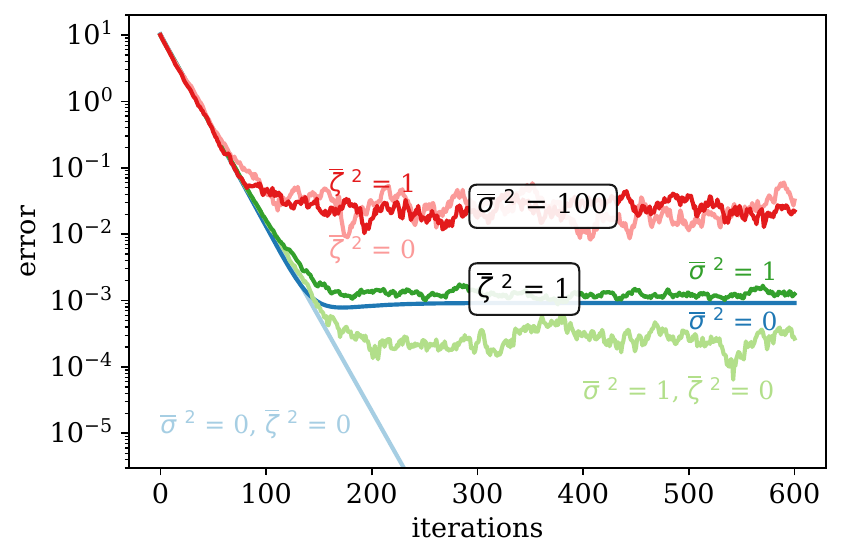}
	\hfill \null
	\caption{%
    Problem setup. Parameters $\bar \sigma^2$ and $\bar \zeta^2$ change the noise level and the difficulty of the problem. (Here we depict $\frac{1}{n} \sum_{i = 1}^n \bigl \| \xx_i^{(t)} - \xx^\star \bigr\|^2_2$ on the ring with $n = 25$ nodes, $d = 10$, using fixed stepsize $\eta = 10^{-2}$ for illustration. }%
	\label{fig:stepsize}
\end{figure}

\section{Special Cases: Highlights}
Our rates apply to all the examples discussed in Section~\ref{sec:examples} and of course we could design even more variants and combinations of these schemes. This gives great flexibility in designing new schemes and algorithms for future applications. We leave the exploration of the trade-offs in these approaches for future work, and highlight here only a few special cases that could be of particular interest. %

\subsection{Best Rates for Local SGD}
Local SGD is a simplified version of the federated averaging algorithm~\cite{McMahan16:FedLearning,McMahan:2017fedAvg}
and has recently attracted the attention of the theoretical community in the seek of the best convergence rates~\cite{Stich2018:LocalSGD,Wang2018:cooperativeSGD,yu2019parallel,basu2019qsparse,%
patel2019communication,StichK19delays,%
Li2019:decentralized,Khaled2020:local}.
Our work extends this chain and improves previous best results for convex settings and recovers the results of~\citet{Li2019:decentralized} in the non-convex case as we highlight in Table~\ref{tab:localsgd}. We  point out that all these rates are still dominated by large-batch SGD and do not match the lower bounds established in~\cite{woodworth2018graph} for the iid.\ case $\bar \zeta^2=0$. See also recent parallel work in~\cite{Woodworth20local}. Whilst these previous analysis were often specifically tailored and only applicable to the mixing structure in local SGD, our analysis is much more general and tighter at the same time.

In their recently updated parallel version, \citet{Kgoogle:cofefe} improve upon these rates by removing $\sigma$ from the second term. 
However, they do analyze a different version of local SGD (with different stepsizes for inner and outer loops) than we consider here. This change does not fit in our framework and it is not clear if similar trick is possible the decentralized setting.

\subsection{Comparison to Recent Frameworks}
We mentioned major differences to other frameworks in Section~\ref{sec:comparisontoframework} above already.
Our results for the non-convex case recover the best results from~\cite{Wang2018:cooperativeSGD} for the iid.\ case\footnote{
These results can be recovered by optimizing the stepsize in~\citep[Theorem 1]{Wang2018:cooperativeSGD} directly, instead of resorting to the worse rate stated in \citep[Corollary 1]{Wang2018:cooperativeSGD}.}
($\hat \zeta^2=0$) and the non-iid.\ case from~\cite{Li2019:decentralized} for their specific settings.
We point out that our results also cover the convex setting and deterministic setting. \looseness=-1

\subsection{Best Rates for Decentralized SGD}
We improve best known rates of Decentralized SGD \cite{Olshevsky2019:dpsgd_rate, Koloskova:2019choco} for strongly convex objectives and recover the best rates in the non-convex case \cite{Lian2017:decentralizedSGD}.

\section{Experiments}
\label{sec:experiments}
Complementing prior work that established the effectiveness of decentralized training methods~\cite{Lian2017:decentralizedSGD,Assran:2018sdggradpush} we here focus on verifying whether the numerical performance of decentralized stochastic optimization algorithms coincides with the rates predicted by theory, focusing on the strongly convex case for now.

We consider a distributed least squares objective with $f_i(\xx) := \frac{1}{2} \norm{\mA_i \xx - \bb_i }_2^2$, for fixed Hessian $\mA_i^2 = \frac{i^2}{n} \cdot \mI_d$ and sample each $\bb_i \sim \cN(0, \nicefrac{\bar\zeta^2}{ i^2} \mI_d)$ for a parameter $\bar\zeta^2$, which controls the similarity of the functions (and coincides with the parameter in Assumption~\ref{a:opt}). We control the stochastic noise $\bar\sigma^2$ by adding Gaussian noise to every stochastic gradient. We depict the effect of these parameters in Figure~\ref{fig:stepsize}.

\paragraph{Setup.} 
We consider three common network topologies, \emph{ring}, 2-$d$ \emph{torus} and \emph{fully-connected} graph and use the Metropolis-Hasting mixing matrix $W$, i.e. $w_{ij} = w_{ji} = \frac{1}{deg(i) + 1} = \frac{1}{deg(j) + 1}$ for $\{i, j\} \in E$. For all algorithms we tune the stepsize to reach a desired target accuracy $\epsilon$ with the fewest number of iterations. %

\paragraph{Discussion of Results.}
In Figure~\ref{fig:exp} we depict the results. We observe that in the high noise regime (bottom row) the graph topology and the functions similarity $\bar\zeta^2$ do not impact the number of iterations needed to reach the target accuracy (the $\frac{\bar \sigma^2}{T}$ term is dominating in this regime. We also see linear rates when $\bar \sigma^2=\bar \zeta^2 =0$ as predicted. When increasing $\bar \zeta^2$ (in the case of $\bar\sigma^2 = 0$)
we see that on the ring and torus topology the linear rate changes to a sublinear rate: even thought the curves look like straight lines, they stop converging when reaching the target accuracy (the stepsize must be further decreased to achieve higher accuracy). By comparing two top right plots, we see that for fixed topology the number of iterations increases approximately by a factor of $\sqrt{10}$ when increasing $\bar\zeta^2$ by a factor of 10, as one would expect from the term $\frac{\bar \zeta^2}{p^2T^2}$ in the convergence rate (see also Figure~\ref{fig:T2exp} in the appendix). The difference in number of iterations on the torus vs.\ ring scales approximately linear in the ratio of their mixing parameters $p$, (that is, $\Theta(n)$ as mentioned in Section~\ref{sec:p}).%

\section{Extensions}
We presented a unifying framework for the analysis of decentralized SGD methods and provide the best known convergence guarantees. Our results show that when the noise is high, decentralized SGD methods can achieve linear speedup in the number of workers $n$ and the convergence rate does only weakly depend on the graph topology, the number of local steps or the data heterogeneity. This shows that such methods are perfectly suited to solve stochastic optimization problems in a decentralized way. However, our results also reveal that when the noise is small (for e.g.\ when using large mini-batches), the effect of those parameters become more pronounced and especially function diversity can hamper the convergence of decentralized SGD methods.\looseness=-1

Our framework can be further extended by considering gradient compression techniques~\cite{Koloskova:2019choco} or 
overlapping communication steps~\cite{Assran:2018sdggradpush,Wang2020:overlap} to additionally speedup the
distributed training.

\section*{Acknowledgements}
We would like to thank Edouard Oyallon for 
indicating an inaccuracy in the proof %
in the first version of this manuscript. 
We acknowledge funding from SNSF grant 200021\_175796, as well as a Google Focused Research Award. Nicolas Loizou  acknowledges support by the IVADO Postdoctoral Funding Program.

\newpage%

\bibliographystyle{icml2020_mod}

\bibliography{papers_decentralized}
\normalsize

\appendix
\onecolumn
\icmltitle{APPENDIX \\ A Unified Theory of Decentralized SGD \\with Changing Topology and Local Updates}

The appendix is organized as follows: 
In Section~\ref{sec:splitting}, we rewrite Algorithm~\ref{alg:random_W_decentr_sgd_matrix} equivalently in matrix notation as Algorithm~\ref{alg:decentr_sgd_matrix} and give a sketch of the proof using this new notation.
In Section~\ref{sec:appendixtechnical} we state a few auxiliary technical lemmas, before giving  all details for the proof of the theorem in Sections~\ref{sec:descentconsensus} and \ref{sec:mainrecursionlemmas}. %
We conclude the appendix in Section~\ref{sec:additionalexp} by presenting additional numerical results that confirm the tightness of our theoretical analysis in the strongly convex case.

\section{Proof of Theorem~\ref{thm:summary}}
\label{sec:splitting}
\subsection{Decentralized SGD in Matrix Notation}
We can rewrite Algorithm \ref{alg:random_W_decentr_sgd_matrix} using the following matrix notation, extending the definition used in the main text:
\begin{align}
\begin{split}
X^{(t)} &:= \left[ \xx_1^{(t)},\dots, \xx_n^{(t)}\right] \in \R^{d\times n}, \\ \bar{X}^{(t)} &:= \left[ \bar{\xx}^{(t)},\dots, \bar{\xx}^{(t)}\right]  \in \R^{d\times n}, \\  \partial F(X^{(t)}, \xi^{(t)}) &:= \left[\nabla F_1(\xx_{1}^{(t)}, \xi_1^{(t)}), \dots,  \nabla F_n(\xx_{n}^{(t)}, \xi_n^{(t)})\right]  \in \R^{d\times n}. \label{eq:matrix_notation}
\end{split}
\end{align}
\begin{algorithm}[H]
	\caption{\textsc{Decentralized SGD (matrix notation)}}\label{alg:decentr_sgd_matrix}
	\begin{algorithmic}[1]
		\INPUT{: $X^{(0)}$, stepsizes $\{\eta_t\}_{t=0}^{T-1}$, number of iterations $T$, mixing matrix distributions $\cW^{(t)}$ for $t \in [0, T ]$}
		\FOR{$t$\textbf{ in} $0\dots T$}
		\STATE Sample $W^{(t)} \sim \cW^{(t)}$ 
		\STATE $X^{(t + \frac{1}{2})} = X^{(t)} - \eta_t\partial F_i(X^{(t)}, \xi_i^{(t)})$ \hfill $\triangleright$ stochastic gradient updates
		\STATE $X^{(t + 1)} = X^{(t + \frac{1}{2})} W^{(t)} $ \hfill  $\triangleright$ gossip averaging
		\ENDFOR
	\end{algorithmic}
\end{algorithm}

\subsection{Proof Sketch---Combining Consensus Progress (Gossip) and Optimization Progress (SGD)}
In this section we sketch of the proof for Theorem~\ref{thm:summary}. 
As a first step in the proof, we will derive an upper bound on the expected progress, measured as distance to the optimum, $r_t =\E{\norm{\bar{\xx}^{(t)} - \xx^\star}}^2$ for the convex cases, and function suboptimality $r_t = \E f(\bar \xx^{(t)}) - f^\star$ in the non-convex case. 
These bounds will have the following form:
\begin{align}\label{eq:rec1}
r_{t + 1} \leq (1 - a \eta_t) r_t - b \eta_t e_t + c \eta_t^2 +  \eta_t B \Xi_t \,,
\end{align}
with $\Xi_t = \frac{1}{n}\E_t{\sum_{i=1}^n  \norm{\xx_i^{(t)}-\bar \xx^{(t)}}^2}$ and 
\begin{itemize}
	\item for both convex cases $r_t =\E{\norm{\bar{\xx}^{(t)} - \xx^\star}}^2$, $e_t = f(\bar\xx^{(t)}) - f(\xx^\star)$, $a = \frac{\mu}{2}$, $b  =1$, $c = \frac{\bar\sigma^2}{n}$,  $B = 3L$ (Lemma~\ref{lem:stochastic_avg});
	\item for the non-convex case $r_t = \E f(\bar \xx^{(t)}) - f^\star$, $ e_t = \norm{ \nabla f(\bar \xx^{(t)}) }_2^2$, $a = 0$, $b  = \frac{1}{4}$, $c = \frac{L \hat{\sigma}^2}{n}$,  $B = L^2$ (Lemma~\ref{lem:decrease_nc}).
\end{itemize}
We will then bound the consensus distance $\Xi_{t}$ as detailed in Section \ref{sec:descentconsensus}; Lemmas~\ref{lem:consensus} and~\ref{lem:consensus_nc} by a recursion of the form 
\begin{align}
\Xi_{t} & \leq \left( 1 - \frac{p}{2}\right) \Xi_{m\tau} + \frac{p}{64 \tau}  \sum_{j = m\tau}^{t - 1} \Xi_j + D \sum_{j = m\tau}^{t - 1} \eta_j^2 e_j + A \sum_{j = m\tau}^{t - 1} \eta_j^2, \label{eq:consensus_recursion}
\end{align}
where $m = \lfloor {t / \tau} \rfloor  - 1$; for convex cases $A = 8 \bar\sigma^2  + \frac{18\tau}{p} \bar\zeta^2$ , $D = 72 L \frac{\tau}{p}$ (Lemma~\ref{lem:consensus}) and for non-convex case $A = 2 \hat \sigma^2  + 2\left(\frac{6 \tau}{p} + M \right)\hat \zeta^2$, $D = 2 P \left( \frac{6 \tau}{p} + M\right)$ (Lemma~\ref{lem:consensus_nc}). 

Note that \eqref{eq:consensus_recursion} holds only for $t \geq (m + 1)\tau$. To be able to simplify \eqref{eq:consensus_recursion} we additionally consider $m \tau \leq  t < (m + 1)\tau$ and prove (Lemmas~\ref{lem:consensus2}, \ref{lem:consensus_nc2}) that with the same parameters as above, it holds
\begin{align}
\Xi_{t} & \leq \left( 1 + \frac{p}{2}\right) \Xi_{m\tau} + \frac{p}{64 \tau}  \sum_{j = m\tau}^{t - 1} \Xi_j + D \sum_{j = m\tau}^{t - 1} \eta_j^2 e_j + A \sum_{j = m\tau}^{t - 1} \eta_j^2, \label{eq:consensus_recursion2}
\end{align}

Next, we simplify this recursive equation \eqref{eq:consensus_recursion} using Lemma~\ref{lem:solve_consensus_recursion} and some positive weights $\{w_t\}_{t \geq 0}$ (see Lemma~\ref{lem:solve_consensus_recursion} for the definition of the weights $w_t$) to 
\begin{align}\label{eq:rec2}
B \cdot \sum_{t = 0}^T w_t \Xi_{t} \ \leq\  \frac{b}{2} \cdot \sum_{t = 0}^T w_t e_t + 64 A B \frac{\tau}{p}\cdot \sum_{t = 0}^T w_t \eta_t^2  \,,
\end{align}
where again $\Xi_t = \frac{1}{n}\E_t{\sum_{i=1}^n  \norm{\xx_i^{(t)}-\bar \xx^{(t)}}^2}$. %

Then we combine \eqref{eq:rec1} and \eqref{eq:rec2}. Firstly rearranging \eqref{eq:rec1}, multiplying by $w_t$ and dividing by $\eta_t$, we get 
\begin{align*}
b w_t e_t  \leq \frac{(1 - a \eta_t)}{\eta_t} w_t r_t - \frac{w_t }{\eta_t} r_{t + 1} + c w_t \eta_t + B w_t \Xi_t \,,
\end{align*}
Now summing up and dividing by $W_T = \sum_{t = 0}^T w_t$,
\begin{align*}
\frac{1}{W_T}\sum_{t = 0}^T b w_t e_t  & \leq \frac{1}{W_T} \sum_{t=0}^T \left(\frac{(1 - a \eta_t)w_t}{\eta_t}  r_t - \frac{w_t }{\eta_t} r_{t + 1} \right) + \frac{c}{W_T} \sum_{t = 0}^Tw_t \eta_t + \frac{1}{W_T} B \sum_{t = 0}^T w_t \Xi_t \\
& \stackrel{\eqref{eq:rec2}}{\leq}  \frac{1}{W_T} \sum_{t=0}^T \left(\frac{(1 - a \eta_t)w_t}{\eta_t}  r_t - \frac{w_t }{\eta_t} r_{t + 1} \right)  + \frac{c}{W_T} \sum_{t = 0}^Tw_t\eta_t + \frac{1}{2W_T}  \sum_{t = 0}^T w_t e_t + \frac{64 B A}{W_T} \frac{\tau}{p}\sum_{t = 0}^T w_t \eta_t^2 \,,
\end{align*}
Therefore,
\begin{align} \label{eq:rec3}
\frac{1}{2 W_T}\sum_{t = 0}^T bw_t e_t \leq \frac{1}{W_T} \sum_{t=0}^T \left(\frac{(1 - a \eta_t)w_t}{\eta_t}  r_t - \frac{w_t }{\eta_t} r_{t + 1} \right)  + \frac{c}{W_T} \sum_{t = 0}^Tw_t\eta_t + \frac{64 B A}{W_T} \frac{\tau}{p} \sum_{t = 0}^T w_t \eta_t^2
\end{align}

Finally, to solve this main recursion \eqref{eq:rec3} and obtain the final convergence rates of Theorem~\ref{thm:summary}, we will use the following Lemmas, which will be presented in Section \ref{sec:mainrecursionlemmas}:
\begin{itemize}
	\item Lemma~\ref{lem:rate_strongly_convex} for strongly convex case when $a > 0$.
	\item Lemmas~\ref{lem:rate_weakly_convex} and \ref{lem:tuning_stepsize} for both weakly convex and non-convex cases as their common feature is that $a = 0$.
\end{itemize}

\subsection{How the Proof of Theorem~\ref{thm:summary} Follows}
\label{sec:summary_proof_main_th}
In this section we summarize how the proof of Theorem~\ref{thm:summary} follows from the results that we establish in Sections~\ref{sec:descentconsensus} and \ref{sec:mainrecursionlemmas} below.
Note that for convex cases we require both $f_i$ and $F_i$ to be convex as in Lemma~\ref{lem:consensus}.

\begin{proof}[Proof of Theorem~\ref{thm:summary}, strongly convex case]
	The proof follows by applying the result of Lemma~\ref{lem:rate_strongly_convex} to the equation \eqref{eq:rec3} (obtained with Lemmas~\ref{lem:stochastic_avg}, \ref{lem:consensus}, \ref{lem:solve_consensus_recursion}) with $r_t =\E{\norm{\bar{\xx}^{(t)} - \xx^\star}}^2$, $e_t = f(\bar\xx^{(t)}) - f(\xx^\star)$, $a = \frac{\mu}{2}$, $b = 1$, $c = \frac{\bar\sigma^2}{n}$, $d = \frac{96 \sqrt{3}\tau  L}{p}$, $A = \bar\sigma^2  + \frac{18\tau}{p} \bar\zeta^2$, $B = 3L$, $D = 72 L \frac{\tau}{p}$.
	It is only left to show that chosen weights $w_t$ stepsizes $\eta_t$ in Lemma~\ref{lem:rate_strongly_convex} satisfy conditions of Lemmas~\ref{lem:stochastic_avg}, \ref{lem:consensus}, \ref{lem:solve_consensus_recursion}. It is shown in Proposition~\ref{prop:examples-tau-slow} that $\{\eta_t\}$ is $\frac{8\tau}{p}$-slow decreasing and $\{w_t\}$ is $\frac{16\tau}{p}$-slow increasing (condition in Lemma~\ref{lem:solve_consensus_recursion}). Moreover the stepsize $\eta_t := \eta < \frac{1}{d}$ is smaller than conditions on $\eta_t$ in Lemmas~\ref{lem:stochastic_avg}, \ref{lem:consensus}, \ref{lem:solve_consensus_recursion}. 
\end{proof}

	\begin{proof}[Proof of Theorem~\ref{thm:summary}, weakly convex case] 
	The proof follows by applying the result of Lemma~\ref{lem:rate_weakly_convex} to the equation \eqref{eq:rec3} (obtained with Lemmas~\ref{lem:stochastic_avg}, \ref{lem:consensus}, \ref{lem:solve_consensus_recursion}) with $r_t =\E{\norm{\bar{\xx}^{(t)} - \xx^\star}}^2$, $e_t = f(\bar\xx^{(t)}) - f(\xx^\star)$, $a = 0$, $b = 1$, $c = \frac{\bar\sigma^2}{n}$, $d = \frac{96 \sqrt{6}\tau  L}{p}$, $A = 8\bar\sigma^2  + \frac{18\tau}{p} \bar\zeta^2$, $B = 3L$, $D = 72 L \frac{\tau}{p}$.
	It is shown in Proposition~\ref{prop:examples-tau-slow} that $\{\eta_t\}$ and $\{w_t\}$ chosen in Lemma~\ref{lem:rate_weakly_convex} satisfy condition in Lemma~\ref{lem:solve_consensus_recursion}: $\{\eta_t\}$ is $\frac{8\tau}{p}$-slow decreasing and $\{w_t\}$ is $\frac{16\tau}{p}$-slow increasing. Moreover the stepsize $\eta_t := \eta < \frac{1}{d}$ is smaller than conditions on $\eta_t$ in Lemmas~\ref{lem:stochastic_avg}, \ref{lem:consensus}, \ref{lem:solve_consensus_recursion}. 
\end{proof}

\begin{proof}[Proof of Theorem~\ref{thm:summary}, non-convex case]
	applying the result of Lemma~\ref{lem:rate_weakly_convex} to the equation \eqref{eq:rec3} (obtained with Lemmas~\ref{lem:decrease_nc}, \ref{lem:consensus_nc}, \ref{lem:solve_consensus_recursion}) with $r_t = \E f(\bar \xx^{(t)}) - f^\star$, $ e_t = \norm{ \nabla f(\bar \xx^{(t)}) }_2^2$, $a = 0$, $b  = \frac{1}{4}$, $c = \frac{L \hat{\sigma}^2}{n}$, $d = 64 L \sqrt{2 \max\{P, 1\} \left( \frac{6 \tau}{p} + M\right)\frac{\tau}{p}}$,  $A = 2 \hat \sigma^2  + 2\left(\frac{6 \tau}{p} + M \right)\hat \zeta^2$, $B = L^2$, $D = 2 P \left( \frac{6 \tau}{p} + M\right)$. 
	Weights $w_t$ stepsizes $\eta_t$ chosen in Lemma~\ref{lem:rate_weakly_convex} satisfy conditions of Lemmas~\ref{lem:decrease_nc}, \ref{lem:consensus_nc}, \ref{lem:solve_consensus_recursion}, as shown in Proposition~\ref{prop:examples-tau-slow}.
\end{proof}

\subsection{Improved rate when $\tau = 1$ (recovering mini-batch SGD convergence results)}
	In the special case when $\tau = 1$ the proof can be simplified and the rate can be improved: there will be an additional $(1-p)$ factor appearing in the middle term, e.g in strongly convex case the improved rate reads as
	\begin{align*}
	\tilde\cO \left(\frac{\bar\sigma^2}{n \mu T} + \frac{L (\bar\zeta^2 + p \bar\sigma^2) (1 - p)}{\mu^2 p^2 T^2} + \frac{LR_0^2}{p} \exp\left[- \frac{\mu T p}{L}\right] \right)\,.
	\end{align*}
	The main difference to the general result stated in Theorem~\ref{thm:summary} (for $\tau \geq 1$) is that the second term is multiplied with $(1 - p)$, allowing to recover the rate of mini-batch SGD in the case of fully-connected graph when $p = 1$. This improvement also holds for the weakly-convex and non-convex case.
	
	In order to do so, one has to observe that the consensus distance Lemmas~\ref{lem:consensus} and \ref{lem:consensus_nc} can be improved when $\tau = 1$. In the first lines of both these proofs we multiply with $(1 - p)$ not only the first term $\norm{X^{(t)} - \bar{X}^{(t)}}_2^2$ but also the second term with the gradient as during the 1-step averaging both $\xx^{(t)}$ and $\eta_t\partial F_i(X^{(t)}, \xi_i^{(t)})$ are averaged with mixing matrix $W^{(t)}$ (line 4 of Algorithm~\ref{alg:decentr_sgd_matrix}). We omit the full derivations for this special case, as they can easily be obtained by following the current proofs.

\section{Technical Preliminaries}
\label{sec:appendixtechnical}
\subsection{Implications of the assumptions}

\begin{proposition}\label{rem:average_preserve}
	One step of gossip averaging with the mixing matrix $W$ (def.~\ref{def:valid_mixing}) preserves the average of the iterates, i.e. 
	\begin{align*}
	XW \frac{\1\1^\top}{n} = X \frac{\1\1^\top}{n} \,.
	\end{align*}
\end{proposition}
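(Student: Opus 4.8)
The plan is a direct matrix computation relying only on the defining property $W\1 = \1$ from Definition~\ref{def:valid_mixing}. First I would use associativity of matrix multiplication to isolate the product $W \cdot \frac{\1\1^\top}{n}$ on the right of $X$, writing
\begin{align*}
XW \frac{\1\1^\top}{n} = X \left( W \frac{\1\1^\top}{n} \right)\,.
\end{align*}
The key step is then to simplify the inner factor: since $\frac{\1\1^\top}{n} = \frac{1}{n}\1\cdot\1^\top$, I pull the column vector $\1$ through $W$ and invoke $W\1 = \1$ to obtain $W \frac{\1\1^\top}{n} = \frac{(W\1)\1^\top}{n} = \frac{\1\1^\top}{n}$. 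Substituting back gives $X\left(W \frac{\1\1^\top}{n}\right) = X \frac{\1\1^\top}{n}$, which is exactly the claim.

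There is essentially no obstacle here: the statement follows from a single application of right-stochasticity. I would remark that neither the symmetry of $W$ nor the left-stochasticity condition $\1^\top W = \1^\top$ is actually needed for this particular proposition---only $W\1 = \1$ is used---so the conclusion holds for any matrix whose rows sum to one. This property is what justifies treating $\bar X^{(t)} = X^{(t)}\frac{\1\1^\top}{n}$ as an invariant of the gossip-averaging step throughout the subsequent analysis, so I would state it cleanly and in full generality as a standalone fact to be reused later.
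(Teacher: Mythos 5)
Your proof is correct and is exactly the intended argument: the paper states this proposition without proof precisely because it reduces to the one-line computation $XW\frac{\1\1^\top}{n} = X\frac{(W\1)\1^\top}{n} = X\frac{\1\1^\top}{n}$ using $W\1 = \1$ from Definition~\ref{def:valid_mixing}. Your observation that only this single stochasticity condition (and neither symmetry nor $\1^\top W = \1^\top$) is needed is also accurate.
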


\begin{proposition}[Implications of the smoothness Assumption~\ref{a:lsmooth}]
	If for functions $F_i(\xx, \xi)$ Assumption~\ref{a:lsmooth} holds, then it also holds that 
	\begin{align}
	F_i(\xx, \xi) \leq F_i(\yy, \xi) + \lin{\nabla F_i(\yy, \xi),\xx-\yy} + \frac{L}{2}\norm{\xx-\yy}^2_2, \qquad \forall \xx, \yy \in \R^d, \xi \in \Omega_i \label{eq:F-lsmooth}
	\end{align}
	If functions $f_i(\xx) = \E_\xi F_i(\xx, \xi)$, then 
	\begin{align}
	f_i(\xx) \leq f_i(\yy) + \lin{\nabla f_i(\yy),\xx-\yy} + \frac{L}{2}\norm{\xx-\yy}^2_2, \quad \forall \xx, \yy \in \R^d \label{eq:f-lsmooth}
	\end{align}
	Moreover, if in addition $F_i$ are convex functions, then 
	\begin{align}
	\|\nabla  f_i(\xx) - \nabla f_i(\yy) \|_2 &\leq L \norm{\xx-\yy}_2, && \forall \xx, \yy \in \R^d, \label{eq:f-lsmooth-convex}\\
	\norm{ \nabla g(\xx) - \nabla g(\yy)}_2^2 &\leq 2L \left(g(\xx)- g(\yy) - \lin{\xx-\yy,\nabla g(\yy)} \right)\,, && \forall \xx, \yy \in \R^d, \label{eq:lsmooth_norm}
	\end{align}
	where $g(\xx)$ is either $F_i$ or $f_i$. 
\end{proposition}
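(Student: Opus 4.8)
The plan is to establish the four displayed inequalities in order, starting from the per-sample Lipschitz-gradient bound \eqref{eq:F-smooth}, lifting it to the quadratic upper bound \eqref{eq:F-lsmooth}, transferring everything to $f_i$ by taking expectations, and finally deriving the co-coercivity estimate \eqref{eq:lsmooth_norm} where convexity enters.

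First I would prove the descent lemma \eqref{eq:F-lsmooth}. Since $F_i(\cdot,\xi)$ is differentiable, the fundamental theorem of calculus gives $F_i(\xx,\xi) - F_i(\yy,\xi) = \int_0^1 \lin{\nabla F_i(\yy + t(\xx-\yy),\xi),\, \xx-\yy}\,dt$. Subtracting $\lin{\nabla F_i(\yy,\xi),\xx-\yy}$ and bounding the resulting integrand by Cauchy--Schwarz together with \eqref{eq:F-smooth} yields $\int_0^1 Lt\,\norm{\xx-\yy}^2\,dt = \frac{L}{2}\norm{\xx-\yy}^2$, which is exactly \eqref{eq:F-lsmooth}.

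Next, to obtain \eqref{eq:f-lsmooth} and \eqref{eq:f-lsmooth-convex} I would pass to expectations over $\xi \sim \cD_i$. Using $f_i = \E_\xi F_i$ and $\nabla f_i = \E_\xi \nabla F_i$ (interchanging gradient and expectation), taking $\E_\xi$ of both sides of \eqref{eq:F-lsmooth} gives \eqref{eq:f-lsmooth}. For \eqref{eq:f-lsmooth-convex}, Jensen's inequality gives $\norm{\nabla f_i(\xx) - \nabla f_i(\yy)} = \norm{\E_\xi[\nabla F_i(\xx,\xi) - \nabla F_i(\yy,\xi)]} \leq \E_\xi \norm{\nabla F_i(\xx,\xi) - \nabla F_i(\yy,\xi)} \leq L\norm{\xx-\yy}$, so $f_i$ inherits the $L$-Lipschitz gradient; notably this step needs no convexity.

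The main work, and the only place where convexity of $g$ is genuinely used, is the co-coercivity bound \eqref{eq:lsmooth_norm}. For fixed $\yy$ I would introduce the auxiliary function $h(\zz) := g(\zz) - \lin{\nabla g(\yy),\zz}$, which is convex (as $g$ is) and $L$-smooth, and satisfies $\nabla h(\yy) = \0$, so $\yy$ is a global minimizer of $h$. Applying the descent lemma to $h$ at the point $\zz - \frac{1}{L}\nabla h(\zz)$ gives $h(\yy) \leq h\bigl(\zz - \frac{1}{L}\nabla h(\zz)\bigr) \leq h(\zz) - \frac{1}{2L}\norm{\nabla h(\zz)}^2$. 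Rearranging and substituting $\nabla h(\zz) = \nabla g(\zz) - \nabla g(\yy)$ and $h(\zz) - h(\yy) = g(\zz) - g(\yy) - \lin{\nabla g(\yy),\zz-\yy}$ with $\zz = \xx$ yields \eqref{eq:lsmooth_norm}; the same argument applies verbatim to $g = F_i$ and to $g = f_i$. The one subtle point to get right is that using $\yy$ as a minimizer of $h$ relies on convexity, which is precisely why this final inequality is stated only under the additional convexity hypothesis while the earlier bounds hold without it.
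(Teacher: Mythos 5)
Your proof is correct. The paper states this proposition without proof, treating all four inequalities as standard consequences of smoothness and convexity (cf.\ Nesterov's textbook, which the paper cites in its Remark on the two smoothness assumptions), and your argument is exactly the standard one: the fundamental-theorem-of-calculus derivation of the descent lemma, expectation and Jensen to transfer the bounds to $f_i$, and the auxiliary function $h(\zz) = g(\zz) - \lin{\nabla g(\yy),\zz}$ minimized at $\yy$ for the co-coercivity bound \eqref{eq:lsmooth_norm}. Your side observation that \eqref{eq:f-lsmooth-convex} actually follows from Jensen's inequality alone, without convexity, is also correct and slightly sharper than the grouping in the statement.
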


\begin{proposition}[Implications of the smoothness Assumption~\ref{a:lsmooth_nc}]
	From Assumption~\ref{a:lsmooth_nc} it follows that 
	\begin{align}
	f_i(\xx) \leq f_i(\yy) + \lin{\nabla f_i(\yy),\xx-\yy} + \frac{L}{2}\norm{\xx-\yy}^2_2, \quad \forall \xx, \yy \in \R^d \,. %
	\end{align}
\end{proposition}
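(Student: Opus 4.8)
The plan is to invoke the fundamental theorem of calculus along the line segment joining $\yy$ and $\xx$, which is justified since Assumption~\ref{a:lsmooth_nc} guarantees $f_i$ is differentiable on all of $\R^d$. First I would introduce the parametrization and write the exact identity
\begin{align*}
f_i(\xx) - f_i(\yy) = \int_0^1 \lin{\nabla f_i(\yy + t(\xx-\yy)), \xx - \yy} \, dt\,.
\end{align*}
Subtracting the linear term $\lin{\nabla f_i(\yy),\xx-\yy}$ from both sides (and noting this term equals $\int_0^1 \lin{\nabla f_i(\yy),\xx-\yy}\,dt$) collapses everything into a single integral of the gradient increment against the displacement:
\begin{align*}
f_i(\xx) - f_i(\yy) - \lin{\nabla f_i(\yy),\xx-\yy} = \int_0^1 \lin{\nabla f_i(\yy + t(\xx-\yy)) - \nabla f_i(\yy),\, \xx - \yy} \, dt\,.
\end{align*}

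The second step is to bound the integrand. I would apply the Cauchy--Schwarz inequality to the inner product, then control the gradient increment using the Lipschitz hypothesis~\eqref{eq:smooth_nc}, which gives $\norm{\nabla f_i(\yy + t(\xx-\yy)) - \nabla f_i(\yy)} \leq L \norm{t(\xx-\yy)} = Lt\norm{\xx-\yy}$. Combining these yields the pointwise bound $L t \norm{\xx-\yy}^2$ on the integrand.

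The final step is purely the scalar integral $\int_0^1 Lt\norm{\xx-\yy}^2\,dt = \tfrac{L}{2}\norm{\xx-\yy}^2$, which delivers the claimed quadratic upper bound. I do not expect any genuine obstacle here: this is the classical descent lemma and the only point requiring a moment's care is the appeal to the fundamental theorem of calculus, which is unproblematic because differentiability with a Lipschitz (hence continuous) gradient makes the integrand continuous and the integral well defined. This same argument, with $F_i$ in place of $f_i$ and the convexity-free version of~\eqref{eq:F-smooth} restricted to a fixed $\xi$, is also what underlies the analogous inequalities~\eqref{eq:F-lsmooth} and~\eqref{eq:f-lsmooth} stated earlier, so the structure carries over verbatim.
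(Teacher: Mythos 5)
Your proof is correct and is precisely the classical descent-lemma argument (fundamental theorem of calculus along the segment, Cauchy--Schwarz, then the Lipschitz bound on the gradient increment) that the paper implicitly relies on; the paper states this proposition without proof as a standard consequence of $L$-smoothness, citing \cite{Nesterov2004:book} for such facts elsewhere. Nothing is missing, and your closing remark that the same argument yields \eqref{eq:F-lsmooth} and \eqref{eq:f-lsmooth} is also accurate.
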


\subsection{Useful Inequalities}

\begin{lemma}\label{remark:norm_of_sum}
	For arbitrary set of $n$ vectors $\{\aa_i\}_{i = 1}^n$, $\aa_i \in \R^d$
	\begin{equation}\label{eq:norm_of_sum}
	\norm{\sum_{i = 1}^n \aa_i}^2 \leq n \sum_{i = 1}^n \norm{\aa_i}^2 \,.
	\end{equation}
\end{lemma}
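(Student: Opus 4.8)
The plan is to treat this as the classical power-mean inequality for vectors, which is a one-line consequence of either the convexity of the squared norm or the Cauchy--Schwarz inequality. Both routes are short; my preference is the convexity argument, with Cauchy--Schwarz as an equally valid alternative.

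For the convexity route, I would use that the map $\xx \mapsto \norm{\xx}^2$ is convex on $\R^d$, so Jensen's inequality applied to the uniform average of the $\aa_i$ gives
\begin{align*}
\norm{\frac{1}{n}\sum_{i=1}^n \aa_i}^2 \leq \frac{1}{n}\sum_{i=1}^n \norm{\aa_i}^2 \,.
\end{align*}
Multiplying both sides by $n^2$ and noting that pulling the factor $\frac{1}{n}$ out of the norm on the left produces $n^2 \cdot \frac{1}{n^2}\norm{\sum_i \aa_i}^2 = \norm{\sum_i \aa_i}^2$ yields exactly the claimed bound.

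Alternatively, via Cauchy--Schwarz, I would expand the squared norm as a double sum of inner products,
\begin{align*}
\norm{\sum_{i=1}^n \aa_i}^2 = \sum_{i=1}^n \sum_{j=1}^n \lin{\aa_i, \aa_j} \,,
\end{align*}
and bound each term by $\lin{\aa_i,\aa_j} \leq \frac{1}{2}\bigl(\norm{\aa_i}^2 + \norm{\aa_j}^2\bigr)$ (Young's inequality). Summing over all $i,j$ and exploiting the symmetry in $i \leftrightarrow j$, each $\norm{\aa_i}^2$ is counted exactly $n$ times, which produces the factor $n$ on the right-hand side.

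There is essentially no obstacle here: the statement is a textbook inequality and each derivation is only a few lines. The one point worth keeping in mind is that the constant $n$ is tight (equality holds when all the $\aa_i$ coincide), so no smaller factor is available by a cleverer argument---relevant because this factor later propagates into the consensus-distance recursions.
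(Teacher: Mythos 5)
Your proof is correct: the paper states this as a standard auxiliary inequality without giving any proof, and both of your derivations (Jensen applied to the convex map $\xx \mapsto \norm{\xx}^2$, or expanding the double sum and applying Young's inequality to each cross term) are valid and complete. Your remark that the constant $n$ is attained when all $\aa_i$ coincide is also accurate.
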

\begin{lemma}\label{remark:scal_product}
	For given two vectors $\aa, \bb \in \R^d$
	\begin{align}\label{eq:scal_product}
	&2\lin{\aa, \bb} \leq \gamma \norm{\aa}^2 + \gamma^{-1}\norm{\bb}^2\,, & &\forall \gamma > 0 \,.
	\end{align}
\end{lemma}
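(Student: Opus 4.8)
The plan is to obtain this inequality (a weighted version of Young's / Cauchy--Schwarz) by the standard completing-the-square argument, exploiting that squared norms are non-negative. The only input I need is the bilinearity of the inner product together with the relation $\norm{\vv}^2 = \lin{\vv,\vv}$, so no earlier result from the excerpt is actually required.

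Concretely, for a fixed $\gamma > 0$ I would introduce the scaled difference vector $\sqrt{\gamma}\,\aa - \gamma^{-1/2}\bb \in \R^d$ and start from the trivial bound $\norm{\sqrt{\gamma}\,\aa - \gamma^{-1/2}\bb}^2 \geq 0$. Expanding the square via bilinearity yields
\[
0 \leq \norm{\sqrt{\gamma}\,\aa - \gamma^{-1/2}\bb}^2 = \gamma \norm{\aa}^2 - 2 \lin{\aa,\bb} + \gamma^{-1}\norm{\bb}^2 \,,
\]
where the cross term contributes $-2\,\gamma^{1/2}\gamma^{-1/2}\lin{\aa,\bb} = -2\lin{\aa,\bb}$ and the two diagonal terms pick up the factors $\gamma$ and $\gamma^{-1}$ respectively. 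Rearranging this single inequality immediately gives $2\lin{\aa,\bb} \leq \gamma \norm{\aa}^2 + \gamma^{-1}\norm{\bb}^2$, which is exactly \eqref{eq:scal_product}, valid for every $\gamma > 0$.

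There is essentially no obstacle here: the statement is elementary and the proof is a one-line computation. The only mild ``design choice'' is selecting the correct scaling $\sqrt{\gamma}$ and $\gamma^{-1/2}$ inside the square so that the mixed term reproduces $-2\lin{\aa,\bb}$ with coefficient exactly $2$ while leaving the desired $\gamma$ and $\gamma^{-1}$ weights on the pure-norm terms; any other split would produce an inequality of the same type but with different constants. I would present the argument in this compact form rather than, say, reducing to the unweighted case and rescaling, since the completing-the-square step already handles the weight $\gamma$ directly.
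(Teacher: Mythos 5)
Your proof is correct: expanding $\norm{\sqrt{\gamma}\,\aa - \gamma^{-1/2}\bb}^2 \geq 0$ and rearranging is exactly the standard argument for this weighted Young inequality. The paper states this lemma without proof (treating it as elementary), and your completing-the-square derivation is the canonical way to establish it.
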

\begin{lemma}\label{remark:norm_of_sum_of_two}
	For given two vectors $\aa, \bb \in \R^d$ %
	\begin{align}\label{eq:norm_of_sum_of_two}
	\norm{\aa + \bb}^2 \leq (1 + \alpha)\norm{\aa}^2 + (1 + \alpha^{-1})\norm{\bb}^2,\,\, & &\forall \alpha > 0\,.
	\end{align}
	This inequality also holds for the sum of two matrices $A,B \in \R^{n \times d}$ in Frobenius norm.
\end{lemma}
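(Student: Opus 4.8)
The plan is to reduce this to the expansion of the squared norm followed immediately by the scalar-product bound already established in Lemma~\ref{remark:scal_product}. First I would use that the norm is induced by an inner product to write
\begin{align*}
\norm{\aa + \bb}^2 = \norm{\aa}^2 + 2 \lin{\aa, \bb} + \norm{\bb}^2\,.
\end{align*}
Then I would invoke Lemma~\ref{remark:scal_product} with the free parameter there chosen as $\gamma = \alpha$ for the given $\alpha > 0$, which gives $2\lin{\aa,\bb} \leq \alpha \norm{\aa}^2 + \alpha^{-1}\norm{\bb}^2$. Substituting this into the expansion and collecting the coefficients of $\norm{\aa}^2$ and $\norm{\bb}^2$ yields exactly $(1+\alpha)\norm{\aa}^2 + (1+\alpha^{-1})\norm{\bb}^2$, which proves the vector case.

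For the matrix statement I would observe that the Frobenius norm is precisely the Euclidean norm of the vectorized matrix. Identifying $A, B \in \R^{n \times d}$ with their column-stacked vectors in $\R^{nd}$, one has $\norm{A}_F^2 = \Tr(A^\top A)$ and, more importantly, the Frobenius inner product $\lin{A,B}_F = \Tr(A^\top B)$ is the same bilinear form over $\R^{nd}$ that underlies Lemma~\ref{remark:scal_product}. Hence the identical three-step argument—expansion, application of Lemma~\ref{remark:scal_product} with $\gamma = \alpha$, and collecting terms—applies verbatim to $A$ and $B$, establishing the matrix version.

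Since every step is either an exact equality or a direct application of an already-proved lemma, there is no genuine obstacle here. The only points requiring a moment's care are selecting $\gamma = \alpha$ (rather than any other positive value) so that the resulting coefficients line up precisely with the claimed right-hand side, and noting that no separate argument is needed for the Frobenius case because the Frobenius inner product is the same inner product to which Lemma~\ref{remark:scal_product} already applies.
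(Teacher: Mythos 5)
Your proof is correct and is exactly the standard argument this lemma implicitly relies on (the paper states it without proof as a routine consequence of Lemma~\ref{remark:scal_product}): expanding the square, applying $2\lin{\aa,\bb}\leq \gamma\norm{\aa}^2+\gamma^{-1}\norm{\bb}^2$ with $\gamma=\alpha$, and noting that the Frobenius norm is induced by the trace inner product so the identical argument covers the matrix case. Nothing is missing.
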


\begin{remark}\label{rem:frobenious_norm_of_matrix_mult}
	For $A\in \R^{d\times n}$, $B\in \R^{n\times n}$
	\begin{align}\label{eq:frob_norm_of_multiplication}
	\norm{AB}_F \leq \norm{A}_F \norm{B}_2 \,.
	\end{align}
\end{remark}

\subsection{$\tau$-slow Sequences}
\begin{definition}[$\tau$-slow sequences \cite{StichK19delays}]\label{def:tau-slow}
	The sequence $\{a_t\}_{t \geq 0}$ of positive values is \emph{$\tau$-slow decreasing} for parameter $\tau > 0$ if
	\begin{align*}
	a_{t + 1} \leq a_t,\quad  \forall t \geq 0 && \text{and}, && a_{t + 1}\left( 1 + \frac{1}{2\tau}\right) \geq a_t, \quad\forall t \geq 0 \,.
	\end{align*}
	The sequence $\{a_t\}_{t \geq 0}$ is \emph{$\tau$-slow increasing} if $\{a_t^{-1}\}_{t \geq 0}$ is $\tau$-slow decreasing.
\end{definition}
\begin{proposition}[Examples]\label{prop:examples-tau-slow}\hfill\null
	\begin{enumerate}
		\item The sequence $\{\eta_t^2\}_{t\geq 0}$ with $\eta_t = \frac{a}{b + t}$, $b \geq \frac{32}{p}$ is $\frac{4}{p}$-slow decreasing. 
		\item The sequence of constant stepsizes $\{\eta_t^2\}_{t\geq 0}$ with $\eta_t = \eta$ is $\tau$-slow decreasing for any $\tau$.
		\item The sequence $\{w_t \}_{t \geq 0}$ with $w_t = (1 - \frac{p}{64 \tau c})^{-t}$, $c \geq 1$ is $\frac{16 \tau}{p}$-slow increasing. 
		\item The sequence $\{w_t \}_{t \geq 0}$ with $w_t = (b + t)^2$, $b \geq \frac{128}{p}$ is $\frac{16}{p}$-slow increasing. 
		\item The sequence of constant weights $\{w_t\}_{t\geq 0}$ with $w_t = 1$ is $\tau$-slow increasing for any $\tau$.
	\end{enumerate}
\end{proposition}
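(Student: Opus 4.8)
The plan is to verify each of the five examples directly against the two defining inequalities in Definition~\ref{def:tau-slow}. Recall that $\{a_t\}$ is $\tau$-slow decreasing precisely when (i) $a_{t+1} \le a_t$ and (ii) $a_{t+1}\bigl(1 + \tfrac{1}{2\tau}\bigr) \ge a_t$ hold for every $t \ge 0$, and that a sequence is $\tau$-slow increasing exactly when its reciprocal is $\tau$-slow decreasing. So for the increasing claims (items 3, 4, 5) I would first pass to $w_t^{-1}$ and then check (i)--(ii) for that reciprocal sequence. Throughout I will use the standing facts that $p \in (0,1]$ and $\tau \ge 1$ (an integer, by Assumption~\ref{a:avg_distrib}).

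Two cases are immediate. For the constant stepsizes (item 2), $a_t = \eta^2$ gives $a_{t+1} = a_t$, so (i) holds with equality and (ii) holds because $1 + \tfrac{1}{2\tau} > 1$; this works for every $\tau$. The constant weights (item 5) reduce, after inverting, to exactly the same constant-sequence check, so these two require no computation.

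The substance is in the three genuinely varying sequences. For items 1 and 4 the relevant sequence (either $\eta_t^2 = a^2/(b+t)^2$ directly, or the reciprocal $1/(b+t)^2$ of $w_t$) is of the form $1/(b+t)^2$. Monotone decrease (i) is clear since $b+t$ is increasing. For (ii) I would divide through and reduce the claim to the single inequality
\begin{align*}
\Bigl(1 + \tfrac{1}{b+t}\Bigr)^2 \le 1 + \tfrac{1}{2\tau}\,,
\end{align*}
and then invoke the lower bound on $b$: in item 1, $b \ge 32/p$ and $\tau = 4/p$ give $\tfrac{1}{b+t} \le \tfrac{p}{32}$ and $\tfrac{1}{2\tau} = \tfrac{p}{8}$, so the left side is at most $1 + \tfrac{p}{16} + \tfrac{p^2}{1024} \le 1 + \tfrac{p}{8}$ because $p \le 1$ forces the quadratic tail to be dominated by the linear slack; item 4 is identical with the constants $128$ and $16/p$ replacing $32$ and $4/p$. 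For the geometric weights (item 3), inverting gives $y_t = \alpha^t$ with $\alpha = 1 - \tfrac{p}{64\tau c}$, which lies in $(0,1)$ since $\tfrac{p}{64\tau c} \le \tfrac{1}{64}$; decrease (i) is immediate, and the ratio $y_t/y_{t+1} = 1/\alpha$ is independent of $t$, so (ii) collapses to the single condition $\alpha\bigl(1 + \tfrac{p}{32\tau}\bigr) \ge 1$. Expanding this product, bounding the two linear-in-$p$ terms below by $\tfrac{p}{64\tau}$ via $c \ge 1$, and requiring this slack to dominate the negative quadratic term reduces everything to $32\tau c \ge p$, which holds trivially.

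The main obstacle is purely bookkeeping: confirming that the specific numerical constants in the hypotheses ($b \ge 32/p$, $b \ge 128/p$, and the factor $64\tau c$) are exactly large enough that the quadratic or cross terms are absorbed and the claimed slow parameters ($4/p$, $16/p$, $16\tau/p$) emerge precisely. There is no conceptual difficulty; the only care needed is to always bound the small quadratic remainders using $p \le 1$ (and $\tau, c \ge 1$) so that they are dominated by the corresponding linear-in-$p$ slack.
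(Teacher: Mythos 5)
The paper states Proposition~\ref{prop:examples-tau-slow} without any proof, treating the verification as routine, so there is no authorial argument to compare against; your direct check of the two inequalities in Definition~\ref{def:tau-slow} (after passing to reciprocals for the increasing claims) is exactly the intended argument, and your constant-tracking is correct in all five cases. The only cosmetic imprecision is in item~3, where the exact reduced condition is $32\tau(2c-1)\geq p$ rather than $32\tau c\geq p$, but both hold trivially under $c\geq 1$, $\tau\geq 1$, $p\leq 1$, so nothing is affected.
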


\section{Descent Lemmas and Consensus Recursions}
\label{sec:descentconsensus}
In this section, according to our proof sketch we derive descent \eqref{eq:rec1} and consensus recursions \eqref{eq:rec2} for both convex and also non-convex cases.

\subsection{Convex Cases}
We require both $f_i$ and $F_i$ to be convex.
We do not need Assumption~\ref{a:strong} to hold for all $\xx, \yy \in \R^d$ and we could weaken it to hold only for $\xx = \xx^\star$ and for all $\yy \in \R^d$. 

\begin{proposition}[Mini-batch variance]\label{rem:variance} Let functions $F_i(\xx, \xi)$ , $i \in [n]$ be $L$-smooth (Assumption~\ref{a:lsmooth}) with bounded noise at the optimum (Assumption~\ref{a:opt}). Then for any $\xx_i \in \R^d, i \in [n]$ and $\bar \xx := \frac{1}{n}\sum_{i=1}^n \xx_i$ it holds
	\begin{align*}\EE{\xi_1, \dots, \xi_n}{\norm{\frac{1}{n}\sum_{i = 1}^{n} \left(\nabla f_i(\xx_i) - \nabla F_i (\xx_i, \xi_i)\right)}}^2\leq \frac{3L^2}{n^2} \sum_{i = 1}^{n}  \norm{ \xx_i - \bar{\xx}}^2 + \frac{6L}{n}\left(f(\bar{\xx}) - f(\xx^\star)\right) + \frac{3 \bar{\sigma}^2 }{n} \,.
	\end{align*}
\end{proposition}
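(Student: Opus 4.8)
The plan is to exploit the independence of the per-node noise and then reduce everything to a bound at the optimum $\xx^\star$. Since the $\xi_i$ are sampled independently across nodes and the $\xx_i$ are fixed, each summand $\nabla f_i(\xx_i) - \nabla F_i(\xx_i,\xi_i)$ depends only on $\xi_i$ and has zero mean (by the definition~\eqref{eq:f_i} of $f_i$ as $\EE{\xi_i}{F_i(\cdot,\xi_i)}$). Hence all cross terms vanish and the variance of the average decouples:
\[
\EE{\xi_1,\dots,\xi_n}{\norm{\textstyle\frac1n\sum_{i=1}^n\bigl(\nabla f_i(\xx_i) - \nabla F_i(\xx_i,\xi_i)\bigr)}}^2 = \frac1{n^2}\sum_{i=1}^n \EE{\xi_i}{\norm{\nabla f_i(\xx_i) - \nabla F_i(\xx_i,\xi_i)}}^2 \,.
\]
This reduces the claim to bounding each per-node second moment in isolation.

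For a fixed $i$, I would first introduce the optimum using the bias--variance identity: since $\nabla f_i(\xx_i)$ is the mean of $\nabla F_i(\xx_i,\xi_i)$, replacing it by the constant vector $\nabla f_i(\xx^\star)$ only increases the expected squared norm,
\[
\EE{\xi_i}{\norm{\nabla F_i(\xx_i,\xi_i) - \nabla f_i(\xx_i)}}^2 \leq \EE{\xi_i}{\norm{\nabla F_i(\xx_i,\xi_i) - \nabla f_i(\xx^\star)}}^2 \,.
\]
I then split the right-hand side along the chain $\xx_i \to \bar\xx \to \xx^\star$ and peel off the noise, applying $\norm{\aa+\bb+\cc}^2 \leq 3\norm{\aa}^2 + 3\norm{\bb}^2 + 3\norm{\cc}^2$ to the decomposition into $\nabla F_i(\xx_i,\xi_i) - \nabla F_i(\bar\xx,\xi_i)$, then $\nabla F_i(\bar\xx,\xi_i) - \nabla F_i(\xx^\star,\xi_i)$, then $\nabla F_i(\xx^\star,\xi_i) - \nabla f_i(\xx^\star)$. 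The first piece is at most $3L^2\norm{\xx_i-\bar\xx}^2$ by the $L$-smoothness~\eqref{eq:F-smooth}, and the last contributes exactly $3\sigma_i^2$ by the definition~\eqref{eq:noise_opt}.

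The crux is the middle piece, where the function-value gap must appear with the right constant. Here I would invoke the co-coercivity bound~\eqref{eq:lsmooth_norm} (which uses convexity of $F_i$, assumed throughout the convex-case analysis) with $g = F_i(\cdot,\xi_i)$, $\xx = \bar\xx$, $\yy = \xx^\star$, giving, after taking expectations over $\xi_i$,
\[
\EE{\xi_i}{\norm{\nabla F_i(\bar\xx,\xi_i) - \nabla F_i(\xx^\star,\xi_i)}}^2 \leq 2L\bigl(f_i(\bar\xx) - f_i(\xx^\star) - \lin{\bar\xx - \xx^\star, \nabla f_i(\xx^\star)}\bigr) \,.
\]
Summing over $i$ and dividing by $n$, the linear correction collapses because $\frac1n\sum_{i}\nabla f_i(\xx^\star) = \nabla f(\xx^\star) = \0$ (first-order optimality of $\xx^\star$), leaving precisely $2L(f(\bar\xx)-f^\star)$. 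Reassembling the three contributions with the prefactor $\frac3{n^2}$ and using $\frac1n\sum_i\sigma_i^2 = \bar\sigma^2$ yields the stated constants $\frac{3L^2}{n^2}$, $\frac{6L}{n}$ and $\frac{3\bar\sigma^2}{n}$. There is no real obstacle beyond bookkeeping; the two points requiring care are that both sets of cross terms vanish---the noise cross terms by independence in the first display, and the linear correction by stationarity $\nabla f(\xx^\star)=\0$ in the last---and that the split is routed through $\bar\xx$ so that co-coercivity produces the gap $f(\bar\xx)-f^\star$ rather than a distance-to-optimum term that could not be converted without invoking strong convexity.
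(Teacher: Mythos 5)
Your proof is correct and follows essentially the same route as the paper's: independence to decouple the variance, the three-way split through $\bar\xx$ and $\xx^\star$, smoothness of $F_i$ for the first piece, co-coercivity (using convexity of $F_i$) for the middle, and the definition of $\sigma_i^2$ for the last. The only cosmetic difference is that you dispatch the centering once upfront via $\E\norm{Y-\E Y}^2\le\E\norm{Y-a}^2$, whereas the paper keeps the centered means inside the first two pieces and drops them afterwards via $\E\norm{Y-\E Y}^2\le\E\norm{Y}^2$; you are also more explicit than the paper's displayed computation about the linear correction vanishing through $\nabla f(\xx^\star)=\0$ upon summation over $i$.
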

\begin{proof}
	\begin{align*}
	\EE{\xi_1, \dots, \xi_n}{}&\norm{\frac{1}{n}\sum_{i = 1}^{n} \left(\nabla f_i(\xx_i) - \nabla F_i (\xx_i, \xi_i)\right)}^2 
	\leq  \frac{1}{n^2} \sum_{i = 1}^{n} \EE{\xi_i}{ \norm{ \nabla F_i(\xx_i, \xi_i) - \nabla f_i(\xx_i)}^2}\\
	& \leq \frac{3}{n^2} \sum_{i = 1}^{n} \EE{\xi_i}{\Big( \norm{ \nabla F_i(\xx_i, \xi_i) - \nabla F_i(\bar\xx, \xi_i) - \nabla f_i(\xx_i) + \nabla f_i(\bar\xx)}^2  }\\
	& \qquad\qquad + \norm{\nabla F_i(\bar\xx, \xi_i) - \nabla F_i(\xx^\star, \xi_i) - \nabla f_i(\bar\xx^{(t)}) + \nabla f_i(\xx^\star)}^2  + \norm{ \nabla F_i(\xx^\star, \xi_i) - \nabla f_i(\xx^\star)}^2 \Big)\\
	& \leq \frac{3}{n^2} \sum_{i = 1}^{n} \EE{\xi_i}{\left( \norm{ \nabla F_i(\xx_i^{(t)}, \xi_i^{(t)}) - \nabla F_i(\bar\xx, \xi_i)}^2  + \norm{\nabla F_i(\bar\xx, \xi_i) - \nabla F_i(\xx^\star, \xi_i)}^2  + \norm{ \nabla F_i(\xx^\star, \xi_i) - \nabla f_i(\xx^\star)}^2 \right)}\\
	& \leq \frac{3}{n^2} \sum_{i = 1}^{n} \left( L^2 \norm{ \xx_i^{(t)} - \bar{\xx}}^2 + 2L \left(f_i(\bar{\xx}^{(t)}) - f_i(\xx^\star)\right) + \sigma_i^2 \right),
	\end{align*}
	where we used that $\E\norm{Y - a}^2 = \E\norm{Y}^2 - \norm{a}^2 \leq \E\norm{Y}^2$ if $a = \E Y$.
\end{proof}

\begin{lemma}[Descent lemma for convex cases]\label{lem:stochastic_avg}
	Under Assumptions~\ref{a:lsmooth}, \ref{a:strong}, \ref{a:opt} and \ref{a:avg_distrib},
	the averages $\bar{\xx}^{(t)} := \frac{1}{n}\sum_{i=1}^n \xx_i^{(t)}$ of the iterates of Algorithm \ref{alg:random_W_decentr_sgd_matrix} with the stepsize $\eta_t \leq \frac{1}{12 L} $ satisfy 
	\begin{align}
	\begin{split}
	\EE{\bxi_1^{(t)},\dots,\bxi_n^{(t)}}{\|\bar{\xx}^{(t + 1)} - \xx^\star\|}^2 &\leq \left(1 - \dfrac{\eta_t\mu}{2}\right) {\norm{\bar{\xx}^{(t)} - \xx^\star}}^2 + \dfrac{\eta_t^2\bar{\sigma}^2}{n} - \eta_t \left(f(\bar{\xx}^{(t)}) - f^\star\right) + \eta_t \dfrac{3 L }{n} \sum_{i = 1}^{n}\norm{\bar{\xx}^{(t)} - \xx_i^{(t)}}^2,
	\end{split}
	\end{align}
	where %
	$\bar{\sigma}^2 = \frac{1}{n}\sum_{i = 1}^{n} \sigma_i^2$. 
\end{lemma}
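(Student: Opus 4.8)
The plan is to exploit that the gossip step leaves the \emph{average} of the iterates untouched, so that $\bar\xx^{(t)}$ follows an ordinary (noise-averaged) SGD step, and then to run a one-step strongly-convex descent argument, paying a correction for the fact that the local gradients are evaluated at the scattered points $\xx_i^{(t)}$ rather than at $\bar\xx^{(t)}$. First I would invoke Proposition~\ref{rem:average_preserve}: since every $W^{(t)}$ is doubly stochastic the averaging does not move the mean, hence $\bar\xx^{(t+1)} = \bar\xx^{(t)} - \eta_t \gg^{(t)}$ with $\gg^{(t)} := \frac1n\sum_{i=1}^n \nabla F_i(\xx_i^{(t)},\xi_i^{(t)})$. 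Expanding $\norm{\bar\xx^{(t+1)} - \xx^\star}^2$ and taking $\E$ over the independent $\bxi_i^{(t)}$ leaves the three usual pieces: the frozen term $\norm{\bar\xx^{(t)}-\xx^\star}^2$, a cross term $-2\eta_t\lin{\bar\xx^{(t)}-\xx^\star,\bar\gg^{(t)}}$ with $\bar\gg^{(t)} := \frac1n\sum_i\nabla f_i(\xx_i^{(t)})$, and a second-moment term $\eta_t^2\,\E\norm{\gg^{(t)}}^2$.

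The cross term is where heterogeneity enters and is the conceptual heart of the argument. I would split, for each node, $\lin{\nabla f_i(\xx_i),\bar\xx-\xx^\star} = \lin{\nabla f_i(\xx_i),\xx_i-\xx^\star} + \lin{\nabla f_i(\xx_i),\bar\xx-\xx_i}$, lower-bound the first summand by $\mu$-convexity~\eqref{eq:strongconv} and the second by the smoothness descent inequality~\eqref{eq:f-lsmooth}. The $f_i(\xx_i)$ contributions then cancel, and averaging over $i$ yields $\lin{\bar\xx-\xx^\star,\bar\gg} \ge f(\bar\xx)-f^\star + \frac{\mu}{2n}\sum_i\norm{\xx_i-\xx^\star}^2 - \frac{L}{2n}\sum_i\norm{\bar\xx-\xx_i}^2$. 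Jensen's inequality $\frac1n\sum_i\norm{\xx_i-\xx^\star}^2 \ge \norm{\bar\xx-\xx^\star}^2$ converts the middle term into the desired contraction $\frac{\mu}{2}\norm{\bar\xx-\xx^\star}^2$, while the last term is exactly the consensus-distance correction that the statement carries.

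For the second moment I would write $\E\norm{\gg}^2 = \norm{\bar\gg}^2 + \E\norm{\gg-\bar\gg}^2$ (the cross term vanishes since $\E\gg=\bar\gg$). I would bound $\norm{\bar\gg}^2 \le 2\norm{\nabla f(\bar\xx)}^2 + \frac{2L^2}{n}\sum_i\norm{\xx_i-\bar\xx}^2$ using Lemma~\ref{remark:norm_of_sum_of_two}, Lemma~\ref{remark:norm_of_sum} and the $L$-Lipschitzness of each $\nabla f_i$~\eqref{eq:f-lsmooth-convex}, and then apply~\eqref{eq:lsmooth_norm} with $\nabla f(\xx^\star)=0$ to get $\norm{\nabla f(\bar\xx)}^2 \le 2L\,(f(\bar\xx)-f^\star)$. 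The genuine variance $\E\norm{\gg-\bar\gg}^2$ is controlled directly by Proposition~\ref{rem:variance}, contributing the stochastic term of order $\eta_t^2\bar\sigma^2/n$ together with further function-value and consensus pieces. Note that both this variance bound and the cross-term estimate are routed through $\xx^\star$, the only point at which Assumptions~\ref{a:strong} and~\ref{a:opt} are available; this is precisely what forces the consensus-distance term $\frac{3L}{n}\sum_i\norm{\bar\xx-\xx_i}^2$ to appear.

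The final step is bookkeeping: I would collect all contributions as coefficients of $\norm{\bar\xx-\xx^\star}^2$, of $(f(\bar\xx)-f^\star)$, of the consensus sum $\sum_i\norm{\bar\xx-\xx_i}^2$, and of the constant $\bar\sigma^2/n$, and then verify that the stepsize condition $\eta_t\le\frac{1}{12L}$ makes the $\cO(\eta_t^2)$ contributions small enough that the $(f(\bar\xx)-f^\star)$ coefficient stays $\le -\eta_t$ and the consensus coefficient stays $\le \eta_t\frac{3L}{n}$, while the contraction factor collapses to $(1-\frac{\eta_t\mu}{2})$. I expect this constant-tracking to be the main obstacle in practice: several $\eta_t^2$ terms (from $\norm{\bar\gg}^2$, from the variance bound, and from the smoothness correction) must simultaneously be dominated by the linear-in-$\eta_t$ terms under a single stepsize threshold, and one must keep the heterogeneity-induced consensus term cleanly separated so that it can be absorbed later by the consensus recursion~\eqref{eq:consensus_recursion}.
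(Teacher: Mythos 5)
Your proposal is correct and follows essentially the same route as the paper's proof: average preservation, the mean/variance split of the stochastic gradient, the per-node splitting of the inner product through $\xx_i^{(t)}$ with $\mu$-convexity and smoothness, the bound on $\bigl\|\frac1n\sum_i\nabla f_i(\xx_i^{(t)})\bigr\|^2$ via $\nabla f_i(\bar\xx^{(t)})$ and $\xx^\star$, and Proposition~\ref{rem:variance} for the noise. The only (harmless) deviation is that you convert $-\frac{\mu}{n}\sum_i\norm{\xx_i^{(t)}-\xx^\star}^2$ into the contraction via Jensen's inequality, whereas the paper uses $\norm{\bar\xx^{(t)}-\xx^\star}^2\le 2\norm{\bar\xx^{(t)}-\xx_i^{(t)}}^2+2\norm{\xx_i^{(t)}-\xx^\star}^2$; your variant is slightly cleaner and yields the same (in fact a marginally stronger) bound.
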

\begin{proof} Because all mixing matrixes preserve the average (Proposition~\ref{rem:average_preserve}), we have
	\begin{align*}
	\norm{\bar{\xx}^{(t + 1)} - \xx^\star}^2 
	&= \norm{\bar{\xx}^{(t)} - \frac{\eta_t}{n}\sum_{i = 1}^n  \nabla F_i(\xx_i^{(t)}, \xi_i^{(t)}) - \xx^\star}^2 \\
	&= \norm{\bar{\xx}^{(t)} - \xx^\star - \frac{\eta_t}{n} \sum_{i = 1}^{n}\nabla f_i(\xx_i^{(t)}) + \frac{\eta_t}{n} \sum_{i = 1}^{n}\nabla f_i(\xx_i^{(t)}) - \frac{\eta_t}{n}\sum_{i = 1}^n \nabla F_i(\xx_i^{(t)}, \xi_i^{(t)})}^2 \\
	&= \norm{\bar{\xx}^{(t)} - \xx^\star - \frac{\eta_t}{n} \sum_{i = 1}^{n}\nabla f_i(\xx_i^{(t)})}^2
	+\eta_t^2 \norm{\frac{1}{n} \sum_{i = 1}^{n}\nabla f_i(\xx_i^{(t)}) - \frac{1}{n}\sum_{i = 1}^n \nabla F_i(\xx_i^{(t)}, \xi_i^{(t)})}^2+\\
	&\qquad {}+ \frac{2\eta_t}{n}\left\langle  \bar{\xx}^{(t)} - \xx^\star - \frac{\eta_t}{n} \sum_{i = 1}^{n}\nabla f_i(\xx_i^{(t)}), \sum_{i = 1}^{n}\nabla f_i(\xx_i^{(t)}) - \sum_{i = 1}^n \nabla F_i(\xx_i^{(t)}, \xi_i^{(t)}) \right\rangle \,.
	\end{align*}
	The last term is zero in expectation, as $\EE{\xi_i^{(t)}}{\nabla F_i(\xx_i^{(t)}, \xi_i^{(t)}) } = \nabla f_i(\xx_i^{(t)})$. The second term is estimated using Proposition~\ref{rem:variance}.

	The first term can be written as:
	\begin{align*}
	\norm{\bar{\xx}^{(t)} - \xx^\star - \frac{\eta_t}{n} \sum_{i = 1}^{n}\nabla f_i(\xx_i^{(t)})}^2 = \norm{\bar{\xx}^{(t)} - \xx^\star}^2 + \eta_t^2 \underbrace{\norm{\frac{1}{n}\sum_{i = 1}^{n}\nabla f_i(\xx_i^{(t)})}^2}_{=: T_1} - \underbrace{2\eta_t \lin{\bar{\xx}^{(t)} - \xx^\star, \frac{1}{n}\sum_{i = 1}^{n}\nabla f_i(\xx_i^{(t)})}}_{=: T_2} \,.
	\end{align*}
	We can estimate 
	\begin{align*}
	T_1 &=
	\left\| \frac{1}{n}\sum_{i = 1}^{n}(\nabla f_i(\xx_i^{(t)}) -\nabla f_i(\bar{\xx}^{(t)}) + \nabla f_i(\bar{\xx}^{(t)}) -  \nabla f_i(\xx^\star))\right\|^2 \\
	&\stackrel{\eqref{eq:norm_of_sum}}{\leq}\frac{2}{n} \sum_{i = 1}^{n} \norm{\nabla f_i(\xx_i^{(t)}) - \nabla f_i(\bar{\xx}^{(t)})}^2  + 2\norm{\frac{1}{n} \sum_{i = 1}^{n} \nabla f_i(\bar{\xx}^{(t)}) - \frac{1}{n} \sum_{i = 1}^{n} \nabla f_i(\xx^\star)}^2 \\
	&\stackrel{\eqref{eq:f-lsmooth-convex},\eqref{eq:lsmooth_norm}}{\leq} \dfrac{2L^2}{n} \sum_{i = 1}^{n} \norm{\xx_i^{(t)} - \bar{\xx}^{(t)}}^2 + \dfrac{4L}{n}\sum_{i = 1}^{n} \left(f_i(\bar{\xx}^{(t)}) - f_i(\xx^\star)\right)
	\\& = \dfrac{2L^2}{n} \sum_{i = 1}^{n} \norm{\xx_i^{(t)} - \bar{\xx}^{(t)}}^2 + 4L \left(f(\bar{\xx}^{(t)}) - f^\star\right)\,.
	\end{align*}
	
	And for the remaining $T_2$ term:

	\begin{align*}
	- \frac{1}{\eta_t} T_2
	&= - \frac{2}{n} \sum_{i = 1}^{n}\left[\lin{\bar{\xx}^{(t)}  - \xx_i^{(t)}, \nabla f_i(\xx_i^{(t)})} + \lin{\xx_i^{(t)} - \xx^\star, \nabla f_i(\xx_i^{(t)})}\right]
	\\ &\stackrel{\eqref{eq:f-lsmooth},\eqref{eq:strongconv}}{\leq} - \dfrac{2}{n}\sum_{i = 1}^{n} \left[ f_i(\bar{\xx}^{(t)}) - f_i(\xx_i^{(t)}) - \dfrac{L}{2} \norm{\bar{\xx}^{(t)} - \xx_i^{(t)}}^2 + f_i(\xx_i^{(t)}) - f_i(\xx^\star) + \dfrac{\mu}{2} \norm{\xx_i^{(t)} - \xx^\star}^2\right] \\
	&{}\stackrel{\eqref{eq:norm_of_sum}}{\leq} - 2 \left(f(\bar{\xx}^{(t)}) - f(\xx^\star)\right) + \dfrac{L + \mu}{n}\sum_{i = 1}^{n} \norm{\bar{\xx}^{(t)} - \xx_i^{(t)}}^2 - \dfrac{\mu}{2} \norm{\bar{\xx}^{(t)} - \xx^\star}^2\,,
	\end{align*}
	Where at the last step \eqref{eq:norm_of_sum} was applied to $\norm{\bar{\xx}^{(t)} - \xx^\star}^2 \leq 2 \norm{\bar{\xx}^{(t)} - \xx_i^{(t)}}^2  + 2 \norm{{\xx}_i^{(t)} - \xx^\star}^2$.
	Putting everything together and using that $\eta_t \leq \frac{1}{12L}$ we are getting statement of the lemma.%
\end{proof}

\begin{lemma}[Recursion for consensus distance]\label{lem:consensus}
	Under Assumptions~\ref{a:lsmooth}, \ref{a:strong}, \ref{a:opt} and \ref{a:avg_distrib}, if in addition functions $F_i$ are convex and if stepsizes $\eta_t \leq  \frac{p}{96\sqrt{6}\tau  L}$, then 
	\begin{align*}
		\Xi_{t} & \leq \left( 1 - \frac{p}{2}\right) \Xi_{m\tau} + \frac{p}{64 \tau}  \sum_{j = m\tau}^{t - 1} \Xi_j + 72  \frac{\tau}{p} L\sum_{j = m\tau}^{t - 1} \eta_j^2 \left(f(\bar\xx^{(j)}) - f(\xx^\star)\right) + \left(8 \bar\sigma^2  + \frac{18\tau}{p} \bar\zeta^2\right) \sum_{j = m\tau}^{t - 1} \eta_j^2,
	\end{align*}
	where $\Xi_{t} = \frac{1}{n}\E{\sum_{i=1}^n  \norm{\xx_i^{(t)}-\bar \xx^{(t)}}^2}$ is a consensus distance, $m = \lfloor {t / \tau} \rfloor  - 1$. %
\end{lemma}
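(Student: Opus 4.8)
The plan is to unroll the matrix recursion for the centered iterates $Y^{(t)} := X^{(t)} - \bar X^{(t)}$ over one full window of $\tau$ mixing steps, and to separate the accumulated stochastic gradients into a mean (bias) part and a zero-mean (noise) part, which are controlled by different tools. First I would set $G^{(j)} := \partial F(X^{(j)},\xi^{(j)})$, $\bar G^{(j)} := G^{(j)}\tfrac{\1\1^\top}{n}$ and $\Delta^{(j)} := G^{(j)} - \bar G^{(j)}$, and record the one-step identity $Y^{(j+1)} = (Y^{(j)} - \eta_j \Delta^{(j)})W^{(j)}$. This holds because every $W^{(j)}$ preserves averages (Proposition~\ref{rem:average_preserve}), so $\bar X^{(j)}$ and $\bar G^{(j)}$ are fixed by $W^{(j)}$. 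Unrolling from $m\tau$ to $t$ then gives
\begin{align*}
Y^{(t)} = Y^{(m\tau)}\Phi_{m\tau,t} \;-\; \sum_{j=m\tau}^{t-1}\eta_j \Delta^{(j)}\Phi_{j,t}\,,
\end{align*}
where $\Phi_{a,t} := W^{(a)}\cdots W^{(t-1)}$ is a product of doubly stochastic matrices, so $\norm{\Phi_{a,t}}_2 \le 1$ (Birkhoff) and $\norm{\,\cdot\,\Phi_{a,t}}_F \le \norm{\,\cdot\,}_F$ by Remark~\ref{rem:frobenious_norm_of_matrix_mult}.

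Next I would write $\Delta^{(j)} = \mathrm{bias}^{(j)} + N^{(j)}$, where $\mathrm{bias}^{(j)}$ centers the exact gradients $\nabla f_i(\xx_i^{(j)})$ and $N^{(j)}$ is the centered stochastic fluctuation, which is conditionally zero-mean and, since the gossip matrices are independent of the gradient sampling, uncorrelated across $j$ and with everything else. Taking $\E\norm{\,\cdot\,}_F^2$, the noise term decouples: by this martingale/independence structure the cross terms vanish, so $\E\norm{\sum_j \eta_j N^{(j)}\Phi_{j,t}}_F^2 = \sum_j \eta_j^2 \E\norm{N^{(j)}\Phi_{j,t}}_F^2 \le \sum_j \eta_j^2\E\norm{N^{(j)}}_F^2$ --- crucially with \emph{no} factor counting the $t-m\tau$ summands. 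For the remaining part $Y^{(m\tau)}\Phi_{m\tau,t} - \sum_j \eta_j\, \mathrm{bias}^{(j)}\Phi_{j,t}$ I would apply Lemma~\ref{remark:norm_of_sum_of_two} with $\alpha = p/2$ to peel the contraction term off the bias sum, and then Lemma~\ref{remark:norm_of_sum} to the bias sum, which now pays the factor $t - m\tau < 2\tau$.

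For the contraction term I would use the key structural step: factor $\Phi_{m\tau,t}$ into the full aligned block $W^{(m\tau)}\cdots W^{((m+1)\tau-1)}$ times the residual partial-block product, discard the residual via $\norm{\,\cdot\,}_2\le 1$, and then invoke Assumption~\ref{a:avg_distrib} conditionally on the history $\cF_{m\tau}$ (where $Y^{(m\tau)}$ is fixed) to get $\E\norm{Y^{(m\tau)}\Phi_{m\tau,t}}_F^2 \le (1-p)\,n\Xi_{m\tau}$. With $\alpha = p/2$ this yields the leading factor $(1+\tfrac{p}{2})(1-p)\le 1-\tfrac{p}{2}$ in front of $\Xi_{m\tau}$. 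It then remains to bound the per-step quantities $\tfrac1n\E\norm{\mathrm{bias}^{(j)}}_F^2$ and $\tfrac1n\E\norm{N^{(j)}}_F^2$. Using convexity and smoothness (inequalities \eqref{eq:f-lsmooth}, \eqref{eq:f-lsmooth-convex}, \eqref{eq:lsmooth_norm}) I would show the Bregman-type estimate $\tfrac1n\sum_i [f_i(\xx_i^{(j)}) - f_i(\xx^\star) - \lin{\nabla f_i(\xx^\star),\xx_i^{(j)}-\xx^\star}] \le 2(f(\bar\xx^{(j)})-f^\star) + L\Xi_j$ (comparing $\xx_i^{(j)}$ to $\xx^\star$ through $\bar\xx^{(j)}$), giving $\tfrac1n\E\norm{\mathrm{bias}^{(j)}}_F^2 \le 8L(f(\bar\xx^{(j)})-f^\star) + 4L^2\Xi_j + 2\bar\zeta^2$; the noise is bounded directly by the internal estimate of Proposition~\ref{rem:variance} as $\tfrac1n\E\norm{N^{(j)}}_F^2 \le 3L^2\Xi_j + 6L(f(\bar\xx^{(j)})-f^\star) + 3\bar\sigma^2$. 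Collecting terms, the bias enters with prefactor $\tfrac{6\tau}{p}$ and the noise with prefactor $1$: the $(f(\bar\xx^{(j)})-f^\star)$ pieces sum to at most $72\tfrac{\tau}{p}L$, the $\bar\zeta^2$ piece to $\tfrac{18\tau}{p}$, and the $\bar\sigma^2$ piece to at most $8$ (no $1/p$, precisely because the noise escaped the $\tau$-factor), while the $\Xi_j$ pieces, with combined prefactor at most $\tfrac{27\tau L^2}{p}\eta_j^2$, are absorbed into $\tfrac{p}{64\tau}\sum_j\Xi_j$ exactly when $\eta_j \le \tfrac{p}{96\sqrt6\,\tau L}$.

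The main obstacle is the bookkeeping that keeps the stochastic-noise term strictly separated from the bias term: the bias is a sum of up to $2\tau$ correlated matrices and must pay the $\tfrac{\tau}{p}$ blow-up, whereas the noise is a martingale sum and must \emph{not} pay it --- this is exactly what produces the asymmetric dependence ($\bar\zeta^2$ carrying a $\tfrac{\tau}{p}$ factor but $\bar\sigma^2$ not). A secondary technical point is the peeling of one aligned $\tau$-block out of the longer product $\Phi_{m\tau,t}$, so that Assumption~\ref{a:avg_distrib} applies to exactly $\tau$ consecutive matrices while the leftover partial block is dropped through the spectral-norm bound.
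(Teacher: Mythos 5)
Your overall architecture matches the paper's: unroll the centered iterates over the window $[m\tau,t)$, contract the initial term through one aligned $\tau$-block of mixing matrices via Assumption~\ref{a:avg_distrib} (dropping the leftover partial block by $\norm{W}_2\le 1$), bound the per-step gradient and noise magnitudes by passing through $\bar\xx^{(j)}$ and $\xx^\star$ with smoothness and convexity, and absorb the $\Xi_j$ terms using the stepsize restriction. All of that is sound, and your explicit peeling of exactly one aligned $\tau$-block before invoking Assumption~\ref{a:avg_distrib} is, if anything, stated more carefully than in the paper. Your per-step constants differ slightly from the paper's but still land under the claimed coefficients.

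The gap is in the one step you yourself flag as the main obstacle. After writing $\Delta^{(j)}=\mathrm{bias}^{(j)}+N^{(j)}$ you assert that the noise is ``uncorrelated \dots with everything else,'' so that $\E\norm{Y^{(t)}}_F^2$ splits into $\E\norm{U}_F^2+\E\norm{V}_F^2$ with $U$ the initial-plus-bias part and $V=\sum_j\eta_j N^{(j)}\Phi_{j,t}$. The cross terms \emph{inside} $V$ do vanish (for $j<k$, condition on all mixing matrices and on $\xi^{(0)},\dots,\xi^{(k-1)}$), and so do the cross terms of $V$ against $Y^{(m\tau)}\Phi_{m\tau,t}$. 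But $\E\lin{\mathrm{bias}^{(j)}\Phi_{j,t},\,N^{(k)}\Phi_{k,t}}$ for $j>k$ does \emph{not} vanish: $\mathrm{bias}^{(j)}$ is a function of $X^{(j)}$, which depends on $\xi^{(k)}$, so it is genuinely correlated with $N^{(k)}$. This is exactly the difficulty the paper's proof is organized around (``since $X^{(t-1)}$ depends on $\xi^{(t-2)}$, we first need to separate the term with $\partial f(X^{(t-1)})$''): the noise terms must be peeled off in reverse chronological order, and before each peeling the most recent bias term is removed by a Young step with parameter $\beta_s=\frac{1}{C-s}$, so that after at most $2\tau$ rounds the noise terms have accumulated only the constant factor $\frac{C}{C+j-(t-1)}\le 2$ while the bias terms carry the factor $C=2\tau(1+\frac{2}{p})$. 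A naive repair of your version---an outer Young's inequality between $U$ and $V$---does not work: keeping the coefficient of $\Xi_{m\tau}$ at $1-\frac{p}{2}$ forces the Young parameter to be of order $p$ or smaller, which multiplies the $\bar\sigma^2$ contribution by at least $\frac{1}{p}$ and destroys precisely the asymmetry (bias pays $\frac{\tau}{p}$, noise does not) that you correctly identify as the point of the lemma. So the target inequality is right and most of your scaffolding is right, but the central cancellation needs the sequential peeling (or an equivalent device); the global orthogonality claim as stated is false.
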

\begin{proof}
	Using matrix notation \eqref{eq:matrix_notation}, for $t \geq \tau$
	\begin{align*}
	n\Xi_{t} & = \E{\norm{X^{(t)} - \bar X^{(t)}}_F^2 } = \E \norm{X^{(t)}  - \bar X^{(m \tau)  }- \left(\bar X^{(t)} - \bar X^{(m \tau)  } \right)}_F^2 \leq \E \norm{X^{(t)}  - \bar X^{(m \tau)  }}_F^2,
	\end{align*}
	where we used that $\norm{A - \bar{A}}_F^2 = \sum_{i = 1}^n \norm{\aa_i - \bar \aa}_2^2 \leq \sum_{i = 1}^n \norm{\aa_i }_2^2 = \norm{A}_F^2$. Unrolling $X^{(t)}$ up to $X^{(m\tau)}$ using lines 3--4 of the Algorithm~\ref{alg:decentr_sgd_matrix},
	\begin{align*}
	n \Xi_{t}
	&\leq  \E \norm{X^{(m\tau)} \prod_{i = t - 1}^{m\tau} W^{(i)} - \bar X^{(m \tau)} - \sum_{j = m\tau}^{t - 1} \eta_j \partial F(X^{(j)}, \xi^{(j)})\prod_{i = t - 1}^{j} W^{(i)} }_F^2 \\
	&= \E \norm{X^{(m\tau)} \prod_{i = t - 1}^{m\tau} W^{(i)} - \bar X^{(m \tau)} - \sum_{j = m\tau}^{t - 1} \eta_j \partial f(X^{(j)}) \prod_{i = t - 1}^{j} W^{(i)} - \sum_{j = m\tau}^{t - 2} \eta_j \left(\partial F(X^{(j)}, \xi^{(j)}) - \partial f(X^{(j)})\right)\prod_{i = t - 1}^{j} W^{(i)} }_F^2 \\
	& \qquad \qquad + \E \norm{\eta_{t -1 } \left( \partial F(X^{(t - 1)}, \xi^{(t - 1)}) - \partial f(X^{(t - 1)}) \right) \prod_{i = t - 1}^{j} W^{(i)} }_F^2
	\end{align*}
	where we used that $\E \partial F(X^{(t - 1)}, \xi^{(t - 1)}) = \partial f(X^{(t - 1)})$ and that $\xi^{(t - 1)}$ is independent of the rest. To separate the rest of the stochastic terms similar way (terms with $\partial F(X^{(j)}, \xi^{(j)}) - \partial f(X^{(j)})$), since $X^{(t - 1)}$ depends on $\xi^{(t - 2)}$, we first need to separate the term with $\partial f(X^{(t - 1)})$. Let $\beta_1 = \frac{1}{C - 1}$ for some constant $C$ which we will define later,
	\begin{align*}
	n \Xi_{t} & \stackrel{\eqref{eq:norm_of_sum_of_two},\eqref{eq:frob_norm_of_multiplication}}{\leq} \underbrace{(1 + \beta_1)}_{= \frac{C}{C - 1}} \E \norm{X^{(m\tau)} \prod_{i = t - 1}^{m\tau} W^{(i)} - \bar X^{(m \tau)} - \left(\sum_{j = m\tau}^{t - 2} \eta_j \partial f(X^{(j)}) - \sum_{j = m\tau}^{t - 2} \eta_j \left(\partial F(X^{(j)}, \xi^{(j)}) - \partial f(X^{(j)})\right)\right)\prod_{i = t - 1}^{j} W^{(i)} }_F^2 \\
	& \qquad \qquad + \underbrace{(1 + \beta_1^{-1})}_{= C}\E \norm{\eta_{t -1 } \partial f(X^{(t - 1)})}_F^2 + \eta_{t -1 }^2\E \norm{  \partial F(X^{(t - 1)}, \xi^{(t - 1)}) - \partial f(X^{(t - 1)}) }_F^2,
	\end{align*}
	Now, similarly, we split terms that depend on $X^{(t - 2)}$ with $\beta_2 = \frac{1}{C - 2}$. Note that $(1 + \beta_1)(1 + \beta_2^{-1}) = C$ and $(1 + \beta_1)(1 + \beta_2) = \frac{C}{C - 2}$:
	\begin{align*}
	n \Xi_{t} &\leq \frac{C}{C - 2} \E \norm{X^{(m\tau)} \prod_{i = t - 1}^{m\tau} W^{(i)} - \bar X^{(m \tau)} - \left(\sum_{j = m\tau}^{t - 2} \eta_j \partial f(X^{(j)}) - \sum_{j = m\tau}^{t - 2} \eta_j \left(\partial F(X^{(j)}, \xi^{(j)}) - \partial f(X^{(j)})\right)\right)\prod_{i = t - 1}^{j} W^{(i)} }_F^2 \\
	& \qquad \qquad + C \sum_{j = t - 2}^{t - 1} \E \norm{\eta_{j} \partial f(X^{(j)})}_F^2 + \sum_{j = t - 2}^{t - 1} \frac{C}{C + j - (t - 1)} \eta_{j}^2\E \norm{  \partial F(X^{(j)}, \xi^{(j)}) - \partial f(X^{(j)}) }_F^2,
	\end{align*}
	Splitting the same way the rest of the terms and using that $\frac{C}{C + j - (t - 1)} \leq 2$ for $C \geq 2\tau$,
	\begin{align*}
	n \Xi_{t} & \leq \frac{C}{C - 2\tau} \E \norm{X^{(m\tau)} \prod_{i = t - 1}^{m\tau} W^{(i)}  - \bar X^{(m \tau)} }^2_F + C \sum_{j = m \tau}^{t - 1} \E \eta_{j}^2 \norm{\partial f(X^{(j)})}_F^2 + \sum_{j = m \tau}^{t - 1} 2 \eta_{j}^2\E \norm{  \partial F(X^{(j)}, \xi^{(j)}) - \partial f(X^{(j)}) }_F^2,
	\end{align*}
	Taking $C = 2 \tau (1 + \frac{2}{p})$ and using \eqref{eq:p} to bound the first term we get that
	\begin{align*}
	n \Xi_{t} & \leq \left(1 - \frac{p}{2}\right) \E \norm{X^{(m\tau)}  - \bar X^{(m \tau)} }^2_F + \frac{6\tau}{p} \sum_{j = m \tau}^{t - 1} \eta_{j}^2 \E \underbrace{\norm{\partial f(X^{(j)})}_F^2}_{:=T_1} + \sum_{j = m \tau}^{t - 1} 2 \eta_{j}^2 \underbrace{\E \norm{  \partial F(X^{(j)}, \xi^{(j)}) - \partial f(X^{(j)}) }_F^2}_{:=T_2},
	\end{align*}
	Estimating separately the last two terms, and using the notation $\pm a = a - a = 0 ~ \forall a$,
	\begin{align*}
	T_1 &= \E \norm{\partial f(X^{(j)}) \pm \partial f(\bar X^{(j)}) \pm \partial f(X^\star) }_F^2 \stackrel{\eqref{eq:norm_of_sum}}{\leq} 3 \E \norm{\partial f(X^{(j)}) - \partial f(\bar X^{(j)})}_F^2 + 3 \E\norm{\partial f(\bar X^{(j)}) - \partial f(X^\star)}_F^2 + 3 \norm{\partial f(X^\star)}_F^2 \\
	& \stackrel{\eqref{eq:F-smooth} , \eqref{eq:lsmooth_norm}, \eqref{eq:grad_opt} }{\leq} 3 \left( L^2 \E \norm{X^{(j)} -  \bar X^{(j)}}_F^2 + 2 L n \E\left( f(\bar{\xx}^{(j)}) - f(\xx^\star)\right) + n \bar\zeta^2 \right)
	\end{align*}
	\begin{align*}
	T_2 & = \E \norm{   \partial F(X^{(j)}, \xi^{(j)}) \pm \partial F(\bar X^{(j)}, \xi^{(j)}) \pm \partial F(X^\star, \xi^{(j)}) - \partial f(X^{(j)}) \pm  \partial f(\bar X^{(j)}) \pm \partial f(X^\star)}_F^2\\
	& \stackrel{\eqref{eq:norm_of_sum}, \eqref{eq:F-smooth}, \eqref{eq:lsmooth_norm}}{\leq} 4 \E \left(4 L^2 \norm{X^{(j)} -  \bar X^{(j)}}_F^2  + 4 L n \E\left( f(\bar{\xx}^{(j)}) - f(\xx^\star)\right) + \norm{\partial F(X^\star, \xi^{(j)}) - \partial f(X^\star)}_F^2 \right),
	\end{align*}
	where the last term is bounded by $n \bar \sigma^2$ by definition \eqref{eq:noise_opt}. Putting back estimates for $T_1$ and $T_2$ and using that $\eta_t \leq \frac{p}{96\sqrt{6}\tau  L}$ we arrive to the statement of the lemma.
\end{proof}

This recursion in Lemma~\ref{lem:consensus} holds only when $t \geq (m +1)\tau$. For these steps we are guaranteed to get $(1 - p)$ decrease by Assumption~\ref{a:avg_distrib}. To simplify this recursion we would need similar relation also for smaller $t$ that is $m\tau \leq t < (m + 1) \tau$, that we derive in Lemma~\ref{lem:consensus2}.

\begin{lemma}[Second recursion for consensus distance]\label{lem:consensus2}
	Under Assumptions~\ref{a:lsmooth}, \ref{a:strong}, \ref{a:opt} and \ref{a:avg_distrib}, if in addition functions $F_i$ are convex and if stepsizes $\eta_t \leq  \frac{p}{96\sqrt{6}\tau  L}$, then 
	\begin{align*}
	\Xi_{t} & \leq \left( 1 + \frac{p}{2}\right) \Xi_{m\tau} + \frac{p}{64 \tau}  \sum_{j = m\tau}^{t - 1} \Xi_j + 72 \frac{\tau}{p} L\sum_{j = m\tau}^{t - 1} \eta_j^2 \left(f(\bar\xx^{(j)}) - f(\xx^\star)\right) + \left(8 \bar\sigma^2  + \frac{18\tau}{p} \bar\zeta^2\right) \sum_{j = m\tau}^{t - 1} \eta_j^2,
	\end{align*}
	where $\Xi_{t} = \frac{1}{n}\E_t{\sum_{i=1}^n  \norm{\xx_i^{(t)}-\bar \xx^{(t)}}^2}$ is a consensus distance, and $t$ is such that $m\tau \leq t < (m + 1) \tau$. %
\end{lemma}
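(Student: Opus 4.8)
The plan is to follow the proof of Lemma~\ref{lem:consensus} almost verbatim, changing only a single step. Recall that in that proof one starts from $n\Xi_t \leq \E\|X^{(t)} - \bar X^{(m\tau)}\|_F^2$, unrolls the iterates back to step $m\tau$ using lines 3--4 of Algorithm~\ref{alg:decentr_sgd_matrix}, peels off the stochastic gradient-noise terms one at a time with the splitting parameters $\beta_k = \frac{1}{C-k}$, and sets $C = 2\tau(1+\frac 2p)$ so that the accumulated prefactor on the ``mixing-only'' term equals $\frac{C}{C-2\tau} = 1 + \frac p2$. I would reproduce these steps unchanged, including the estimates of $T_1$ and $T_2$ and the stepsize condition $\eta_t \leq \frac{p}{96\sqrt 6\,\tau L}$, all of which are insensitive to the value of $t$ modulo $\tau$.

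The only place where the regime $m\tau \leq t < (m+1)\tau$ matters is the bound on the leading term
\[
\frac{C}{C-2\tau}\,\E\bigl\|X^{(m\tau)}\textstyle\prod_{i=t-1}^{m\tau}W^{(i)} - \bar X^{(m\tau)}\bigr\|_F^2 .
\]
In Lemma~\ref{lem:consensus} this product contains at least one full block of $\tau$ consecutive mixing matrices, so one may invoke Assumption~\ref{a:avg_distrib} to gain a factor $(1-p)$, turning $1+\frac p2$ into $(1+\frac p2)(1-p) \leq 1 - \frac p2$. Here, however, the product consists of strictly fewer than $\tau$ factors, so Assumption~\ref{a:avg_distrib} does not apply. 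Instead I would use only the non-expansiveness of a single gossip step: since each $W$ is symmetric and doubly stochastic, its eigenvalues lie in $[-1,1]$ with $\1$ the eigenvector for eigenvalue $1$, and since the rows of $X^{(m\tau)} - \bar X^{(m\tau)}$ are orthogonal to $\1$ (they sum to zero, because $(X-\bar X)\1 = 0$, a property preserved by every factor as $W\1=\1$), each factor satisfies $\|(X - \bar X)W\|_F \leq \|X - \bar X\|_F$. Composing these contractions gives, deterministically,
\[
\bigl\|X^{(m\tau)}\textstyle\prod_{i=t-1}^{m\tau}W^{(i)} - \bar X^{(m\tau)}\bigr\|_F^2 \leq \bigl\|X^{(m\tau)} - \bar X^{(m\tau)}\bigr\|_F^2 ,
\]
so the leading coefficient stays at $\frac{C}{C-2\tau} = 1 + \frac p2$.

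Substituting this bound in place of the one coming from Assumption~\ref{a:avg_distrib}, and carrying the remaining estimates through unchanged, produces exactly the claimed recursion with $(1+\frac p2)\Xi_{m\tau}$ as the first term and the same three trailing terms as in Lemma~\ref{lem:consensus}. I do not expect any genuine obstacle: the identity $(X-\bar X)\1 = 0$ together with $W\1=\1$ makes the non-expansiveness immediate (and is essentially Proposition~\ref{rem:average_preserve}), while the bookkeeping of constants is identical to the previous lemma. The only point requiring a little care is to confirm that the inequality $\frac{C}{C+j-(t-1)} \leq 2$ used in the unrolling still holds on the shorter range $m\tau \leq j \leq t-1$; this is in fact easier than before, since now $(t-1)-m\tau < \tau$, comfortably within the margin guaranteed by $C = 2\tau(1+\frac 2p) \geq 6\tau$.
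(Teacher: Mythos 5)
Your proposal is correct and matches the paper's own proof, which states verbatim that the argument ``follows exactly the same lines as in Lemma~\ref{lem:consensus}'' except that Assumption~\ref{a:avg_distrib} is replaced by the non-expansiveness $\E \norm{X^{(m\tau)} \prod_{i = t - 1}^{m\tau} W^{(i)}  - \bar X^{(m \tau)} }^2_F \leq \E \norm{X^{(m\tau)} - \bar X^{(m \tau)} }^2_F$ following from double stochasticity of each $W^{(i)}$. Your additional observations---that the leading coefficient $\frac{C}{C-2\tau}=1+\frac{p}{2}$ is exactly what the choice $C=2\tau(1+\frac{2}{p})$ yields before any mixing gain, and that the bound $\frac{C}{C+j-(t-1)}\leq 2$ only becomes easier on the shorter range---are accurate and consistent with the paper.
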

\begin{proof}
	The proof follows exactly the same lines as in Lemma~\ref{lem:consensus}, with the change that we don't use  \eqref{eq:p} to decrease the consensus distance by $(1 - p)$, but instead we use the Definition~\ref{def:valid_mixing} that each $W^{(i)}$ is doubly stochastic
\begin{align*}
\E \norm{X^{(m\tau)} \prod_{i = t - 1}^{m\tau} W^{(i)}  - \bar X^{(m \tau)} }^2_F \leq \E \norm{X^{(m\tau)} - \bar X^{(m \tau)} }^2_F.
\end{align*}	
\end{proof}

\subsection{Non-convex Case}
Here we derive descent recursive equation \eqref{eq:rec1} and recursion for consensus distance \eqref{eq:consensus_recursion} for the non-convex case.

\begin{proposition}[Mini-batch variance]\label{rem:variance_nc} Let functions $F_i(\xx, \xi)$ , $i \in [n]$ be $L$-smooth (Assumption~\ref{a:lsmooth}) with bounded noise as in Assumption~\ref{a:opt_nc}. Then for any $\xx_i \in \R^d, i \in [n]$ and $\bar \xx := \frac{1}{n}\sum_{i=1}^n \xx_i$ it holds
	\begin{align}
	\EE{\xi_1, \dots, \xi_n}{\norm{\frac{1}{n}\sum_{i = 1}^{n} \left(\nabla f_i(\xx_i) - \nabla F_i (\xx_i, \xi_i)\right)}}^2\leq \frac{\hat \sigma^2}{n}  + \frac{M}{n^2} \sum_{i = 1}^n \norm{\nabla f(\xx_i)}^2 \label{eq:mini-batch}
	\end{align}
\end{proposition}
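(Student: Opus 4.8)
The plan is to recognize this as the standard \emph{variance-of-an-average} identity for independent unbiased estimators, followed by a direct appeal to Assumption~\ref{a:opt_nc}. In contrast to the convex Proposition~\ref{rem:variance}, no smoothness-based decomposition into consensus and suboptimality terms is needed here, since the bound~\eqref{eq:noise_opt_nc} is already phrased in exactly the form we want. First I would set $\yy_i := \nabla f_i(\xx_i) - \nabla F_i(\xx_i,\xi_i)$ and record the two structural facts I will exploit: each summand has zero mean, $\EE{\xi_i}{\yy_i} = \0$, because $f_i(\xx) = \E_{\xi} F_i(\xx,\xi)$ makes the stochastic gradient unbiased; and the $\yy_i$ are mutually independent, since the samples $\xi_1,\dots,\xi_n$ are drawn independently across the $n$ nodes.

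The key step is to expand the squared norm of the average and take the expectation over all the noise:
\begin{align*}
\EE{\xi_1,\dots,\xi_n}{\norm{\tfrac1n\sum_{i=1}^n \yy_i}}^2 = \frac{1}{n^2}\sum_{i=1}^n \EE{\xi_i}{\norm{\yy_i}}^2 + \frac{1}{n^2}\sum_{i\neq j}\lin{\EE{\xi_i}{\yy_i},\EE{\xi_j}{\yy_j}}\,.
\end{align*}
Independence lets every cross term factor into a product of expectations, and the zero-mean property then annihilates all off-diagonal contributions. This is the crucial point: it produces the favorable $\frac{1}{n^2}$ prefactor on the surviving diagonal sum, rather than the lossy $\frac1n$ that the generic inequality~\eqref{eq:norm_of_sum} would yield. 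What remains is exactly $\frac1n \cdot \frac1n\sum_{i=1}^n \EE{\xi_i}{\norm{\nabla F_i(\xx_i,\xi_i)-\nabla f_i(\xx_i)}}^2 = \frac{\Psi}{n}$, by the definition of $\Psi$ in Assumption~\ref{a:opt_nc}.

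Finally I would invoke the noise bound~\eqref{eq:noise_opt_nc}, namely $\Psi \leq \hat\sigma^2 + \frac{M}{n}\sum_{i=1}^n \norm{\nabla f_i(\xx_i)}^2$, and divide by $n$ to obtain $\frac{\Psi}{n} \leq \frac{\hat\sigma^2}{n} + \frac{M}{n^2}\sum_{i=1}^n \norm{\nabla f_i(\xx_i)}^2$, which is the claimed inequality. There is essentially no real obstacle in this argument; the only point demanding care is the justification that the cross terms vanish, which rests squarely on the conditional independence of the $\xi_i$ together with unbiasedness. It is precisely this step that separates the tight $n^{-2}$ scaling---responsible for the linear speedup in $n$ appearing in Theorem~\ref{thm:summary}---from a naive second-moment bound, so I would state it explicitly rather than fold it into the generic lemmas of Section~\ref{sec:appendixtechnical}.
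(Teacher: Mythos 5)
Your proof is correct and is exactly the argument the paper leaves implicit: the paper states this proposition without proof, but its proof of the convex analogue (Proposition~\ref{rem:variance}) opens with the same step---independence of the $\xi_i$ plus unbiasedness kill the cross terms and yield $\frac{1}{n^2}\sum_{i=1}^n \EE{\xi_i}{\norm{\nabla F_i(\xx_i,\xi_i)-\nabla f_i(\xx_i)}^2} = \frac{\Psi}{n}$, after which \eqref{eq:noise_opt_nc} finishes it. One small point of care: what you actually derive is $\frac{\hat\sigma^2}{n} + \frac{M}{n^2}\sum_{i=1}^n\norm{\nabla f_i(\xx_i)}^2$ (with $\nabla f_i$, as in Assumption~\ref{a:opt_nc}), whereas the proposition as printed has $\nabla f(\xx_i)$ without the subscript; the literal printed bound does not follow from \eqref{eq:noise_opt_nc} as stated, so this is a discrepancy in the paper's statement rather than a gap in your argument, and you are right to present the version that the assumption actually supports.
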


\begin{lemma}[Descent lemma for non-convex case]\label{lem:decrease_nc} Under Assumptions~\ref{a:lsmooth_nc}, \ref{a:opt_nc} and \ref{a:avg_distrib},
	the averages $\bar{\xx}^{(t)} := \frac{1}{n}\sum_{i=1}^n \xx_i^{(t)}$ of the iterates of Algorithm \ref{alg:random_W_decentr_sgd_matrix} with the constant stepsize $\eta < \frac{1}{4 L (M + 1)}$ satisfy 
	\begin{align}\label{eq:main_recursion_nc}
	\EE{t + 1}{f(\bar{\xx}^{(t + 1)})} &\leq f(\bar{\xx}^{(t)}) - \frac{\eta}{4}\norm{\nabla f(\bar{\xx}^{(t)})}_2^2 + \frac{\eta L^2}{n} \sum_{i = 1}^n \norm{\bar{\xx}^{(t)} -  \xx_i^{(t)}}_2^2 + \frac{L}{n} \eta^2 \hat{\sigma}^2.
	\end{align}
\end{lemma}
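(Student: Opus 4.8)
The plan is to reduce everything to the behaviour of the averaged iterate $\bar{\xx}^{(t)}$. By Proposition~\ref{rem:average_preserve} the gossip step leaves the row-average unchanged, so $\bar{\xx}^{(t)}$ follows a plain mini-batch SGD update $\bar{\xx}^{(t+1)} = \bar{\xx}^{(t)} - \frac{\eta}{n}\sum_{i=1}^n \nabla F_i(\xx_i^{(t)}, \xi_i^{(t)})$. First I would apply $L$-smoothness of $f = \frac1n\sum_i f_i$ (which inherits Assumption~\ref{a:lsmooth_nc}) in its quadratic form \eqref{eq:f-lsmooth} to the pair $\bar{\xx}^{(t)}, \bar{\xx}^{(t+1)}$, and then take the conditional expectation $\EE{t+1}{\cdot}$ over the fresh samples $\{\xi_i^{(t)}\}$. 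Writing $\bar{\gg}^{(t)} := \frac1n\sum_i \nabla F_i(\xx_i^{(t)},\xi_i^{(t)})$ and $\overline{\nabla f}^{(t)} := \frac1n\sum_i \nabla f_i(\xx_i^{(t)}) = \EE{t+1}{\bar{\gg}^{(t)}}$, this leaves three pieces: the value $f(\bar{\xx}^{(t)})$, a linear term $-\eta \lin{\nabla f(\bar{\xx}^{(t)}),\, \overline{\nabla f}^{(t)}}$, and the curvature term $\frac{L\eta^2}{2}\EE{t+1}{\norm{\bar{\gg}^{(t)}}^2}$.

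Next I would expand the linear term with the polarization identity $-\lin{a,b} = \frac12\norm{a-b}^2 - \frac12\norm{a}^2 - \frac12\norm{b}^2$ for $a = \nabla f(\bar{\xx}^{(t)})$ and $b = \overline{\nabla f}^{(t)}$. The piece $-\frac\eta2\norm{\nabla f(\bar{\xx}^{(t)})}^2$ is precisely the sought descent (with room to spare, since the statement only demands coefficient $\frac14$). The difference term is purely a consensus quantity: because $\nabla f(\bar{\xx}^{(t)}) - \overline{\nabla f}^{(t)} = \frac1n\sum_i\bigl(\nabla f_i(\bar{\xx}^{(t)}) - \nabla f_i(\xx_i^{(t)})\bigr)$, Jensen's inequality \eqref{eq:norm_of_sum} followed by $L$-smoothness of each $f_i$ gives $\norm{\nabla f(\bar{\xx}^{(t)}) - \overline{\nabla f}^{(t)}}^2 \le \frac{L^2}{n}\sum_i \norm{\bar{\xx}^{(t)} - \xx_i^{(t)}}^2$, which is exactly the consensus term in the claim (again with a spare factor relative to the stated coefficient $\frac{L^2}{n}$).

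Finally I would split the curvature term into bias and variance, $\EE{t+1}{\norm{\bar{\gg}^{(t)}}^2} = \norm{\overline{\nabla f}^{(t)}}^2 + \EE{t+1}{\norm{\bar{\gg}^{(t)} - \overline{\nabla f}^{(t)}}^2}$, and bound the variance by Proposition~\ref{rem:variance_nc}, producing $\frac{\hat\sigma^2}{n} + \frac{M}{n^2}\sum_i \norm{\nabla f_i(\xx_i^{(t)})}^2$; the $\frac{L\eta^2\hat\sigma^2}{2n}$ contribution already matches the claimed noise term. The delicate part — and the step I expect to be the main obstacle — is to discard the remaining gradient-dependent terms, namely $-\frac\eta2\norm{\overline{\nabla f}^{(t)}}^2$ from the linear part set against $\frac{L\eta^2}{2}\norm{\overline{\nabla f}^{(t)}}^2$ and the $M$-inflated second moment $\frac{L\eta^2 M}{2n^2}\sum_i \norm{\nabla f_i(\xx_i^{(t)})}^2$ from the variance, so that no such term survives in the final inequality. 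This is exactly where the stepsize restriction $\eta < \frac{1}{4L(M+1)}$ is calibrated: the factor $(M+1)$ is tailored so that $L\eta$ times the $M$-scaled second-moment contribution stays controlled, with the relation $\norm{\overline{\nabla f}^{(t)}}^2 \le \frac1n\sum_i\norm{\nabla f_i(\xx_i^{(t)})}^2$ (Jensen) linking the two. Once this bookkeeping is carried through and the leftover slack is spent on relaxing the coefficients of $\norm{\nabla f(\bar{\xx}^{(t)})}^2$ and of the consensus term, the claimed bound with coefficients $\frac\eta4$, $\frac{\eta L^2}{n}$, and $\frac{L\eta^2\hat\sigma^2}{n}$ follows.
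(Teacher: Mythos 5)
Your setup follows the paper's proof almost step for step: average preservation, the quadratic smoothness bound for $f$ at $\bar\xx^{(t)}$, the manipulation of the linear term (your polarization identity is a slightly sharper variant of the paper's Young-inequality step and even hands you an extra $-\frac{\eta}{2}\norm{\overline{\nabla f}^{(t)}}^2$ with $\overline{\nabla f}^{(t)}:=\frac1n\sum_i\nabla f_i(\xx_i^{(t)})$), and the bias--variance split of the curvature term via Proposition~\ref{rem:variance_nc}. The gap is exactly at the step you yourself flag as the main obstacle, and your proposed resolution does not work. After the variance bound you are left with the positive term $\frac{L\eta^2 M}{2n^2}\sum_i\norm{\nabla f_i(\xx_i^{(t)})}^2$, and you propose to set it against $-\frac{\eta}{2}\norm{\overline{\nabla f}^{(t)}}^2$ using Jensen. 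But Jensen gives $\norm{\overline{\nabla f}^{(t)}}^2\le\frac1n\sum_i\norm{\nabla f_i(\xx_i^{(t)})}^2$, i.e.\ it bounds the \emph{negative} quantity by the \emph{positive} one --- the wrong direction. No reverse inequality holds: with $n=2$ and $\nabla f_1(\xx_1)=-\nabla f_2(\xx_2)\neq \0$ the averaged gradient vanishes while the averaged second moment does not, so no stepsize condition lets $-\frac{\eta}{2}\norm{\overline{\nabla f}^{(t)}}^2$ absorb that term.

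The missing idea is to re-center this second moment at $\bar\xx^{(t)}$ before absorbing it: add and subtract the gradient at $\bar\xx^{(t)}$ inside the sum, apply \eqref{eq:norm_of_sum_of_two}, and use $L$-smoothness on the difference. This is what the paper does: the $M$-weighted sum then splits into $\frac{2L^2}{n}\sum_i\norm{\xx_i^{(t)}-\bar\xx^{(t)}}^2$, which joins the consensus term, plus a multiple of $\norm{\nabla f(\bar\xx^{(t)})}^2$, which is absorbed into the negative descent term --- this is precisely where $\eta<\frac{1}{4L(M+1)}$ is spent, turning the coefficient $\frac12-L\eta(M+1)$ into $\frac14$ and keeping the consensus coefficient below $\frac{\eta L^2}{n}$. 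Note also that the paper performs this re-centering with the global gradient $\nabla f(\xx_i^{(t)})$ as written in Proposition~\ref{rem:variance_nc}, so the residual is exactly $\norm{\nabla f(\bar\xx^{(t)})}^2$; if you instead keep $\nabla f_i(\xx_i^{(t)})$ as in Assumption~\ref{a:opt_nc}, the residual $\frac1n\sum_i\norm{\nabla f_i(\bar\xx^{(t)})}^2$ must be further bounded via \eqref{eq:grad_opt_nc} and drags $P$ and $\hat\zeta^2$ into the inequality. Either way, the re-centering is not optional bookkeeping --- it is the load-bearing step your argument is missing.
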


\begin{proof}
	Because all mixing matrixes preserve the average (Proposition~\ref{rem:average_preserve}) and function $f$ is $L$-smooth, we have 
	\begin{align*}
	\EE{t + 1}{f(\bar{\xx}^{(t + 1)})} &= \EE{t + 1}{f\left(\bar{\xx}^{(t)} - \frac{\eta}{n}\sum_{i = 1}^n \nabla F_i(\xx_i^{(t)}, \xi_i^{(t)})\right) }\\
	& \leq f(\bar{\xx}^{(t)}) - \underbrace{\EE{t + 1}{\lin{ \nabla f(\bar{\xx}^{(t)}),  \frac{\eta}{n} \sum_{i = 1}^n \nabla F_i(\xx_i^{(t)}, \xi_i^{(t)}) }} }_{:=T_1}+ \EE{t + 1}{\frac{L}{2} \eta^2 \underbrace{ \norm{\frac{1}{n} \sum_{j = 1}^n \nabla F_i(\xx_i^{(t)}, \xi_i^{(t)})}_2^2}_{:=T_2} }\\
	\end{align*}
	To estimate the second term, we add and subtract $\nabla f(\bar{\xx}^{(t)})$ 
	\begin{align*}
	T_1 &= - \eta \norm{\nabla f(\bar{\xx}^{(t)})}^2 +  \frac{\eta}{n} \sum_{i = 1}^n  \lin{ \nabla f(\bar{\xx}^{(t)}),  \nabla f_i(\bar{\xx}^{(t)}) - \nabla f_i(\xx_i^{(t)}) } \\ & \stackrel{\eqref{eq:scal_product}, \gamma = 1; \eqref{eq:norm_of_sum}}{\leq} - \frac{\eta}{2}\norm{\nabla f(\bar{\xx}^{(t)})}^2 + \dfrac{\eta}{2 n} \sum_{i = 1}^n \norm{\nabla f_i(\bar{\xx}^{(t)}) - \nabla f_i(\xx_i^{(t)})}^2
	\end{align*}
	For the last term, using the notation $~\pm a = a - a = 0~~\forall a$,
	\begin{align*}
	T_2 & =  \E_{t + 1}\norm{\frac{1}{n} \sum_{j = 1}^n \left(\nabla F_i(\xx_i^{(t)}, \xi_i^{(t)})-\nabla f_i(\xx_i^{(t)}) \right)}_2^2 + \norm{\frac{1}{n}\sum_{i = 1}^n\nabla f_i(\xx_i^{(t)})}^2_2 \\
	& \stackrel{\eqref{eq:mini-batch}}{\leq }  \frac{\hat \sigma^2}{n}  + \frac{M}{n^2} \sum_{i = 1}^n \norm{\nabla f(\xx_i^{(t)}) \pm \nabla f(\bar\xx^{(t)}) }^2 + \norm{\frac{1}{n}\sum_{i = 1}^n\nabla f_i(\xx_i^{(t)}) \pm \nabla f(\bar\xx^{(t)})}^2_2\\
	& \stackrel{\eqref{eq:norm_of_sum_of_two}}{\leq} \frac{\hat \sigma^2}{n}  + \frac{2 M}{n^2} \sum_{i = 1}^n \norm{\nabla f(\xx_i^{(t)}) - \nabla f(\bar\xx^{(t)}) }^2 + \left(2 \nicefrac{M}{n} + 2 \right) \norm{\nabla f(\bar\xx^{(t)})}_2^2 + \frac{2}{n}\sum_{i = 1}^n\norm{\nabla f_i(\xx_i^{(t)}) - \nabla f_i(\bar\xx^{(t)})}^2_2
	\end{align*}
	Combining this together and using $L$-smoothness to estimate $\norm{\nabla f_i(\bar{\xx}^{(t)}) - \nabla f_i(\xx_i^{(t)})}_2^2$ and $\norm{\nabla f(\bar{\xx}^{(t)}) - \nabla f(\xx_i^{(t)})}_2^2$,
	\begin{align*}
	\EE{t + 1}{f(\bar{\xx}^{(t + 1)})} &\leq f(\bar{\xx}^{(t)}) - \eta\left(\frac{1}{2} - L\eta (M + 1)\right)\norm{\nabla f(\bar{\xx}^{(t)})}_2^2 + \left(\frac{\eta L^2}{2n}  + \frac{L^3 \eta^2 (M + 1)}{n}\right)\sum_{i = 1}^n \norm{\bar{\xx}^{(t)} -  \xx_i^{(t)}}_2^2 + \frac{L}{n} \eta^2 \hat{\sigma}^2 .
	\end{align*}
	Applying $\eta < \frac{1}{4 L (M + 1)}$ we get statement of the lemma. 	
\end{proof}

\begin{lemma}[Recursion for consensus distance]\label{lem:consensus_nc}
	Under Assumptions~\ref{a:lsmooth_nc}, \ref{a:opt_nc} and \ref{a:avg_distrib}, if the stepsize $\eta_t \leq  \frac{p}{8 L \sqrt{2\tau (6\tau + pM ) } }$, then 
	\begin{align*}
	\Xi_{t} & \leq \left( 1 - \frac{p}{2}\right) \Xi_{m\tau} + \frac{p}{16 \tau}  \sum_{j = m\tau}^{t - 1} \Xi_j + 2P \left(\frac{6\tau}{p} + M \right) \sum_{j = m\tau}^{t - 1} \eta_j^2 \norm{ \nabla f(\bar \xx^{(j)}) }_2^2 + \left(2 \hat \sigma^2  + 2\left(\frac{6 \tau}{p} + M \right)\hat \zeta^2 \right) \sum_{j = m\tau}^{t - 1} \eta_j^2
	\end{align*}
	where $\Xi_{t} = \frac{1}{n}\E_t{\sum_{i=1}^n  \norm{\xx_i^{(t)}-\bar \xx^{(t)}}^2}$ is a consensus distance, $m = \lfloor {t / \tau} \rfloor  - 1$. %
\end{lemma}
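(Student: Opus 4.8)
The plan is to mirror the proof of the convex consensus recursion (Lemma~\ref{lem:consensus}) and replace only the steps that invoke convexity by the corresponding non-convex assumptions. First I would pass to the matrix notation \eqref{eq:matrix_notation} and, using $\norm{A - \bar A}_F^2 \leq \norm{A}_F^2$, reduce $n\Xi_t$ to $\E\norm{X^{(t)} - \bar X^{(m\tau)}}_F^2$. Unrolling lines 3--4 of Algorithm~\ref{alg:decentr_sgd_matrix} from step $t$ back to step $m\tau$ writes $X^{(t)}$ as $X^{(m\tau)}$ gossiped forward minus an accumulation of gradient steps, and I split each gradient into its deterministic part $\partial f(X^{(j)})$ and its zero-mean noise part $\partial F(X^{(j)},\xi^{(j)}) - \partial f(X^{(j)})$.

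Next I would peel off the terms one step at a time exactly as in Lemma~\ref{lem:consensus}: since $\xi^{(j)}$ is independent of the past, the noise terms are orthogonal in expectation and separate cleanly, while the deterministic terms are extracted through the sequence of Young-type inequalities \eqref{eq:norm_of_sum_of_two} with $\beta_k = 1/(C-k)$ together with \eqref{eq:frob_norm_of_multiplication}. Choosing $C = 2\tau(1 + 2/p)$, the product of the $(1+\beta_k)$ factors on the leading term telescopes to $\frac{C}{C-2\tau} = 1 + \frac{p}{2}$, and applying the consensus contraction \eqref{eq:p} turns $(1+\frac p2)(1-p)$ into the factor $(1-\frac p2)$ in front of $\Xi_{m\tau}$; the deterministic terms then collect a factor at most $\frac{6\tau}{p}$ and the noise terms a factor at most $2$.

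It then remains to bound the two accumulated quantities $T_1 := \E\norm{\partial f(X^{(j)})}_F^2$ and $T_2 := \E\norm{\partial F(X^{(j)},\xi^{(j)}) - \partial f(X^{(j)})}_F^2$ using the non-convex assumptions in place of the convexity-based bounds. For $T_1$ I add and subtract $\nabla f_i(\bar\xx^{(j)})$ inside each summand, apply $L$-smoothness (Assumption~\ref{a:lsmooth_nc}) to the difference and the relative gradient-dissimilarity bound \eqref{eq:grad_opt_nc} to $\frac1n\sum_i\norm{\nabla f_i(\bar\xx^{(j)})}^2$, obtaining a bound of the form $2L^2\,n\Xi_j + 2n(\hat\zeta^2 + P\norm{\nabla f(\bar\xx^{(j)})}^2)$. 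For $T_2$ I invoke the affine noise bound \eqref{eq:noise_opt_nc}, namely $T_2 \leq n\hat\sigma^2 + M\,\E\norm{\partial f(X^{(j)})}_F^2$, and reuse the $T_1$ estimate; this is precisely where $\hat\sigma^2$, $M$ and $P$ enter and produce the explicit $\norm{\nabla f(\bar\xx^{(j)})}^2$ term appearing in the statement.

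Finally I would substitute both bounds into the peeled inequality, divide by $n$, and regroup: the consensus-distance contributions carry a coefficient proportional to $L^2(\frac{\tau}{p}+M)\eta_j^2$, which the stepsize restriction $\eta_t \leq \frac{p}{8L\sqrt{2\tau(6\tau+pM)}}$ forces below $\frac{p}{16\tau}$ and hence into the $\frac{p}{16\tau}\sum_j\Xi_j$ term, while the remaining pieces assemble into the coefficient $2P(\frac{6\tau}{p}+M)$ on $\norm{\nabla f(\bar\xx^{(j)})}^2$ and the additive coefficient $2\hat\sigma^2 + 2(\frac{6\tau}{p}+M)\hat\zeta^2$ on $\eta_j^2$. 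I expect the main obstacle to be the bookkeeping in the peeling step---keeping the accumulated multiplicative constants bounded through the choice of $C$ and propagating the two distinct affine bounds \eqref{eq:grad_opt_nc} and \eqref{eq:noise_opt_nc} with the correct powers of $P$ and $M$---rather than any single hard inequality. The non-convex-specific difficulty is that, lacking a suboptimality term $f(\bar\xx^{(j)})-f^\star$ to absorb the gradient mass, the recursion must instead retain the explicit $\norm{\nabla f(\bar\xx^{(j)})}^2$ term, which is later cancelled against the descent bound of Lemma~\ref{lem:decrease_nc}.
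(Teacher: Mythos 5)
Your proposal is correct and follows essentially the same route as the paper's proof: the same reduction to $\E\|X^{(t)}-\bar X^{(m\tau)}\|_F^2$, the same peeling with $C=2\tau(1+2/p)$ and the contraction \eqref{eq:p} giving the $(1-\frac{p}{2})$ factor, then \eqref{eq:noise_opt_nc} to fold the noise term into $(\frac{6\tau}{p}+M)\|\partial f(X^{(j)})\|_F^2$ plus $2n\hat\sigma^2\eta_j^2$, and smoothness together with \eqref{eq:grad_opt_nc} to bound $\|\partial f(X^{(j)})\|_F^2$ by $2L^2 n\Xi_j + 2n\hat\zeta^2 + 2Pn\|\nabla f(\bar\xx^{(j)})\|^2$. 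The constants and the role of the stepsize restriction match the paper's argument.
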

\begin{proof}
	We start exactly the same way as in the convex proof in Lemma~\ref{lem:consensus}
	Defining $\Xi_{t} = \frac{1}{n}\E_t{\sum_{i=1}^n  \norm{\xx_i^{(t)}-\bar \xx^{(t)}}^2}$, $m = \lfloor {t / \tau} \rfloor  - 1$ and using matrix notation \eqref{eq:notation_sgd}, for $t \geq \tau$ (and therefore $m \geq 0$)
	\begin{align*}
	n\Xi_{t} & = \E{\norm{X^{(t)} - \bar X^{(t)}}_F^2 } = \E \norm{X^{(t)}  - \bar X^{(m \tau )  }- \left(\bar X^{(t)} - \bar X^{(m \tau )  } \right)}_F^2 \leq \E \norm{X^{(t)}  - \bar X^{(m \tau)  }}_F^2,
	\end{align*}
	where we used that $\norm{A - \bar{A}}_F^2 = \sum_{i = 1}^n \norm{\aa_i - \bar \aa} \leq \sum_{i = 1}^n \norm{\aa_i }_F^2 = \norm{A}_F^2$. Unrolling $X^{(t)}$ up to $X^{(m\tau)}$ using lines 3-4 of the Algorithm~\ref{alg:decentr_sgd_matrix} and splitting stochastic terms similar way as for the convex cases in Lemma~\ref{lem:consensus},
	\begin{align*}
	n \Xi_{t} 
	&\leq  \E \norm{X^{(m\tau)} \prod_{i = t - 1}^{m\tau} W^{(i)} - \bar X^{(m \tau )} - \sum_{j = m\tau}^{t - 1} \eta_j \partial F(X^{(j)}, \xi^{(j)})\prod_{i = t - 1}^{j} W^{(i)} }_F^2 \\
	& \leq \left(1 - \frac{p}{2}\right) \E \norm{X^{(m\tau)}  - \bar X^{(m \tau)} }^2_F + \frac{6\tau}{p} \sum_{j = m \tau}^{t - 1} \eta_{j}^2 \E \norm{\partial f(X^{(j)})}_F^2 + \sum_{j = m \tau}^{t - 1} 2 \eta_{j}^2 \E \norm{  \partial F(X^{(j)}, \xi^{(j)}) - \partial f(X^{(j)}) }_F^2\\
	& \stackrel{\eqref{eq:noise_opt_nc} }{\leq } \left(1 - \frac{p}{2}\right) \E\norm{X^{(m\tau)}  - \bar X^{(m \tau)} }_F^2 + \left(\frac{6 \tau}{p} + M \right)\sum_{j = m\tau}^{t - 1}\eta_j^2 \underbrace{\norm{\partial f(X^{(j)}) }_F^2}_{:=T} + \sum_{j = m\tau}^{t - 1} 2 \eta_j^2 n \hat \sigma^2 \\
	\end{align*}
	Estimating $T$,
	\begin{align*}
	T &\stackrel{\eqref{eq:norm_of_sum_of_two} }{\leq } 2 \norm{\partial f(X^{(j)}) - \partial f(\bar X^{(j)}) }_F^2 + 2 \norm{\partial f(\bar X^{(j)})}_F^2 \stackrel{\eqref{eq:smooth_nc}, \eqref{eq:grad_opt_nc}}{\leq} 2 L^2 \norm{X^{(j)} - \bar X^{(j)}}_F^2  + 2 n \hat \zeta^2 + 2 P n \norm{\nabla f(\bar \xx^{(j)})}_2^2 
	\end{align*}
	Putting back estimate for $T$ and using that $\eta_t \leq \frac{p}{8 L \sqrt{2\tau (6\tau + pM ) } }$ we arrive to the statement of this lemma. 	
\end{proof}
Similarly to the convex cases, we additionally need a recursion for values $t$ that are in between $m\tau \leq t < (m + 1) \tau$ 
\begin{lemma}[Second recursion for consensus distance]\label{lem:consensus_nc2}
	Under Assumptions~\ref{a:lsmooth_nc}, \ref{a:opt_nc} and \ref{a:avg_distrib}, if the stepsize $\eta_t \leq  \frac{p}{8 L \sqrt{2\tau (6\tau + pM ) } }$, and $t$ such that $m\tau \leq t < (m + 1) \tau$ then 
	\begin{align*}
	\Xi_{t} & \leq \left( 1 + \frac{p}{2}\right) \Xi_{m\tau} + \frac{p}{64 \tau}  \sum_{j = m\tau}^{t - 1} \Xi_j + 2P \left(\frac{6\tau}{p} + M \right) \sum_{j = m\tau}^{t - 1} \eta_j^2 \norm{ \nabla f(\bar \xx^{(j)}) }_2^2 + \left(\hat \sigma^2  + 2\left(\frac{6 \tau}{p} + M \right)\hat \zeta^2 \right) \sum_{j = m\tau}^{t - 1} \eta_j^2
	\end{align*}
	where $\Xi_{t} = \frac{1}{n}\E_t{\sum_{i=1}^n  \norm{\xx_i^{(t)}-\bar \xx^{(t)}}^2}$ is a consensus distance. %
\end{lemma}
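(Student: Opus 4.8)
The plan is to follow the proof of Lemma~\ref{lem:consensus_nc} verbatim and change only the single step in which the leading consensus term is contracted. Exactly as there, I would unroll $X^{(t)}$ back to $X^{(m\tau)}$ through lines~3--4 of Algorithm~\ref{alg:decentr_sgd_matrix}, peel off the stochastic increments one index at a time using Remark~\ref{remark:norm_of_sum_of_two} with the splitting parameters $\beta_k = \tfrac{1}{C-k}$ and $C = 2\tau\bigl(1+\tfrac{2}{p}\bigr)$, and use $\tfrac{C}{C+j-(t-1)}\le 2$ for $C\ge 2\tau$. This yields the intermediate bound
\begin{align*}
n\,\Xi_t \le \frac{C}{C-2\tau}\,\E\norm{X^{(m\tau)}\prod_{i=t-1}^{m\tau}W^{(i)} - \bar X^{(m\tau)}}_F^2 + C\!\!\sum_{j=m\tau}^{t-1}\!\!\eta_j^2\,\E\norm{\partial f(X^{(j)})}_F^2 + \!\!\sum_{j=m\tau}^{t-1}\!\! 2\eta_j^2\,\E\norm{\partial F(X^{(j)},\xi^{(j)}) - \partial f(X^{(j)})}_F^2,
\end{align*}
and one verifies that for this $C$ the leading coefficient is $\tfrac{C}{C-2\tau} = 1+\tfrac{p}{2}$.

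The only genuinely different step is how the leading term is estimated. In Lemma~\ref{lem:consensus_nc} it was bounded through Assumption~\ref{a:avg_distrib} (equation~\eqref{eq:p}), which supplies the contraction $(1+\tfrac p2)(1-p)\le 1-\tfrac p2$; but \eqref{eq:p} applies only to a full block of $\tau$ consecutive matrices, whereas for $m\tau\le t<(m+1)\tau$ the product $\prod_{i=t-1}^{m\tau}W^{(i)}$ contains fewer than $\tau$ factors. Instead I would use only that each $W^{(i)}$ is doubly stochastic (Definition~\ref{def:valid_mixing}): since $\1^\top W^{(i)}=\1^\top$ we have $\bar X^{(m\tau)}W^{(i)}=\bar X^{(m\tau)}$, so factoring out $\bar X^{(m\tau)}$ and applying Remark~\ref{rem:frobenious_norm_of_matrix_mult} once per factor (with $\norm{W^{(i)}}_2\le 1$, valid for a symmetric doubly stochastic matrix) gives the non-expansiveness estimate
\begin{align*}
\E\norm{X^{(m\tau)}\prod_{i=t-1}^{m\tau}W^{(i)} - \bar X^{(m\tau)}}_F^2 \le \E\norm{X^{(m\tau)} - \bar X^{(m\tau)}}_F^2 = n\,\Xi_{m\tau}.
\end{align*}
This replaces the factor $(1-p)$ by $1$, so the leading coefficient is $1+\tfrac p2$ instead of $1-\tfrac p2$, which is precisely the difference between the two statements.

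The two remaining sums are then treated identically to Lemma~\ref{lem:consensus_nc}: the term $T=\E\norm{\partial f(X^{(j)})}_F^2$ is split with Remark~\ref{remark:norm_of_sum_of_two} and bounded via Assumption~\ref{a:lsmooth_nc} and~\eqref{eq:grad_opt_nc}, and the variance sum via~\eqref{eq:noise_opt_nc}; the stepsize restriction $\eta_t\le \tfrac{p}{8L\sqrt{2\tau(6\tau+pM)}}$ lets the $\norm{X^{(j)}-\bar X^{(j)}}_F^2$ contributions be absorbed into $\tfrac{p}{64\tau}\sum_j\Xi_j$, and reproduces the coefficient $2P\bigl(\tfrac{6\tau}{p}+M\bigr)$ on the gradient-norm term together with the noise and diversity terms. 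I do not expect any real obstacle here: the whole argument is a copy of Lemma~\ref{lem:consensus_nc} apart from the contraction step, and that contraction step is exactly the non-expansiveness observation already used in the convex counterpart Lemma~\ref{lem:consensus2}. The only point requiring care is confirming that a product of fewer than $\tau$ symmetric doubly stochastic matrices is non-expansive in the Frobenius consensus seminorm, which the factoring above makes immediate.
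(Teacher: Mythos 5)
Your proposal is correct and is essentially identical to the paper's own proof: the paper likewise reuses the argument of Lemma~\ref{lem:consensus_nc} wholesale and only replaces the application of Assumption~\ref{a:avg_distrib} (eq.~\eqref{eq:p}) by the non-expansiveness $\E \norm{X^{(m\tau)} \prod_{i} W^{(i)} - \bar X^{(m\tau)}}_F^2 \leq \E\norm{X^{(m\tau)} - \bar X^{(m\tau)}}_F^2$ coming from Definition~\ref{def:valid_mixing}, yielding the factor $\frac{C}{C-2\tau} = 1+\frac{p}{2}$ exactly as you compute. Your explicit justification of the non-expansiveness via $\norm{W^{(i)}}_2 \le 1$ is a detail the paper leaves implicit, but it is the same argument.
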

\begin{proof}
	As in the convex case, we need to change the proof of Lemma~\ref{lem:consensus_nc} just slightly, by applying Def.~\ref{def:valid_mixing} instead of \eqref{eq:p} as follows
	\begin{align*}
	\E \norm{X^{(m\tau)} \prod_{i = t - 1}^{m\tau} W^{(i)}  - \bar X^{(m \tau)} }^2_F \leq \E \norm{X^{(m\tau)} - \bar X^{(m \tau)} }^2_F
	\end{align*}
\end{proof}

\subsection{Simplifying Consensus Recursion}
In Lemmas~\ref{lem:consensus}, \ref{lem:consensus_nc} we obtained the consensus recursive equation \eqref{eq:consensus_recursion} for both convex and non-convex cases. In this section we simplify it to be able to easily combine it later with \eqref{eq:rec1}.

\begin{lemma}\label{lem:solve_consensus_recursion}
	If non-negative sequences $\{\Xi_t\}_{t \geq 0}$, $ \{e_t\}_{t \geq 0}$ and $\{\eta_t\}_{t\geq 0}$ satisfy \eqref{eq:consensus_recursion} and \eqref{eq:consensus_recursion2} for some constants $0 < p \leq 1, \tau \geq 1, A, D \geq 0$,  moreover if the stepsizes $\{\eta_t^2\}_{t\geq 0}$ is $\frac{8\tau}{p}$-slow decreasing sequence (Definition~\ref{def:tau-slow}), and if $\{w_t\}_{t \geq 0}$ is $\frac{16\tau}{p}$-slow increasing non-negative sequence of weights, then it holds that 
	\begin{align*}
	B \sum_{t = 0}^T w_t \Xi_t  \leq \frac{b}{2}\sum_{t= 0}^T w_t e_t  +64 B A \frac{\tau}{p} \sum_{t = 0}^Tw_t \eta_t^2,
	\end{align*}
	for some constant $B > 0$ with the constraint that stepsizes $\eta_t \leq \frac{1}{16}\sqrt{\frac{p b}{D B \tau}}$.
\end{lemma}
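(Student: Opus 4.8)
The plan is to exploit the fact that the two hypotheses \eqref{eq:consensus_recursion} and \eqref{eq:consensus_recursion2} together yield a genuine geometric contraction of the consensus distance across every block of length $\tau$, and then to sum the resulting estimate against the weights $w_t$. Throughout I write $\cB_\ell := \{\ell\tau,\dots,(\ell+1)\tau-1\}$ and assume identical initialization, so that $\Xi_0 = 0$ (this is the setting of Theorem~\ref{thm:summary}, whose conclusion carries no initial consensus term). The first step is to tame the self-referential term $\frac{p}{64\tau}\sum_j\Xi_j$ \emph{inside} a single block. Fix $\cB_\ell$ and set $M_\ell := \max_{t\in\cB_\ell}\Xi_t$. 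Applying \eqref{eq:consensus_recursion2} (whose base point is the current block start $\Xi_{\ell\tau}$ and whose $\Xi$-sum ranges only over $\cB_\ell$) and bounding every $\Xi_j$ in that sum by $M_\ell$ gives, since the sum has at most $\tau$ terms, $M_\ell \le (1+\tfrac p2)\Xi_{\ell\tau} + \tfrac{p}{64}M_\ell + D\sum_{j\in\cB_\ell}\eta_j^2 e_j + A\sum_{j\in\cB_\ell}\eta_j^2$. Because $\tfrac{p}{64}\le\tfrac1{64}$, rearranging yields
\[ M_\ell \;\le\; 2\,\Xi_{\ell\tau} + 2D\sum_{j\in\cB_\ell}\eta_j^2 e_j + 2A\sum_{j\in\cB_\ell}\eta_j^2 \,, \]
so every within-block value, hence $\sum_{t\in\cB_\ell}w_t\Xi_t$, is controlled by the block-start value $\Xi_{\ell\tau}$ plus the block's own error terms.

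Next I would derive the contraction on the block-start values. Evaluating \eqref{eq:consensus_recursion} at $t=\ell\tau$ is legitimate, since there $m=\ell-1$, the base is $\Xi_{(\ell-1)\tau}$, and $t\ge(m+1)\tau$ holds so the full block $\cB_{\ell-1}$ enters the mixing product; it gives $\Xi_{\ell\tau}\le(1-\tfrac p2)\Xi_{(\ell-1)\tau}+\tfrac{p}{64\tau}\sum_{j\in\cB_{\ell-1}}\Xi_j+D\sum_{j\in\cB_{\ell-1}}\eta_j^2 e_j+A\sum_{j\in\cB_{\ell-1}}\eta_j^2$. Substituting the previous-block bound $\Xi_j\le M_{\ell-1}\le 2\Xi_{(\ell-1)\tau}+\cdots$ into the self-referential sum, its coefficient becomes $\tfrac{p}{64\tau}\cdot\tau\cdot 2=\tfrac p{32}$, so that $1-\tfrac p2+\tfrac p{32}=1-\tfrac{15p}{32}\le 1-\tfrac p4$, and therefore $\Xi_{\ell\tau}\le(1-\tfrac p4)\Xi_{(\ell-1)\tau}+R_{\ell-1}$, where $R_{\ell-1}$ collects $D\sum_{j\in\cB_{\ell-1}}\eta_j^2 e_j+A\sum_{j\in\cB_{\ell-1}}\eta_j^2$ up to an absolute constant. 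Unrolling this scalar geometric recursion from $\Xi_0=0$ gives $\Xi_{\ell\tau}\le\sum_{k<\ell}(1-\tfrac p4)^{\ell-1-k}R_k$.

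Finally I would assemble $\sum_{t=0}^T w_t\Xi_t$. Using the first step to pass to block-start values and the $\tfrac{16\tau}{p}$-slow increase of $\{w_t\}$ to replace $w_t$ by $w_{\ell\tau}$ within $\cB_\ell$ up to a constant, the sum is bounded by $\sum_\ell w_{\ell\tau}\,\tau M_\ell\le\sum_\ell w_{\ell\tau}\bigl(2\tau\,\Xi_{\ell\tau}+\text{block errors}\bigr)$. Inserting the unrolled bound and swapping the order of summation produces $\sum_k R_k\sum_{\ell>k}w_{\ell\tau}(1-\tfrac p4)^{\ell-1-k}$; here slow increase guarantees $w_{\ell\tau}\le w_{k\tau}e^{(\ell-k)p/32}$, and since $e^{p/32}(1-\tfrac p4)<1-\tfrac p8$ the geometric series sums to $\le w_{k\tau}\cdot\tfrac{8}{p}$. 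This is exactly where the $\tfrac1p$ factor is generated, and combined with the block length $\tau$ it produces the $\tfrac{\tau}{p}$ factor. The $\tfrac{8\tau}{p}$-slow decrease of $\{\eta_t^2\}$ then lets me identify $\eta_j^2 w_j$ for $j\in\cB_k$ with $\eta_t^2 w_t$ up to constants, so the $A$-terms aggregate to $64\,BA\tfrac{\tau}{p}\sum_t w_t\eta_t^2$ and the $D$-terms to $C'BD\tfrac{\tau}{p}\sum_t w_t\eta_t^2 e_t$. The stepsize restriction $\eta_t\le\tfrac1{16}\sqrt{pb/(DB\tau)}$, i.e.\ $D\eta_t^2\le\tfrac{pb}{256B\tau}$, makes the latter at most $\tfrac b2\sum_t w_t e_t$, which gives the claim after multiplying through by $B$.

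The main obstacle is the bookkeeping in this last assembly: one must simultaneously absorb the within-block self-reference $\tfrac{p}{64\tau}\sum_j\Xi_j$ without eroding the per-block contraction below $(1-\tfrac p4)$, and interchange the order of the geometric summation while tracking that the slow growth of $w_t$ is dominated by $(1-\tfrac p4)^{\ell-k}$ and that $\eta_t^2 w_t$ is essentially constant over the $\Theta(\tau/p)$-length window the series effectively averages over. Landing the absolute constants on precisely $\tfrac b2$, $64$, and the stated stepsize threshold is the delicate part; everything else is routine.
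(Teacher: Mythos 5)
Your proposal is correct and follows the same underlying strategy as the paper's proof --- absorb the self-referential $\tfrac{p}{64\tau}\sum_j\Xi_j$ term into the $(1-\tfrac p2)$ contraction to get a net per-block decay of $(1-\tfrac p4)$, unroll geometrically, and sum against the slowly varying weights so that the geometric series produces the $\tfrac{\tau}{p}$ factor and the stepsize bound converts the $D$-term into $\tfrac b2\sum_t w_t e_t$. The organization differs in a mildly interesting way: the paper performs the recursive substitution of every $\Xi_j$ inside the window $[m\tau,t)$ separately at each $t$, obtaining the per-step convolution $\Xi_t\le 4D\sum_{j<t}(1-\tfrac{p}{8\tau})^{t-j}\eta_j^2e_j+\cdots$, and must then invoke the $\tfrac{8\tau}{p}$-slow decrease of $\{\eta_t^2\}$ to pull $\eta_t^2$ out of the convolution before swapping sums; your block-maxima device ($M_\ell\le 2\Xi_{\ell\tau}+\cdots$ followed by a clean scalar recursion on the block starts $\Xi_{\ell\tau}$) keeps the error terms at their native indices, so you only need monotonicity and slow increase of $w_t$ and never actually use the slow decrease of $\eta_t^2$ --- a small simplification of the hypotheses actually exercised. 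Two caveats, neither fatal: you assume $\Xi_0=0$ explicitly, which the paper's proof also uses (implicitly, when it drops the $\Xi_0$ term after ``unrolling up to $0$''), so you are not assuming more than the paper does but you should flag that the lemma as stated does not contain this hypothesis; and your constant in front of $BA\tfrac{\tau}{p}\sum_t w_t\eta_t^2$ comes out around $2\cdot2\cdot8\cdot2=64$ plus the lower-order in-block contribution, so landing exactly on the stated $64$ (rather than, say, $68$) requires tightening one of the factor-of-two bounds; since several of your estimates (e.g.\ $M_\ell\le 2\Xi_{\ell\tau}+\cdots$, where the true constant is $\tfrac{96}{63}$) are loose, this is achievable but must be done explicitly.
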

\begin{proof}
	Recursively substituting every $\Xi_j$ for $j \geq (m + 1)\tau$ in the second term of \eqref{eq:consensus_recursion} we get
	\begin{align*}
		\Xi_{t} & \leq \left( 1 - \frac{p}{2}\right) \Xi_{m\tau} \left(1 + \frac{p}{64\tau}\right)^{\tau}  + \left(1 + \frac{p}{64\tau}\right)^{\tau}  \frac{p}{64 \tau} \sum_{j = m\tau}^{(m + 1)\tau - 1} \Xi_j + D \sum_{j = (m + 1)\tau}^{t - 1} \left(1 + \frac{p}{64\tau}\right)^{t - 1- j} \eta_j^2 e_j \\
		& \qquad \qquad + D \sum_{j = m \tau}^{(m + 1) \tau - 1} \left(1 + \frac{p}{64\tau}\right)^{t - (m +1)\tau} \eta_j^2 e_j  +  A \sum_{j = (m + 1)\tau}^{t - 1} \left(1 + \frac{p}{64\tau}\right)^{t - 1- j} \eta_j^2 + A \sum_{j = m\tau}^{(m + 1) \tau - 1} \left(1 + \frac{p}{64\tau}\right)^{t - (m +1)\tau} \eta_j^2 
	\end{align*}
	We substitute the rest of $\Xi_j$ for $m \tau \leq j < (m + 1)\tau$ with \eqref{eq:consensus_recursion2}. Lets start with substituting $\Xi_{(m +1)\tau - 1 }$ 
	\begin{align*}
		\Xi_{t} & \leq \left(1 + \frac{p}{64\tau}\right)^{\tau}\left[\left( 1 - \frac{p}{2}\right) \Xi_{m\tau}  +\frac{p}{64 \tau} \left( 1 + \frac{p}{2}\right) \Xi_{m\tau} +  \left(1 + \frac{p}{64\tau}\right) \frac{p}{64 \tau} \sum_{j = m\tau}^{(m + 1)\tau - 2} \Xi_j \right]+  \\
		& \qquad \qquad + D \sum_{j = (m + 1)\tau - 1}^{t - 1} \left(1 + \frac{p}{64\tau}\right)^{t - 1- j} \eta_j^2 e_j + D \sum_{j = m \tau}^{(m + 1) \tau - 2} \left(1 + \frac{p}{64\tau}\right)^{t - (m +1)\tau + 1} \eta_j^2 e_j   \\
		& \qquad \qquad +  A \sum_{j = (m + 1)\tau  - 1}^{t - 1} \left(1 + \frac{p}{64\tau}\right)^{t - 1- j} \eta_j^2 + A \sum_{j = m\tau}^{(m + 1) \tau - 2} \left(1 + \frac{p}{64\tau}\right)^{t - (m +1)\tau + 1} \eta_j^2 
	\end{align*}
	Since $0 < p \leq 1$, it holds that $\frac{p}{64 \tau} \left( 1 + \frac{p}{2}\right) \leq \left(1 - \frac{p}{2}\right)\frac{p}{16 \tau}$ and therefore
	\begin{align*}
	\Xi_{t} & \leq \left(1 + \frac{p}{64\tau}\right)^{\tau}\left[\left( 1 - \frac{p}{2}\right) \Xi_{m\tau} \left(1 + \frac{p}{16 \tau}\right) +  \left(1 + \frac{p}{64\tau}\right) \frac{p}{64 \tau} \sum_{j = m\tau}^{(m + 1)\tau - 2} \Xi_j \right]+  \\
	& \qquad \qquad + D \sum_{j = (m + 1)\tau - 1}^{t - 1} \left(1 + \frac{p}{64\tau}\right)^{t - 1- j} \eta_j^2 e_j + D \sum_{j = m \tau}^{(m + 1) \tau - 2} \left(1 + \frac{p}{64\tau}\right)^{t - (m +1)\tau + 1} \eta_j^2 e_j   \\
	& \qquad \qquad +  A \sum_{j = (m + 1)\tau  - 1}^{t - 1} \left(1 + \frac{p}{64\tau}\right)^{t - 1- j} \eta_j^2 + A \sum_{j = m\tau}^{(m + 1) \tau - 2} \left(1 + \frac{p}{64\tau}\right)^{t - (m +1)\tau + 1} \eta_j^2 
	\end{align*}
	Applying the same way \eqref{eq:consensus_recursion2} to the rest of $\Xi_j$ and using that $\frac{p}{64\tau}\leq \frac{p}{16\tau}$ we get that 
	\begin{align*}
		\Xi_{t} & \leq \left( 1 - \frac{p}{2}\right) \Xi_{m\tau} \left(1 + \frac{p}{16\tau}\right)^{2\tau}+ D \sum_{j = m\tau }^{t - 1} \left(1 + \frac{p}{16\tau}\right)^{t - 1- j} \eta_j^2 e_j +  A \sum_{j = m\tau}^{t - 1} \left(1 + \frac{p}{16\tau}\right)^{t - 1- j} \eta_j^2
	\end{align*}
	
	Using that $ \left(1 + \frac{p}{16\tau}\right)^{2\tau} \leq \exp\left( \frac{p}{8} \right) \leq 1 + \frac{p}{4}$ for $p \leq 1$ and also  that $(1 + \frac{p}{16\tau})^{t - 1 - j} \leq \left(1 + \frac{p}{16\tau}\right)^{2\tau} \leq 1 + \frac{p}{4} \leq 2 $
	\begin{align*}
	\Xi_{t} & \leq \left( 1 - \frac{p}{4}\right) \Xi_{m\tau} + 2 D \sum_{j = m\tau}^{t - 1} \eta_j^2 e_j + 2 A \sum_{j = m\tau}^{t - 1} \eta_j^2,
	\end{align*}
	Unrolling $\Xi_{m\tau}$ recursively up to $0$ we get, 
	\begin{align*}
	\Xi_{t} & \leq 2D \sum_{j = 0}^{t - 1} \left( 1 - \frac{p}{4}\right)^{\lfloor (t  - j ) / \tau \rfloor} \eta_j^2 e_j + 2 A \sum_{j = 0}^{t - 1} \left( 1 - \frac{p}{4}\right)^{\lfloor (t - j ) / \tau \rfloor} \eta_j^2,
	\end{align*}
	For the first term estimating $\left( 1 - \frac{p}{4}\right)^{1/\tau} \leq  \exp(-\frac{p}{4\tau}) \leq 1- \frac{p}{8\tau} $ and that $\left( 1 - \frac{p}{8\tau} \right)^{\tau \lfloor ( t - j ) / \tau \rfloor} \leq \left( 1 - \frac{p}{8\tau} \right)^{ t - j }\left( 1 - \frac{p}{8\tau} \right)^{ -\tau }$. For the last term, $\left( 1 - \frac{p}{8\tau} \right)^{ -\tau } \leq \left(\frac{1}{1 - \frac{p}{8\tau}}\right)^\tau \leq ( 1 + \frac{p}{4\tau})^\tau$ because $\frac{p}{8\tau} \leq \frac{1}{2}$ and finally $\left( 1 + \frac{p}{4\tau}\right)^\tau \leq \exp(\frac{p}{4})  < 2$,
	\begin{align*}
	\Xi_{t} & \leq 4 D \sum_{j = 0}^{t - 1} \left( 1 - \frac{p}{8 \tau}\right)^{t  - j } \eta_j^2 e_j + 4 A \sum_{j = 0}^{t - 1} \left( 1 - \frac{p}{8 \tau }\right)^{ t - j } \eta_j^2,
	\end{align*}
	Now using that $\eta_t^2$ is $\frac{8\tau}{p}$-slow decreasing, i.e. $\eta_j^2 \leq \eta_t^2 \left(1 + \frac{p}{16\tau}\right)^{t - j}$ and using that $(1 - \frac{p}{8\tau})(1 + \frac{p}{16\tau}) \leq (1 - \frac{p}{16\tau})$ 
	\begin{align*}
	\Xi_{t} & \leq 4 D \eta_t^2 \sum_{j = 0}^{t - 1} \left( 1 - \frac{p}{16 \tau}\right)^{t  - j } e_j + 4 A \eta_t^2 \sum_{j = 0}^{t - 1} \left( 1 - \frac{p}{16 \tau }\right)^{ t - j } \leq 4 D \eta_t^2 \sum_{j = 0}^{t - 1} \left( 1 - \frac{p}{16 \tau}\right)^{t  - j } e_j + 64 A \frac{\tau}{p}  \eta_t^2
	\end{align*}
	Now averaging $\Xi_{t}$ with weights $w_t$ and using that $w_t$ is $\frac{16\tau}{p}$-slow increasing sequence, i.e. $w_t \leq w_j \left( 1 + \frac{p}{32\tau}\right)^{t - j}$, and also using that $\eta_t \leq \frac{1}{16}\sqrt{\frac{p b}{DB\tau}}$
	\begin{align*}
	B \sum_{t = 0}^T w_t \Xi_{t} & \leq 4 D B  \sum_{t = 0}^T \eta_t^2 \sum_{j = 0}^{t - 1} w_j \left( 1 - \frac{p}{32 \tau}\right)^{t  - j } e_j + 64 A B \frac{\tau}{p} \sum_{t = 0}^T w_t \eta_t^2  \\ &\leq \underbrace{\frac{p b}{64 \tau}  \sum_{t = 0}^T \sum_{j = 0}^{t - 1} w_j \left( 1 - \frac{p}{32 \tau}\right)^{t  - j } e_j}_{:= T_1} + 64 A B \frac{\tau}{p} \sum_{t = 0}^T w_t \eta_t^2 
	\end{align*}
	And finally,
	\begin{align*}
	T_1 &= \frac{p b}{64 \tau} \sum_{j = 0}^T w_j e_j\sum_{t = j + 1}^{T}  \left( 1 - \frac{p}{32 \tau}\right)^{t  - j } \leq \frac{p b}{64 \tau} \sum_{j = 0}^T w_j e_j\sum_{t = 0}^{\infty}  \left( 1 - \frac{p}{32 \tau}\right)^{t  - j } \leq \frac{b}{2} \sum_{t = 0}^T w_t e_t. \qedhere
	\end{align*}
\end{proof}

\section{Solving the Main Recursion \eqref{eq:rec3}}
\label{sec:mainrecursionlemmas}
\subsection{$a > 0$ (strongly convex case)}

\begin{lemma}\label{lem:rate_strongly_convex}
	If non-negative sequences $\{r_t\}_{t\geq 0}, \{e_t\}_{t \geq 0}$ satisfy \eqref{eq:rec3} for some constants $a, b, p~>~0, c, A, B, \tau~\geq~0$, then there exists a constant stepsize $\eta_t = \eta < \frac{1}{d}$ such that for weights $w_t = (1 - a \eta)^{-(t + 1)}$ and $W_T := \sum_{t= 0}^T w_t$ it holds:
	\begin{align*}
	\frac{1}{2 W_T} \sum_{t= 0}^T b e_t w_t + a r_{T+1} \leq \tilde{\cO}\left(r_0 d\exp\left[-\frac{a (T + 1)}{d}\right] + \frac{c }{aT} + \frac{B A}{a^2T^2} \frac{\tau}{p} \right),
	\end{align*}
	where $\tilde{\cO}$ hides polylogarithmic factors. %
\end{lemma}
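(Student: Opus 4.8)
The plan is to substitute the prescribed quantities directly into \eqref{eq:rec3}. First I would set the stepsize constant, $\eta_t \equiv \eta$, and use the geometric weights $w_t = (1-a\eta)^{-(t+1)}$ exactly because they are engineered to telescope: since $(1-a\eta)w_t = (1-a\eta)^{-t} = w_{t-1}$ (setting $w_{-1}:=(1-a\eta)^{0}=1$), each summand of the first right-hand term of \eqref{eq:rec3} equals $\tfrac1\eta\bigl(w_{t-1}r_t - w_t r_{t+1}\bigr)$, and summing over $t=0,\dots,T$ collapses to $\tfrac1\eta\bigl(r_0 - w_T r_{T+1}\bigr)$. Because the stepsize is constant and $\sum_{t=0}^T w_t = W_T$, the two remaining terms simplify to $\tfrac{c}{W_T}\sum_t w_t\eta = c\eta$ and $\tfrac{64BA}{W_T}\tfrac{\tau}{p}\sum_t w_t\eta^2 = \tfrac{64BA\tau}{p}\eta^2$.

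Next I would evaluate the normalising constant as a geometric series, $W_T = \sum_{t=0}^T (1-a\eta)^{-(t+1)} = \frac{(1-a\eta)^{-(T+1)}-1}{a\eta}$ (well defined since $\eta<\tfrac1d$ forces $a\eta<1$, using $a\le d$ in our applications). This gives $\frac{w_T}{W_T} = \frac{a\eta}{1-(1-a\eta)^{T+1}} \ge a\eta$, so the negative telescoped contribution satisfies $-\frac{w_T}{\eta W_T}r_{T+1}\le -a\,r_{T+1}$ and can be moved to the left-hand side to produce the claimed $+a\,r_{T+1}$. Collecting the pieces and using $\frac{r_0}{\eta W_T}=\frac{a r_0}{(1-a\eta)^{-(T+1)}-1}$ yields
\begin{align*}
\frac{b}{2W_T}\sum_{t=0}^T w_t e_t + a\,r_{T+1} \;\le\; \frac{a r_0}{(1-a\eta)^{-(T+1)}-1} + c\eta + \frac{64BA\tau}{p}\,\eta^2\,.
\end{align*}

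To control the leading term I would lower-bound the denominator via $(1-a\eta)^{-(T+1)}-1\ge e^{a\eta(T+1)}-1$ and then use the elementary estimate $\frac{1}{1-e^{-s}}\le\max\{2,\tfrac2s\}$ with $s=a\eta(T+1)$, together with $\eta^{-1}\le d$, to obtain a clean bound of the form $\frac{a r_0}{(1-a\eta)^{-(T+1)}-1}\le 2 d\, r_0\, e^{-a\eta(T+1)}$. The remaining and most delicate step is the \emph{stepsize tuning}: I must pick $\eta\in(0,\tfrac1d]$ to balance this exponentially decaying term against $c\eta$ and $\tfrac{64BA\tau}{p}\eta^2$. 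The right choice is $\eta=\min\bigl\{\tfrac1d,\ \tfrac{\kappa}{a(T+1)}\bigr\}$ with a logarithmic factor $\kappa=\Theta\bigl(\ln\max\{e,\,a^2 r_0 (T+1)^2/c\}\bigr)$ chosen so that the exponential is forced down to the order of the polynomial terms; in the regime where the second argument is active one gets $c\eta=\tilde\cO\bigl(\tfrac{c}{aT}\bigr)$ and $\tfrac{64BA\tau}{p}\eta^2=\tilde\cO\bigl(\tfrac{BA}{a^2T^2}\tfrac{\tau}{p}\bigr)$, while for small $T$ the choice $\eta=\tfrac1d$ makes the exponential term $\tilde\cO\bigl(d\,r_0\,e^{-a(T+1)/d}\bigr)$ dominate. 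The main obstacle is this case analysis—the standard but technically fiddly tuning of a constant stepsize for a strongly convex recursion (compare the $a=0$ analogue in \cref{lem:tuning_stepsize})—where one must check that the logarithmic factors and the constraint $\eta\le\tfrac1d$ mesh so that all three regimes are simultaneously absorbed into the stated $\tilde\cO$ bound.
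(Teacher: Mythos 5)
Your proposal follows essentially the same route as the paper: telescoping with the geometric weights $w_t=(1-a\eta)^{-(t+1)}$, bounding $W_T$ via $w_T \le W_T \le w_T/(a\eta)$ (you compute the geometric sum exactly, which amounts to the same thing), and then the standard two-case logarithmic stepsize tuning with $\eta=\min\bigl\{\tfrac1d,\tfrac{\kappa}{a(T+1)}\bigr\}$. The only blemish is the intermediate clause invoking ``$\eta^{-1}\le d$'' (the constraint $\eta\le\tfrac1d$ gives $\eta^{-1}\ge d$), but this is immaterial since your final case analysis disposes of the exponential term exactly as the paper's proof does.
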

\begin{proof}
	Starting from \eqref{eq:rec3} and using that $\eta_t = \eta$ and that $\frac{w_t (1 - a \eta)}{\eta} = \frac{w_{t - 1}}{\eta}$ we obtain a telescoping sum,
	\begin{align*}
	\frac{1}{2 W_T}\sum_{t = 0}^T b w_t e_t \leq \frac{1}{W_T\eta} \left((1 - a \eta)w_0 r_0 - w_T  r_{T + 1} \right)  + c \eta + 64B A \frac{\tau}{p}\eta^2\,,
	\end{align*}
	And hence, 
	\begin{align*}
	\frac{1}{2 W_T}\sum_{t = 0}^T b w_t e_t + \frac{w_T  r_{T + 1}}{W_T\eta} \leq \frac{ r_0 }{W_T\eta} + c \eta + 64B A \frac{\tau}{p}\eta^2\,,
	\end{align*}
	Using that $W_T \leq \frac{w_T}{a \eta} $ and $W_T \geq w_T = (1 - a \gamma)^{-(T + 1)}$ we can simplify
	\begin{align*}
	\frac{1}{2 W_T}\sum_{t = 0}^T b w_t e_t + a  r_{T + 1}\leq (1 - a\eta)^{T + 1} \frac{r_0 }{\eta} + c \eta + 64B A \frac{\tau}{p}\eta^2 \leq \frac{r_0}{\eta} \exp\left[-a\eta (T + 1)\right] + c \eta + 64B A\frac{\tau}{p} \eta^2\,,
	\end{align*}
	Now lemma follows by tuning $\eta$ the same way as in \cite{Stich19sgd}.
	\begin{itemize}
		\item  If $\frac{1}{d} \geq \frac{\ln(\max\{2, a^2 r_0 T^2/c\})    }{aT}$  then we choose $\eta = \frac{\ln(\max\{2, a^2 r_0 T^2/c\})    }{aT}$ and get that 
		\begin{align*}
		\tilde{\cO}\left(a r_0 T\exp\left[-\ln(\max\{2, a^2 r_0 T^2/c\})  \right] \right) + \tilde{\cO}\left(\frac{c }{aT} \right) + \tilde{\cO}\left(\frac{B A}{a^2 T^2} \frac{\tau}{p} \right) = \tilde{\cO}\left(\frac{c }{aT} \right) + \tilde{\cO}\left(\frac{B A}{a^2 T^2} \frac{\tau}{p}  \right) \,,
		\end{align*}
		\item Otherwise $ \frac{1}{d} \leq \frac{\ln(\max\{2, a^2 r_0 T^2/c\})    }{aT}$ we pick $\eta = \frac{1}{d}$ and get that 
		\begin{align*}		
		&\tilde \cO\left(r_0 d\exp\left[-\frac{a (T + 1)}{d}\right] + \frac{c }{d} + \frac{B A}{d^2} \frac{\tau}{p} \right) \leq \tilde{\cO}\left(r_0 d\exp\left[-\frac{a (T + 1)}{d}\right] + \frac{c }{aT} + \frac{B A}{a^2T^2} \frac{\tau}{p}\right)\,. \qedhere
		\end{align*}
	\end{itemize}
\end{proof}

\subsection{$a = 0$ (weakly convex and non-convex cases)}
Now we assume that in Assumption~\ref{a:strong} $\mu = 0$, which means that $a = 0$ in \eqref{eq:rec3}.
\begin{lemma}\label{lem:rate_weakly_convex}
	If non-negative sequences $\{r_t\}_{t\geq 0}, \{e_t\}_{t \geq 0}$ satisfy \eqref{eq:rec3} with $a = 0, b~>~0, c, A, B~\geq~0$, then there exists a constant stepsize $\eta_t = \eta < \frac{1}{d}$ such that for weights $\{w_t = 1\}_{t \geq 0}$ it holds that:
	\begin{align*}
	\frac{1}{(T + 1)} \sum_{t= 0}^T e_t \leq \cO \left(2  \left(\frac{c r_0}{T + 1}\right)^{\frac{1}{2}} +  2 \left(\frac{B A \tau}{p}\right)^{1/3}\left(\frac{r_0}{T + 1}\right)^{\frac{2}{3}} + \frac{d r_0}{T + 1} \right).
	\end{align*}
\end{lemma}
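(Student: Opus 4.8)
The plan is to specialize the master recursion~\eqref{eq:rec3} to the present setting and then optimize the resulting one-parameter bound over the constant stepsize. First I would substitute $a = 0$, $w_t \equiv 1$ (so that $W_T = T+1$), and $\eta_t \equiv \eta$ into~\eqref{eq:rec3}. With these choices the first term on the right-hand side becomes $\frac{1}{(T+1)\eta}\sum_{t=0}^T (r_t - r_{t+1})$, a telescoping sum equal to $\frac{r_0 - r_{T+1}}{(T+1)\eta}$, which is at most $\frac{r_0}{(T+1)\eta}$ since $r_{T+1}\geq 0$. The remaining two terms collapse to $c\eta$ and $64 B A \frac{\tau}{p}\eta^2$. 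Dividing by $\frac{b}{2}$ and absorbing the fixed constant $b>0$ into the $\cO$-notation yields
\begin{align*}
\frac{1}{T+1}\sum_{t=0}^T e_t \leq \cO\!\left(\frac{r_0}{(T+1)\eta} + c\eta + B A \frac{\tau}{p}\eta^2\right)=:\Psi(\eta),
\end{align*}
valid for every admissible $\eta \leq \tfrac1d$.

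The remaining work is to choose $\eta\in(0,\tfrac1d]$ so as to make $\Psi(\eta)$ as small as possible. The three summands scale as $\eta^{-1}$, $\eta$, and $\eta^2$, so I would balance the decreasing $\eta^{-1}$ term against each growing term in turn. Balancing $\frac{r_0}{(T+1)\eta}$ with $c\eta$ suggests $\eta\sim\bigl(\frac{r_0}{c(T+1)}\bigr)^{1/2}$ and makes both equal to $\bigl(\frac{c r_0}{T+1}\bigr)^{1/2}$; balancing $\frac{r_0}{(T+1)\eta}$ with $BA\frac{\tau}{p}\eta^2$ suggests $\eta\sim\bigl(\frac{r_0 p}{BA\tau(T+1)}\bigr)^{1/3}$ and makes both equal to $\bigl(\frac{BA\tau}{p}\bigr)^{1/3}\bigl(\frac{r_0}{T+1}\bigr)^{2/3}$; finally, if both candidate values exceed the cap $\tfrac1d$, the constraint is active, and taking $\eta=\tfrac1d$ leaves the $\eta^{-1}$ term contributing $\frac{d r_0}{T+1}$ while the other two are dominated. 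The clean way to combine these is to set $\eta:=\min\bigl\{\tfrac1d,\,(\tfrac{r_0}{c(T+1)})^{1/2},\,(\tfrac{r_0 p}{BA\tau(T+1)})^{1/3}\bigr\}$ and bound $\Psi$ at this value by the sum of the three contributions; this is exactly the stepsize-selection argument, which can be invoked verbatim from~\cite{Stich19sgd} (the same device referenced for the $a>0$ case in Lemma~\ref{lem:rate_strongly_convex}).

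I expect the main obstacle to be the careful execution of this three-way tuning under the constraint $\eta\leq\tfrac1d$, rather than any conceptual difficulty. Because the unconstrained optimum may be infeasible, one must argue that substituting $\eta=\min\{\cdots\}$ produces the three error terms \emph{additively} and not multiplicatively. The standard remedy is the elementary lemma stating that for a bound of the form $\frac{r_0}{(T+1)\eta}+c\eta+c'\eta^2$ with $\eta\leq\tfrac1d$, the choice above gives $\cO\bigl((\tfrac{c r_0}{T+1})^{1/2}+(c')^{1/3}(\tfrac{r_0}{T+1})^{2/3}+\tfrac{d r_0}{T+1}\bigr)$; substituting $c'=64 B A\tau/p$ recovers the stated expression. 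The only genuine bookkeeping is the case split over which of the three values attains the minimum, after which each resulting term is read off directly.
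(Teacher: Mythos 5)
Your proposal is correct and follows essentially the same route as the paper: specialize \eqref{eq:rec3} with $a=0$, $w_t\equiv 1$, constant $\eta$, telescope to get $\frac{r_0}{(T+1)\eta}+c\eta+64BA\frac{\tau}{p}\eta^2$, then apply the three-way stepsize tuning (the paper isolates this as Lemma~\ref{lem:tuning_stepsize}, with exactly the $\min$ of three candidate stepsizes and the case split you describe). The only cosmetic difference is that the paper keeps the factor $b$ explicit on the left-hand side rather than absorbing it into the $\cO$-notation.
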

\begin{proof}
	With $a = 0$, constant stepsizes $\eta_t = \eta$ and weights $\{w_t = 1\}_{t \geq 0}$ \eqref{eq:rec3} is equivalent to 
	\begin{align*}
	\frac{1}{2(T + 1) }\sum_{t = 0}^T e_t \leq \frac{1}{(T + 1)\eta}\sum_{t=0}^T \left( r_t - r_{t + 1} \right)  + c \eta + 64\frac{B A \tau}{p} \eta^2 \leq \frac{r_0}{(T + 1)\eta} + c \eta + 64 \frac{B A \tau}{p} \eta^2.
	\end{align*}
	To conclude the proof we tune the stepsize using Lemma~\ref{lem:tuning_stepsize}.
\end{proof}
\begin{lemma}[Tuning the stepsize]\label{lem:tuning_stepsize}
	For any parameters $r_0 \geq 0, b \geq 0, e \geq 0, d \geq 0$ there exists constant stepsize $\eta \leq \frac{1}{d}$ such that 
	\begin{align*}
	\Psi_T :=  \frac{r_0}{\eta(T + 1)} + b \eta  + e \eta^2 \leq 2  \left(\frac{b r_0}{T + 1}\right)^{\frac{1}{2}} +  2 e^{1/3}\left(\frac{r_0}{T + 1}\right)^{\frac{2}{3}} + \frac{d r_0}{T + 1}
	\end{align*}
\end{lemma}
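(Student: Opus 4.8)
The plan is to select the stepsize by balancing the decreasing term $\frac{r_0}{\eta(T+1)}$ against each of the two increasing terms $b\eta$ and $e\eta^2$ individually, and then cap the result at the feasibility bound $\frac1d$. Concretely, I would set
\[
\eta = \min\left\{\left(\frac{r_0}{b(T+1)}\right)^{1/2},\ \left(\frac{r_0}{e(T+1)}\right)^{1/3},\ \frac1d\right\}\,,
\]
with the convention that any candidate whose denominator vanishes is treated as $+\infty$ (and hence dropped from the minimum). By construction $\eta\le\frac1d$, as required. The first candidate is the exact minimizer of $\frac{r_0}{\eta(T+1)}+b\eta$, and the second minimizes $\frac{r_0}{\eta(T+1)}+e\eta^2$ up to a constant; taking the smaller of the two ensures that neither increasing term can overshoot its target summand.

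The argument then proceeds by a three-way case split according to which quantity attains the minimum. If $\eta=\frac1d$, then $\frac1d$ is no larger than the other two candidates, so the first term equals $\frac{r_0 d}{T+1}$, while $b\eta=\frac bd\le(\frac{br_0}{T+1})^{1/2}$ and $e\eta^2=\frac{e}{d^2}\le e^{1/3}(\frac{r_0}{T+1})^{2/3}$, using $\frac1d\le(\frac{r_0}{b(T+1)})^{1/2}$ and $\frac1d\le(\frac{r_0}{e(T+1)})^{1/3}$ respectively. If $\eta=(\frac{r_0}{b(T+1)})^{1/2}$, a direct substitution shows that the first two terms each equal $(\frac{br_0}{T+1})^{1/2}$, while $e\eta^2\le e^{1/3}(\frac{r_0}{T+1})^{2/3}$ because $\eta$ is at most the second candidate. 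Finally, if $\eta=(\frac{r_0}{e(T+1)})^{1/3}$, the first and third terms each equal $e^{1/3}(\frac{r_0}{T+1})^{2/3}$, while $b\eta\le(\frac{br_0}{T+1})^{1/2}$ because $\eta$ is at most the first candidate. In every case the three contributions are dominated by the three summands on the right-hand side, the factors of $2$ absorbing the doubled term, which yields the claim.

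I do not expect a genuine obstacle here, as this is a routine balancing estimate of the same flavor as the tuning lemmas in \cite{Stich19sgd,StichK19delays}. The only points needing care are keeping the cap $\eta\le\frac1d$ consistent with the case analysis---handled automatically by folding $\frac1d$ into the minimum---and the degenerate regimes $b=0$, $e=0$, or $r_0=0$, all of which are covered by the $+\infty$ convention above: a vanishing increasing coefficient removes the corresponding candidate and the corresponding term disappears from $\Psi_T$, while $r_0=0$ forces every term to zero.
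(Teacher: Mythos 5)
Your proposal is correct and follows essentially the same route as the paper: the same choice $\eta=\min\bigl\{(\tfrac{r_0}{b(T+1)})^{1/2},(\tfrac{r_0}{e(T+1)})^{1/3},\tfrac1d\bigr\}$ and the same three-way case analysis, with each increasing term bounded by the corresponding right-hand-side summand via the minimality of $\eta$. Your explicit treatment of the degenerate regimes ($b=0$, $e=0$, $r_0=0$) via the $+\infty$ convention is a minor but welcome addition that the paper leaves implicit.
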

\begin{proof}
	Choosing $\eta = \min\left\{\left(\frac{r_0}{b(T + 1)}\right)^{\frac{1}{2}}, \left(\frac{r_0}{e(T + 1)}\right)^{\frac{1}{3}} , \frac{1}{d} \right\} \leq \frac{1}{d}$ we have three cases
	\begin{itemize}
		\item $\eta  = \frac{1}{d}$ and is smaller than both $\left(\frac{r_0}{b(T + 1)}\right)^{\frac{1}{2}}$ and $\left(\frac{r_0}{e(T + 1)}\right)^{\frac{1}{3}} $, then 
		\begin{align*}
		\Psi_T &\leq  \frac{d r_0}{T + 1} + \frac{b}{d} + \frac{e}{d^2} \leq \left(\frac{b r_0}{T + 1}\right)^{\frac{1}{2}} + \frac{d r_0}{T + 1} + e^{1/3}\left(\frac{r_0}{T + 1}\right)^{\frac{2}{3}}
		\end{align*}
		\item $\eta = \left(\frac{r_0}{b(T + 1)}\right)^{\frac{1}{2}} < \left(\frac{r_0}{e(T + 1)}\right)^{\frac{1}{3}} $, then
		\begin{align*}
		\Psi_T &\leq  2 \left(\frac{r_0b}{T + 1 }\right)^{\frac{1}{2}}   + e  \left(\frac{r_0}{b(T + 1)}\right) \leq   2 \left(\frac{r_0b}{T + 1}\right)^{\frac{1}{2}} + e^{\frac{1}{3}} \left(\frac{r_0}{(T + 1)}\right)^{\frac{2}{3}},
		\end{align*}
		\item The last case, $\eta = \left(\frac{r_0}{e(T + 1)}\right)^{\frac{1}{3}} < \left(\frac{r_0}{b(T + 1)}\right)^{\frac{1}{2}} $
		\begin{align*}
		\Psi_T &\leq  2 e^{\frac{1}{3}} \left(\frac{r_0}{(T + 1)}\right)^{\frac{2}{3}} + b \left(\frac{r_0}{e(T + 1)}\right)^{\frac{1}{3}} \leq 2 e^{\frac{1}{3}} \left(\frac{r_0}{(T + 1)}\right)^{\frac{2}{3}} + \left(\frac{b r_0}{T + 1}\right)^{\frac{1}{2}} \,. \qedhere
		\end{align*}
	\end{itemize}
\end{proof}

\section{Lower Bound}
\begin{proof}[Proof of Theorem~\ref{thm:lower_bound}]
	
	We consider minimization problem of the form~\eqref{eq:f} with $f_i(x) = \frac{1}{2} (x - y_i)^2$, $x, y_i \in \R$ which has the solution $x^\star = \frac{1}{n} \sum_{i=1}^n y_i$, $L = \mu = 1$. We denote $\xx = (x_1, \dots, x_n)^\top$ and $\nabla f(\xx) = \left(\nabla f_1(x_1), \dots, \nabla f_n(x_n) \right)^\top$. 
	
	We assume that the starting point $\xx^{(0)}$ is an eigenvector of $W$, corresponding to the second largest eigenvalue, i.e. $W \xx^{(0)} = \lambda_2 \xx^{(0)}$ and we set $y_i$ such that $\yy = \1 + \xx^{(0)}$. With this choice of $\yy$, $\bar\zeta^2 = \norm{\xx^{(0)}}_2^2$. It will be also useful to note that the average $\bar\xx^{(0)} = \0$ since it is orthogonal to $\1$, the eigenvector of $W$ corresponding to the largest eigenvalue. We use the notation $\bar \zz := \frac{1}{n}\1\1^\top \zz$.
	
	We start the proof by decomposing the error $\frac{1}{n}\norm{\xx^{(t)} - \bar\yy}^2_2$ on consensus and optimization terms 
	\begin{align*}
	\frac{1}{n}\norm{\xx^{(t)} - \bar\yy}^2_2 = \frac{1}{n}\norm{\xx^{(t)}  - \bar\xx^{(t)} + \bar\xx^{(t)} - \bar\yy}^2_2 = \frac{1}{n}\norm{\xx^{(t)}  - \bar\xx^{(t)}}_2^2 + \frac{1}{n}\norm{\bar\xx^{(t)} - \bar\yy}_2^2.
	\end{align*}
	Using that for our chosen functions $\nabla f(\xx) = \xx - \yy$, we can estimate the \textbf{optimization term} as 
	\begin{align*}
	\norm{\bar\xx^{(t)} - \bar\yy}_2^2 = \norm{(1 - \eta) \bar\xx^{(t - 1 )} + \eta \bar\yy - \bar\yy}_2^2 = (1 - \eta)^2 \norm{\bar\xx^{(t - 1)} - \bar\yy}_2^2 = (1 - \eta)^{2 t} \norm{\bar\xx^{(0)} - \bar\yy}_2^2 =  (1 - \eta)^{2 t} n. 
	\end{align*}
	
	For the \textbf{consensus term},
	\begin{align*}
	\norm{\xx^{(t)}  - \bar\xx^{(t)}}_2^2 &= \norm{\left(W - \frac{\1\1^\top }{n} \right) \left(\xx^{(t + \frac{1}{2})} - \bar \xx^{(t + \frac{1}{2})} \right)  }_2^2 = \norm{\left(W - \frac{\1\1^\top }{n} \right) \left((1 - \eta)\left(\xx^{(t)} - \bar \xx^{(t)} \right)  + \eta(\yy - \bar \yy)\right) }_2^2 =\\
	&= \norm{\left(W - \frac{\1\1^\top }{n} \right)^t(1 - \eta)^t \xx^{(0)} + \eta \sum_{\tau=0}^{t - 1} (1 - \eta)^\tau \left(W - \frac{\1\1^\top }{n} \right)^{\tau + 1} (\yy - \bar \yy) }_2^2 = \\
	&= \norm{\lambda_2^t(1 - \eta)^t \xx^{(0)} + \eta \sum_{\tau=0}^{t - 1} (1 - \eta)^\tau \lambda_2^{\tau + 1} \xx^{(0)} }_2^2 \\ & = \left( \lambda_2^t(1 - \eta)^t + \eta \sum_{\tau=0}^{t - 1} (1 - \eta)^\tau \lambda_2^{\tau + 1} \right)^2 \norm{\xx^{(0)}}_2^2 \\
	& \geq \left( \lambda_2^{2t}(1 - \eta)^{2t} + \eta^2 \left( \sum_{\tau=0}^{t - 1} (1 - \eta)^\tau \lambda_2^{\tau + 1}\right)^2 \right) n \bar\zeta^2
	\end{align*}

	In order to guarantee error less than $\epsilon$, it is necessary to have simultaneously both optimization and consensus terms less than $\epsilon$, therefore it is required that
	\begin{align}
	(1-\eta)^{2t} &\leq \epsilon\label{eq:first}\\
	(1-\eta)^{2t} \lambda_2^{2t}  &\leq \frac{\epsilon}{\bar\zeta^2} \label{eq:second} \\
	\eta \left( \sum_{\tau=0}^{t - 1} (1 - \eta)^\tau \lambda_2^\tau\right)  =  \eta \frac{1-  (1-\eta)^{t} \lambda_2^{t} }{1-(1-\eta)\lambda_2} & \leq \sqrt{\frac{\epsilon }{\bar\zeta^2\lambda_2^2 }} \label{eq:third}
	\end{align}
	Equations \eqref{eq:second}, \eqref{eq:third} imply
	\begin{align*}
	\eta \leq \sqrt{\frac{\epsilon }{\bar\zeta^2 \lambda_2^2}} \frac{1-(1-\eta)\lambda_2}{1 - \sqrt{\nicefrac{\epsilon }{\bar\zeta^2}}} \leq \sqrt{\frac{\epsilon }{\bar\zeta^2 \lambda_2^2}} \frac{1 - \lambda_2 + \eta}{1-\sqrt{\nicefrac{\epsilon }{\bar\zeta^2 }}}
	\end{align*}
	Note that $\lambda_2 = \sqrt{1 - p}$, where $p$ is from Assumption \ref{a:avg_distrib}. Using that $\sqrt{1 - p} \geq 1 - p$ for $p \in [0, 1]$, 
	\begin{align*}
	\eta \leq \sqrt{\frac{\epsilon }{\bar\zeta^2 (1 - p)}} \frac{1 - \sqrt{1 - p} + \eta}{1-\sqrt{\nicefrac{\epsilon }{\bar\zeta^2}}} \leq \sqrt{\frac{\epsilon }{\bar\zeta^2 (1 - p)}} \frac{p + \eta}{1-\sqrt{\nicefrac{\epsilon }{\bar\zeta^2}}} 
	\end{align*}
	And therefore using that $\sqrt{1 - p} \leq 1$ and for $\epsilon \leq \frac{\bar\zeta^2 (1 - p)}{16}$,
	\begin{align*}
	\eta \leq  \frac{ \sqrt{\nicefrac{\epsilon }{[\bar\zeta^2(1 - p)]}} p }{1 - (\frac{\sqrt{1 - p} + 1}{\sqrt{1 - p}})\sqrt{\nicefrac{\epsilon }{\bar\zeta^2}}} \leq \frac{ \sqrt{\nicefrac{\epsilon }{[\bar\zeta^2(1 - p)]}} p }{1 - 2 \sqrt{\nicefrac{\epsilon }{[\bar\zeta^2(1 - p)]}}} \leq 2 \sqrt{\nicefrac{\epsilon }{[\bar\zeta^2(1 - p)]}} p
	\end{align*}
	
	With this upper bound on $\eta$, the inequality \eqref{eq:first} gives a lower bound on $t$:
	\begin{align}
	t \geq \frac{\log \frac{n}{\epsilon}}{ -2 \log(1-\eta)} \geq \frac{\log \frac{1}{\epsilon}}{2 \eta} \geq \frac{\bar\zeta \sqrt{1 - p} \log \frac{1}{\epsilon}}{4 \sqrt{\epsilon}p},
	\end{align}
	here we used that $\log(1-\eta) \geq -\eta$ for $\eta \leq \frac{4}{5}$. %
\end{proof}

\section{Additional Experiments to Verify the $\cO\bigl(\frac{1}{T^2}\bigr)$ Term}
\label{sec:additionalexp}

In Theorem~\ref{thm:summary} we proved an upper bound and in Theorem~\ref{thm:lower_bound} we proved a lower bound, that indicates that in the noiseless ($\bar\sigma^2 = 0$) strongly convex case the convergence is not linear when $\bar \zeta^2 > 0$. In this section we verify numerically that this rate indeed reflects tightly the convergence behavior of decentralized SGD.

We consider the same setting as in Section~\ref{sec:experiments} before, with $\bar\sigma^2 = 0$, $\bar\zeta^2 =10$, $n=25$, and $d=10$.

For both ring and 2-$d$ torus (grid), we vary the target accuracy ($\epsilon$) and tune the stepsize to find the smallest number of iterations required ($T_{\epsilon}$) to achieve this target accuracy. In Figure \ref{fig:T2exp} we depict the results, where x-axis is $\frac{1}{\sqrt{\epsilon}}$ and y-axis is $T_{\epsilon}$. Based on the Theorem \ref{thm:summary} for strongly convex case, ideally each of them should be a line, as we observe in the plots. Moreover, the ratio of the slopes of these lines is $30.2/2.3 = 13.13$ which matches the ratio of the spectral gap of these graphs ($p_{\rm grid}/p_{\rm ring} = 0.276/0.021 = 13.142$), as it is shown in Theorems \ref{thm:summary} and \ref{thm:lower_bound}.

\begin{figure*}[t]
	\centering
	\begin{minipage}{\textwidth}
		\centering
		\hfill
		\includegraphics[width=0.9\linewidth]{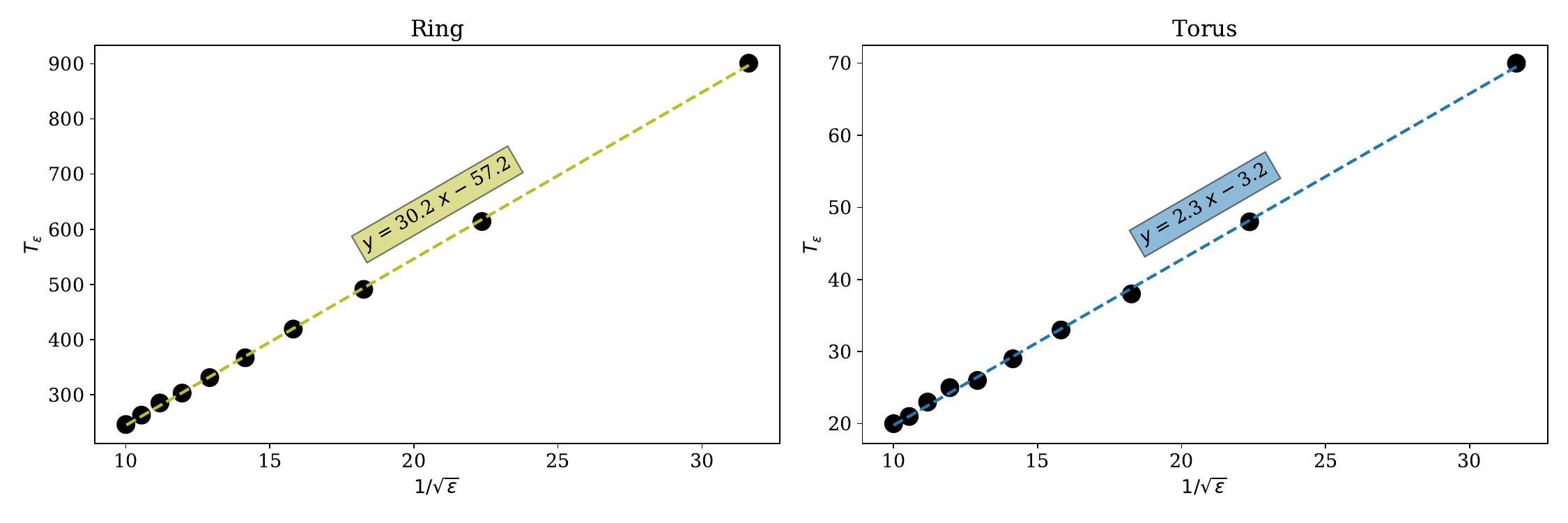}
		\hfill \null
		\caption{%
		Verifying the $\cO\bigl(\frac{\bar \zeta^2}{p^2 T^2}\bigr)$ convergence for the strongly convex noiseless ($\hat \sigma^2=0$) case. Number of iterations to converge to target accuracy $\epsilon$ on ring (left) and 2-$d$ torus (right).
		}%
		\label{fig:T2exp}
	\end{minipage}%
\end{figure*}

\end{document}